\newcommand{\mE}{\ensuremath{\mathbb{E}}}
\newcommand{\mP}{\ensuremath{\mathbb{P}}}
\newcommand\gC{{\mathcal{C}}}
\newcommand\gE{{\mathcal{E}}}
\newcommand\gL{{\mathcal{L}}}
\newcommand\gM{{\mathcal{M}}}
\newcommand\gX{{\mathcal{X}}}
\newcommand\gY{{\mathcal{Y}}}
\theoremstyle{plain}
\newtheorem{theorem}{Theorem}
\newtheorem{corollary}{Corollary}
\newtheorem{lemma}{Lemma}
\newtheorem{assumption}{Assumption}
\theoremstyle{definition}
\newtheorem{definition}{Definition}
\theoremstyle{plain}
\theoremstyle{remark}
\newtheorem{remark}{Remark}
\Crefname{equation}{}{}
\crefname{equation}{}{}
\def\BibTeX{{\rm B\kern-.05em{\sc i\kern-.025em b}\kern-.08em
    T\kern-.1667em\lower.7ex\hbox{E}\kern-.125emX}}
\newcommand{\ie}{\textit{i.e., }}
\newcommand{\eg}{\textit{e.g., }}
\newcommand{\R}{\mathbb R}
\newcommand{\exit}{\Gamma}
\newcommand{\minf}{\Phi}
\newcommand{\entrance}{\Psi}
\newcommand{\maxf}{\Theta}
\newcommand{\Hess}[1]{\operatorname{Hess}{#1}}
\newcommand{\grad}[1]{\nabla{#1}}
\newcommand{\tr}{\operatorname{tr}}
\newcommand{\FwdInv}{F} 
\newcommand{\FwdInvExit}{G}
\newcommand{\FwdConv}{Q}
\newcommand{\FwdConvExit}{N}
\newcommand{\lemmaF}{U} 
\newcommand{\lemmaT}{H} 
\newcommand{\ave}{\rho}
\newcommand{\var}{\zeta}
\newcommand{\pr}{\mathbb{P}}
\newcommand{\D}{D}
\newcommand{\rev}[1]{#1}            
\begin{document}

\title{Generalizable Physics-Informed Learning for Stochastic Safety-Critical Systems}
\author{Zhuoyuan Wang, Albert Chern, Yorie Nakahira
\thanks{Zhuoyuan Wang and Yorie Nakahira are with the Department of Electrical and Computer Engineering, Carnegie Mellon University, PA 15213 USA  (e-mail: zhuoyuaw, ynakahir@andrew.cmu.edu).}
\thanks{Albert Chern is with the Department of Computer Science and Engineering, University of California San Diego, CA 92093 USA (e-mail: alchern@ucsd.edu).}}

\maketitle



\setcounter{page}{1}

\begin{abstract}

Accurate estimation of long-term risk is essential for the design and analysis of stochastic dynamical systems. Existing risk quantification methods typically rely on extensive datasets involving risk events observed over extended time horizons, which can be prohibitively expensive to acquire. Motivated by this gap, we propose an efficient method for learning long-term risk probabilities using short-term samples with limited occurrence of risk events. Specifically, we establish that four distinct classes of long-term risk probabilities are characterized by specific partial differential equations (PDEs). Using this characterization, we introduce a physics-informed learning framework that combines empirical data with physics information to infer risk probabilities. We then analyze the theoretical properties of this framework in terms of generalization and convergence. Through numerical experiments, we demonstrate that our framework not only generalizes effectively beyond the sampled states and time horizons but also offers additional benefits such as improved sample efficiency, rapid online inference capabilities under changing system dynamics, and stable computation of probability gradients. These results highlight how embedding PDE constraints—which contain explicit gradient terms and inform how risk probabilities depend on state, time horizon, and system parameters—improves interpolation and generalization between/beyond the available data.

\end{abstract}

\section{Introduction}

\rev{

Accurate estimation of long-term risk is critical for the design and analysis of stochastic dynamical systems. 
Miscalculations of risk can lead to safety and performance issues over extended periods, and short-term risk is not sufficient to capture when a safe state inevitably reaches unsafe regions in the future. 
Existing risk quantification methods often require sufficient data on risk events and long-term trajectories. However, obtaining such data (\eg through real-world operations) can be prohibitively costly. Moreover, when the risk of interest pertains to rare events, the volume of data needed by purely data-driven approaches increases disproportionately, making reliable estimation a significant challenge~\cite{mcneil2015quantitative}.

Motivated by these challenges, we propose Physics-Informed Probability Estimator (PIPE), a physics-informed learning framework to estimate risk probabilities in an efficient and generalizable manner. Fig.~\ref{fig:overview diagram} shows the overview diagram of the proposed PIPE framework. 
The framework consists of two key parts, namely the physics model part and the data-driven learning part. For the former part, we derive that the evolution of risk probabilities given stochastic system dynamics is governed by deterministic partial differential equations (PDEs) (Theorem~\ref{thm:InvariantProbability_MainTheorem1}-\ref{thm:ConvergenceProbability_MainTheorem4}).
For the latter part, we integrate the physics model knowledge from the PDEs with sampled data to train a physics-informed neural network.
We theoretically show that the proposed framework can generate risk probability estimates with bounded error, even when the data used for training only covers partial state space with short time horizons (Theorem~\ref{thm:full_pde_constraint}). We also show convergences of the proposed method to ground truth risk probabilities under mild assumptions (Theorem~\ref{thm:pinn_convergence}). In addition, we experimentally show that the proposed framework is able to accurately predict risk probabilities and their gradients on the \textit{entire} state-temporal domain, with only \textit{sparse} data sample in the \textit{sub-region} (Fig.~\ref{fig:generalization} and Fig.~\ref{fig:gradient}), enabling huge sample complexity and online computation time reduction (Fig.~\ref{fig:PINN MC log plot} and Table~\ref{tab:method_timing}). Furthermore, we show that the proposed framework can serve as a surrogate model to learn risk probabilities of varying parameterized system dynamics, with data only on a few dynamics in the distribution (Fig.~\ref{fig:varying parameter}).
}

\begin{figure*}
    \centering
    \includegraphics[width=0.99\textwidth]{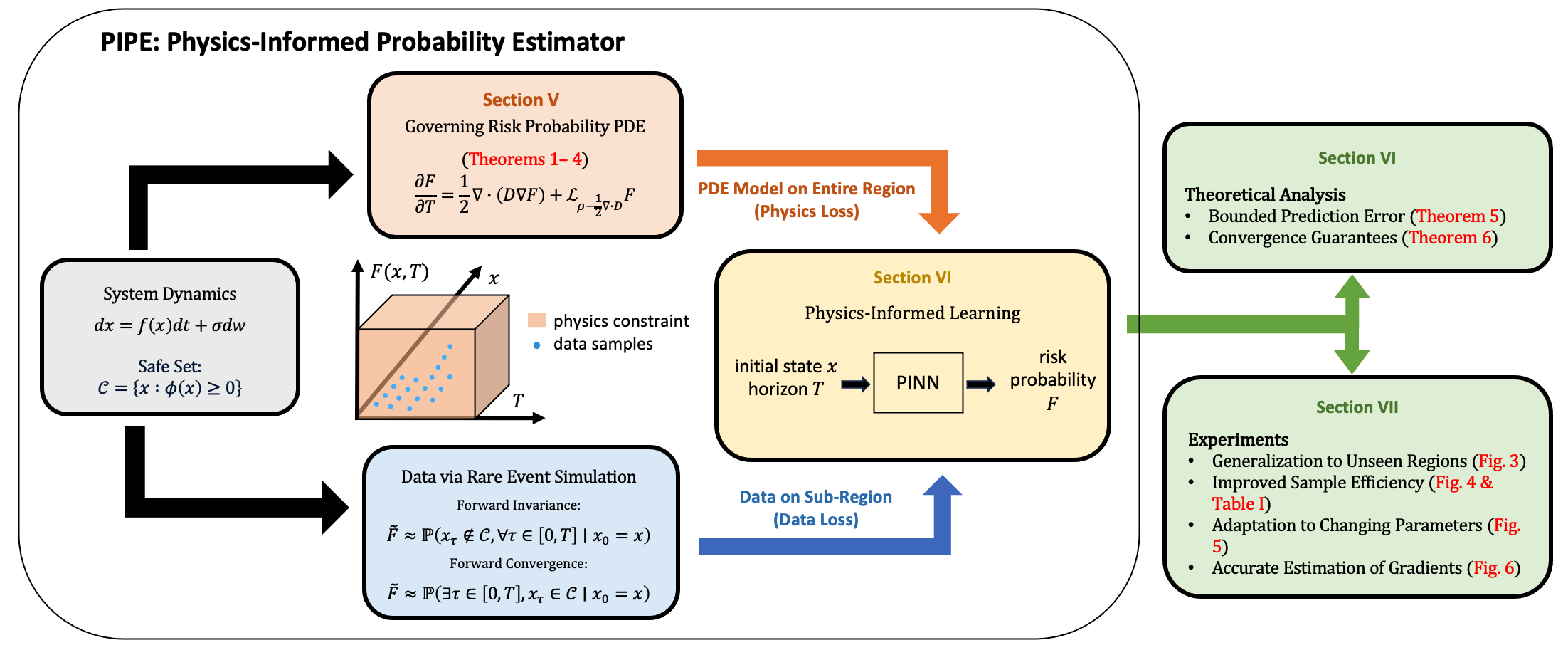}
    \caption{The overview diagram of the proposed PIPE framework. 
    The system takes form~\eqref{eq:x_trajectory} with safe set defined as~\eqref{eq:safe_region}. For the physics model, we derive that the mapping between state-time pair and the risk probability satisfies a governing convection diffusion equation (Theorem~\ref{thm:InvariantProbability_MainTheorem1}-\ref{thm:ConvergenceProbability_MainTheorem4}). For training data, one can acquire the empirical risk probabilities with any existing rare event simulation method, without the need to cover the entire time-state space. The PIPE framework uses physics-informed neural networks to learn the risk probability by fitting the empirical training data and using the physics model as constraints. PIPE gives more accurate and sample efficient risk probability and gradient predictions than Monte Carlo or its variants, and can generalize to unseen regions in the state space and unknown parameters in the system dynamics thanks to its integration of data and physics models. 
    }
    \label{fig:overview diagram}
\end{figure*}

The rest of the paper is organized as follows. In section~\ref{sec:related_work}, we list related work. In section~\ref{sec:preliminaries} we give preliminaries. In section~\ref{sec:problem_statement}, we state the problem formulation. In section~\ref{sec:safety_thm}, we derive the mathematical characterization of risk probability with the governing partial differential equations. In section~\ref{sec:PINN}, we present the physics-informed learning method for probability estimation and the corresponding theoretical analysis. In section~\ref{sec:experiments}, we show experiment results. Finally we conclude the paper in section~\ref{sec:conclusion}.

\section{Related Work}
\label{sec:related_work}

\subsection{Risk Probability Estimation}

\rev{
A wide range of techniques has been developed to estimate risk probabilities. These approaches vary in data availability, theoretical assumptions, and computational complexity. One line of work uses results from probability theory and dynamical systems. For example, probabilistic inequalities and martingale methods are used to bound the probability of deviating into an unsafe region~\cite{prajna2007framework,yaghoubi2020risk,santoyo2021barrier,cheng2020safe}. 
Large deviations theory provides asymptotic estimates of rare event probabilities by characterizing the decay rate of such probabilities~\cite{tong2022optimization}. 
While this line of work seeks to bound or approximate risk probabilities analytically, we characterize the exact risk probability (Theorem~\ref{thm:InvariantProbability_MainTheorem1}-\ref{thm:ConvergenceProbability_MainTheorem4}).

Another line of work uses Monte Carlo techniques to estimate risk or rare event probabilities~\cite{rubino2009rare}. 
Importance sampling methods are commonly used when sampling distributions are biased towards the failure events to efficiently obtain information about risk-sensitive states~\cite{janssen2013monte,cerou2012sequential}. The cross-entropy method iteratively optimizes the sampling distribution via cross-entropy minimization~\cite{uribe2021cross}.  Reinforcement learning has also been employed to guide the sampling process~\cite{yang2024rare}. Bayesian inference is used to steer samples toward near-optimal sampling distribution~\cite{ehre2023stein}. 
Subset simulation breaks the rare event into a sequence of more likely intermediate events, estimating the small failure probability as a product of conditional probabilities~\cite{zuev2015subset}. 
These data-driven approaches often selectively obtain risk event data for improved sample efficiency, and require data to be obtained from all states and time horizons of interest~\cite {rubino2009rare,janssen2013monte,zuev2015subset}. In contrast, our work addresses a different challenge of learning a surrogate model in a way that integrates data and physics information, and the proposed framework can also leverage these sampling techniques. Due to the use of physics information, the proposed method can infer long-term risk information from short-term safe event data (Theorem~\ref{thm:full_pde_constraint} and Fig.~\ref{fig:generalization}). This feature also contributes to improved sampled efficiency, particularly when the risk events are rare (Fig.~\ref{fig:PINN MC log plot}).
}


\subsection{Physics-informed Neural Networks}

\rev{
Physics-informed neural networks (PINNs) have emerged as a robust framework for integrating physical constraints—typically expressed as partial differential equations (PDEs)—directly into a neural network training~\cite{raissi2019physics,han2018solving,cuomo2022scientific}. By enforcing PDE constraints during training, PINNs ensure that the learned solutions are consistent with the structures arising from underlying physics. 
PINNs have demonstrated remarkable success in a wide range of applications, including stochastic control~\cite{pereira2021safe}, power systems\cite{misyris2020physics}, fluid dynamics~\cite{cai2022physics}, and medical applications~\cite{sahli2020physics}. Various extensions of PINNs have emerged, including Bayesian PINNs for noisy data~\cite{yang2021b}, PINNs with hard constraints for topology optimization~\cite{lu2021physics}, and parallel PINNs via domain decomposition for multi-scale problems~\cite{shukla2021parallel}. Additionally, recent theoretical analyses address approximation errors~\cite{sirignano2018dgm,grohs2023proof,darbon2020overcoming,darbon2021some}, generalization capabilities~\cite{de2022error,mishra2023estimates,mishra2022estimates,qian2023physics}, and convergence rates~\cite{fang2021high,pang2019fpinns,jiao2021rate}. When the physics models are uncertain, it is incorporated in a framework with an additional inverse problem that identifies governing equations~\cite{raissi2019physics}.
Our characterization of four types of long-term safety probabilities as the solutions to PDEs (Theorem~\ref{thm:InvariantProbability_MainTheorem1}-\ref{thm:ConvergenceProbability_MainTheorem4}) allows these quantities to be learned by physics-informed learning. To our knowledge, existing work has not previously employed PINNs to estimate long-term probabilities from short-term safe data while providing theoretical guarantees (such as Theorem~\ref{thm:full_pde_constraint} and Theorem~\ref{thm:pinn_convergence}). Our experiments demonstrate that embedding PDE constraints—specified using system parameters and with explicit gradient terms—facilitates stable interpolation across systems and gradient estimation. This results in additional advantages such as rapid online risk inference for varying system dynamics (Fig.~\ref{fig:varying parameter}) and stable computation of probability gradients (Fig.~\ref{fig:gradient}). 
}

\section{Preliminaries}
\label{sec:preliminaries}
Let $\R$, $\R_+$, $\R^n$, and $\R^{m\times n}$ be the set of real numbers, the set of non-negative real numbers, the set of $n$-dimensional real vectors, and the set of $m \times n$ real matrices. Let $x[k]$ be the $k$-th element of vector $x$. 
Let $x^+ = \max(x,0)$ and $x \wedge y = \min\{x,y\}$ for \(x, y \in \R\). Let $f:\gX \rightarrow \gY$ represent that $f$ is a mapping from space $\gX$ to space $\gY$. 
Let \(\mathbb{1}_{\gE}\) be an indicator function, which takes \(1\) when condition \(\gE\) holds and \(0\) otherwise. 
Let \(\mP_x(\gE) = \mP(\gE(X)|X_0=x)\) denote the probability of event \(\gE(X)\) involving a stochastic process \( X = \{X_t\}_{t\in\R_+}\) conditioned on
\(X_0=x\). Let \(\mE_x [ F ( X ) ] = \mE[ F ( X ) | X_0 = x ]\) denote the expectation of $F ( X )$ (a functional of $X$) conditioned on $X_0 = x$. We use upper-case letters (\eg$X$) to denote random variables and lower-case letters (\eg$x$) to denote their specific realizations.

\section{Problem Statement}
\label{sec:problem_statement}

We present the settings and the scope of this paper in section~\ref{sec:SystemDescription} and section~\ref{sec:objective}, respectively. 

\subsection{System Description and design specifications} 
\label{sec:SystemDescription}

\rev{
We consider a control system with stochastic noise of $k$-dimensional Brownian motion $W_t$ starting from $W_0 = 0$. The system state, $X_t \in \R^{n}, t \in \R_+$, evolves according to the following stochastic differential equation (SDE)
\begin{equation}\label{eq:x_trajectory}
    dX_t = f(X_t) dt + \sigma(X_t)dW_t,
\end{equation}  
where $\sigma: \mathbb{R}^n \rightarrow \mathbb{R}^k$ characterizes the magnitudes of the noise term. 
We restrict ourselves to the settings where $f,\sigma$ have sufficient regularity conditions such that the system has a well-defined solution. For example, the system is well-defined when $f$ and $\sigma$ are bounded and globally Lipschitz~\cite{oksendal_stochastic_2003a}, which are common assumptions usually satisfied in practice. More generally, conditions required to have a unique solution can be found in~\cite[Chapter~1]{oksendal_stochastic_2003a}, \cite[Chapter II.7]{borodin_stochastic_2017} and references therein. 
}
The size of $\sigma(X_t)$ is determined from the uncertainties in the disturbance, unmodeled dynamics~\cite{cheng2020safe}, and the prediction errors of the environmental variables~\cite{lefevre2014survey,ferguson2008}. Examples of these cases include when the unmodeled dynamics are captured using statistical models such as Gaussian Processes~\cite{cheng2020safe}\footnote{In such settings, the value of $\sigma(X_t)$ can be determined from the output of the Gaussian Process.} and when the motion of the environment variables are estimated using physics-based models such as Kalman filters~\cite{lefevre2014survey,ferguson2008}.
\rev{
Note that dynamics~\eqref{eq:x_trajectory} can capture not only autonomous systems, but also closed-loop systems with memoryless state feedback control (\eg PID and robust controllers).
}

In this paper, we aim to study the following two aspects related to safety for the system: forward invariance and recovery. The forward invariance refers to the property that the system state always stays in the safe region, while recovery refers to the property that the system state recovers to the safe region even if it originates from the unsafe region.
\begin{definition}[Safe Set]\label{def:safe_region}
The safe region is defined using a set $\gC$ that is characterized by a super level set of some function $\phi(x)$, \ie
\begin{equation}\label{eq:safe_region}
\mathcal{C} =\left\{x \in \mathbb{R}^{n}: \phi(x) \geq 0\right\},
\end{equation}
We call $\gC$ the \textit{safe set}, and \(\phi(x)\) the barrier function. Additionally, we denote the boundary of the safe set and unsafe set by
\begin{align}
    \partial \gC &= \{x\in\R^n:\phi(x)=0\}, \\    \gC^c & =\{x\in\R^{n}: \phi(x)<0\}.
\end{align}
\rev{
We assume that $\phi(x): \mathbb{R}^n \rightarrow \mathbb{R}$ is a first-order differentiable function whose gradient does not vanish at $\partial \gC$.}
\end{definition}
We assume that the dynamics and uncertainty in the system and the environment are all captured in~\eqref{eq:x_trajectory} so that the safe set can be defined using a static function $\phi(x)$ instead of a time-varying one.

\subsubsection{Forward invariance} When the initial state starts from a safe region, \ie$X_0 = x\in\gC$, we study the quantities associated with forward invariance: the safety margin from $\partial \gC$ and the first exit time from $\gC$.

Properties associated invariance to a set have been studied from the perspective of forward invariance. A forward invariant set with respect to a system is often defined to satisfy $X_{0} \in \gM  \Rightarrow X_t \in \gM, \forall t \in \R_+$ in a deterministic framework~\cite[Section 4.2]{khali1996adaptive}. However, in a stochastic system, it may not be always possible to ensure $X_t \in \gM$ at all time because the probability of exiting $\gM$ can accumulate over time.
Therefore, we use a modified definition, formally defined below, where $X_t \in \gM$ is only required to hold in a finite time interval $T$. 

\begin{definition}[Forward Invariance]
\label{def:forward-invariance}
The state of system \eqref{eq:x_trajectory} is said to remain forward invariant with respect to a set $\gM$ during a time interval $[0,T]$ if, given $X_0 = x \in \gM$, 
\begin{align}
    X_t \in \gM,\quad  \forall t \in (0, T].  
\end{align}
\end{definition}

We study the probability of forward invariance to the safe set through characterizing the distributions of the following two random variables:
\begin{align}
\label{eq:min_phi}
   \minf_x(T) &:= \inf\{ \phi(X_{t}) \in \R : t \in [0,T] , X_0 = x  \},\\
\label{eq:first_exit_time}
   \exit_x(\ell)  &:= \inf\{t \in \R_+ :  \phi(X_t)\leq \ell , X_0 = x \},
\end{align}
\noindent where $x \in \gC$, and $\ell \in \mathbb{R}$ is the safety margin. The value of $\minf_x(T)$ informs the worst-case safety margin from the boundary $\partial \gC$ during $[0, T]$, while the value of $\exit_x(0)$ is the first exit time from the safe set. 

The probability of staying within the safe set $\gC$ during a time interval $[0 , T]$ given $X_0 = x \in \gC$ can be computed using the distributions of \eqref{eq:min_phi} or \eqref{eq:first_exit_time} as
\begin{align} 
\label{eq:safe_prob_invariant}
\begin{split}
    \mP_x(X_t\in\gC, \forall t\in[0,T]) &= \mP \left(\minf_x(T) \geq 0 \right) \\
    &= 1 - \mP \left(\exit_x(0) \leq  T \right).
\end{split}
\end{align}

\subsubsection{Recovery} When the initial state is outside of the safe region, \ie$X_0 = x \notin \gC$, we study the quantities associated with recovery: the distance from $\gC$ and the recovery time. 

\begin{definition}[Forward Convergent Set]
A set $\gM$ is said to be a forward convergent set with respect to system~\eqref{eq:x_trajectory} if 
\begin{align}\label{eq:ProblemStatement_ConvergenceSet}
    X_{0} \notin \gM  \Rightarrow \exists \tau  \in (0, \infty) \text{ s.t. } X_{\tau} \in \gM .
\end{align}
\end{definition}
That is, for any initial state that does not belong to set $\gM$, the state re-enters the set in finite time. 
Equivalently, even if the state does not belong to set $\gM$ at some time, it will enter $\gM$ in finite time. Note that a forward convergent set does not require the state to stay in the set indefinitely after re-entry. 

We study forward convergence to the safe set through characterizing the distribution of the following two random variables:
\begin{align}
\label{eq:max_phi}
   \maxf_x(T) & := \sup\{ \phi(X_{t}) \in \R : t \in [0,T] , X_0 = x  \},\\
\label{eq:first_entry_time}
   \entrance_x(\ell) & := \inf\{t \in \R_+ :  \phi(X_t)\geq \ell , X_0 = x \},
\end{align}
\noindent where $x \notin \gC$. The value of $\maxf_x(T)$ informs whether the state re-enters the safe set within a time horizon $T$, and $\entrance_x(0)$ characterizes the duration required for the state to re-enter the safe region for the first time.

The probability of having re-entered the safe set $\gC$ by time $T$ given $X_0 = x \notin \gC$ can be computed using the distributions of \eqref{eq:max_phi} or \eqref{eq:first_entry_time} as  
\begin{align} 
\label{eq:safe_prob_convergent}
\begin{split}
 \mP_x\left( \exists t \in [0, T] \text{ s.t. } X_t \in\gC \right) &= \mP \left( \maxf_x(T) \geq 0 \right) \\
    &= 1 - \mP \left( \entrance_x(0) \leq  T \right),
\end{split}
\end{align}
given the fact that system~\eqref{eq:x_trajectory} is time-invariant.
More generally, the distributions of $\minf_x(T)$, $\maxf_x(T)$, \(\Gamma_x(\ell)\), and $\entrance_x(\ell)$ contain much richer information than the probability of forward invariance or recovery. Thus, knowing their exact distributions allows uncertainty and risk to be rigorously quantified in autonomous control systems.

\subsection{Objective and scope of this paper}
\label{sec:objective}
In this paper, we build upon analysis tools from stochastic processes to answer the following question:
\begin{itemize}
    \item When the stochastic system dynamics~\eqref{eq:x_trajectory} is given, what are the \textit{exact distributions} of~\eqref{eq:min_phi}, \eqref{eq:first_exit_time}, \eqref{eq:max_phi}, and~\eqref{eq:first_entry_time}?

    \item With the characterization of the exact distribution of~\eqref{eq:min_phi}, \eqref{eq:first_exit_time}, \eqref{eq:max_phi}, and~\eqref{eq:first_entry_time}, how do we efficiently calculate them?

\end{itemize}

We will answer the first question in section~\ref{sec:safety_thm} and the second question in section~\ref{sec:PINN}.
The distribution of \eqref{eq:min_phi} and~\eqref{eq:first_exit_time} will allow us to study the property of invariance from a variety of perspectives, such as the average failure time, the tail distributions of safety or loss of safety, and the mean and tail of the safety margin.
The distribution of \eqref{eq:max_phi} and~\eqref{eq:first_entry_time} will allow us to study the property of recovery from a variety of perspectives, such as the average recovery time, tail distribution of recovery, and the mean and tail distribution of recovery vs. crashes. 

\rev{
\begin{remark}
    The efficient calculation of~\eqref{eq:min_phi}, \eqref{eq:first_exit_time}, \eqref{eq:max_phi}, and~\eqref{eq:first_entry_time} can potentially be applied to chance-constrained safe predictive control~\cite{ono2015chance}, safe neural network-based control~\cite{wang2024myopically}, and reinforcement learning with long-term safety requirements~\cite{wachi2024long}.
\end{remark}
}


\section{Characterization of Probabilities}
\label{sec:safety_thm}

In this section, we study the properties associated with safety (forward invariance) and recovery (forward convergence). The former is quantified using the distribution of the safety margin~\eqref{eq:min_phi} and first exit time \eqref{eq:first_exit_time} in section~\ref{sec:complete_information_distribution}. The latter is quantified using the distribution of the distance to the safe set \eqref{eq:max_phi} and the recovery time \eqref{eq:first_entry_time} in section~\ref{sec:ForwardConvergence}. 

\rev{

\begin{definition}[Infinitesimal Generator]
The infinitesimal generator $A$ of
a stochastic process $\{ Y_t  \in \R^n \}_{t \in \R_+}$ is
\begin{equation}\label{eq:InfinitesimalGenerator}
\begin{split}
AF\left(y\right) & = \lim _{h\rightarrow 0} \frac{\mE_y\left[F(Y_{h})\right]-F\left(y\right)}{h}\\
\end{split}
\end{equation}
whose domain is the set of all functions $F: \R^n \rightarrow \R$ such that the limit of \eqref{eq:InfinitesimalGenerator} exists for all $y \in \R^n$. 
\end{definition}
Applying the infinitesimal generator to $\phi$ yields
\begin{align}
\label{eq:inf_gen_phi1}
       A\phi & = \gL_f\phi + \frac{1}{2}\tr\left(\sigma\sigma^\intercal\Hess{\phi}\right)\\
        & =: D_\phi(x)
\end{align}
We define $D_\phi(x)$ to be the value of \eqref{eq:inf_gen_phi1} evaluated at $x$. 
Here, $\gL_v$ is the Lie derivative along a vector or matrix field $v$.
The infinitesimal generator can be interpreted as the stochastic counterpart of the Lie derivative: It gives the derivative along the flow of $dX_t$ in expectation, \ie $\mE[ d\phi(X_t)] = A\phi(X_t)dt$.
}

The random variables of our interests, \eqref{eq:min_phi}, \eqref{eq:first_exit_time}, \eqref{eq:max_phi} and~\eqref{eq:first_entry_time},  are functions of $\phi(X_t)$. Since the stochastic dynamics of $d\phi(X_t)$ are driven by $X_t$, we consider the augmented state space
\begin{align}\label{eq:augumented_z}
        Z_t := \begin{bmatrix}\phi(X_t) \\ X_t
        \end{bmatrix}\in\R^{n+1}.
\end{align}
From It\^o's lemma, dynamics of $\phi(X_t)$ satisfies
\begin{equation}\label{eq:ito_lemma2}
\begin{split}
    d\phi(X_t)  = A\phi(X_t)dt + \gL_{\sigma}\phi(X_t)dW_t,
\end{split}
\end{equation}
where $\gL_{\sigma}$ is the Lie derivative along $\sigma$.
Combining \eqref{eq:x_trajectory} and~\eqref{eq:ito_lemma2}, the dynamics of $Z_t$ are given by the SDE 
\begin{align}\label{eq:MainTheorem_z}
    dZ_t &= \ave(Z_t)\,dt + \var(Z_t)\,dW_t,
\end{align}
where the drift and diffusion parameters are
\begin{align}\label{eq:mu_sigma_prime}
 & \ave(z) = \begin{bmatrix}
 D_{\phi}(x)\\
 f(x) 
\end{bmatrix}, && 
    \var(z) = \begin{bmatrix}
 \gL_\sigma\phi(x)\\
 \sigma(x)
\end{bmatrix}.
\end{align}

Since risk probabilities are complementary to safety probabilities, we will consider four safety-related probabilities of interest in section~\ref{sec:complete_information_distribution} and section~\ref{sec:ForwardConvergence}. We will call them risk probabilities for simplicity.

We then present four theorems on characterization of different risk probabilities.
The proof of all theorems can be found in the appendix of the paper.

\subsection{Probability of Forward Invariance}\label{sec:complete_information_distribution}

When the state initiates inside the safe set, the distribution of the safety margin, $\minf_x(T)$ in \eqref{eq:min_phi}, is given below.

\begin{theorem}\label{thm:InvariantProbability_MainTheorem1}
Consider system~\eqref{eq:x_trajectory} with the initial state \(X_0 = x\). 
Let $z = [\phi(x), x^\intercal]^\intercal\in\R^{n+1}$.
Then, the complementary cumulative distribution function (CCDF) of \(\minf_x(T)\), 
\addtocounter{equation}{1}
    \begin{align}\tag{\theequation.A}
    \label{eq:F_distribution}
        \FwdInv(z,T;\ell)=\mP(\Phi_x(T)\geq \ell),\quad\ell\in\R,
    \end{align}
    is the solution to the initial-boundary-value problem of the convection-diffusion equation on the super-level set \(\{z\in\R^n: z[1]\geq \ell\}\)
    \begin{align}\tag{\theequation.B}
    \label{eq:CauchyProblem}
        \begin{cases}
         {\partial \FwdInv\over\partial T} = {1\over 2}\grad\cdot(D\grad \FwdInv) + \gL_{\rho - {1\over 2}\grad\cdot D}\FwdInv,& z[1]\geq \ell, T>0,\\
         \FwdInv(z,T;\ell) = 0,& z[1]< \ell, T>0,\\
         \FwdInv(z,0;\ell) = \mathbb{1}_{\{z[1]\geq\ell\}}(z),&z\in\R^{n+1},
        \end{cases}
    \end{align}
    where \(D:=\zeta\zeta^\intercal\). Here, the spatial derivatives \(\grad{}\) and \(\gL\) apply to the variable \(z\in\R^{n+1}\).\footnote{This rule applies to Theorem 2 - 4 as well.}
\end{theorem}

Theorem~\ref{thm:InvariantProbability_MainTheorem1} converts the  problem of characterizing the distribution function into a deterministic convection-diffusion problem.
When we set \(\ell=0\), it gives the exact probability that $\gC$ is a forward invariant set with respect to~\eqref{eq:x_trajectory} during the time interval $[0,T]$. For arbitrary $\ell > 0$, it can be used to compute the probability of maintaining a safety margin of $\ell$ during $[0, T]$.

\rev{
\begin{remark}
In Theorem~\ref{thm:InvariantProbability_MainTheorem1} and the other theorems in this section, we use CDFs and CCDFs with a slight abuse of notation. The presence or absence of equality in the inequality conditions are chosen so that the obtained probability can be used to compute \eqref{eq:safe_prob_invariant} and \eqref{eq:safe_prob_convergent}. Also, note that if a random variable $Y$ has a PDF, then $\mP(Y\leq \ell) = \mP(Y< \ell)$, so the presence or absence of equality in the probability density function does not affect the computed probabilities.
\end{remark}
}

Moreover, the distribution of the first exit time from the safe set, $\exit_x(0)$ in \eqref{eq:first_exit_time}, and the first exit time from an arbitrary super level set \(\{x\in\R^n: \phi(x) \geq \ell\}\), is given below. 

\begin{theorem}\label{thm:InvariantProbability_MainTheorem2}
Consider system~\eqref{eq:x_trajectory} with the initial state \(X_0 = x\). Let $z = [\phi(x), x^\intercal]^\intercal\in\R^{n+1}$. Then, the cumulative distribution function (CDF) of the first exit time \(\exit_x(\ell)\),
\addtocounter{equation}{1}
\begin{align}\tag{\theequation.A}
\label{eq:G_distribution}
    \FwdInvExit(z,T; \ell) = \mP (\exit_x(\ell) \leq T ),
\end{align}
is the solution to
\begin{align*}\tag{\theequation.B}\label{eq:MainTheorem2}
\begin{cases}
    \frac{\partial \FwdInvExit}{\partial T} =  \frac{1}{2}\grad\cdot(D\grad{\FwdInvExit}) + \gL_{\rho-{1\over 2}\grad{ \cdot} D}\FwdInvExit
    ,& z[1]\geq \ell, T>0,\\
    \FwdInvExit(z,T;l) = 1,  & z[1] < \ell, T>0.\\
    \FwdInvExit(z,0;l) = \mathbb{1}_{\{z[1]<\ell\}}(z), & z\in\R^{n+1},\\
\end{cases}
\end{align*}
where $D = \zeta\zeta^\intercal$.
\end{theorem}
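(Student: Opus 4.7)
The plan is to reduce Theorem~\ref{thm:InvariantProbability_MainTheorem2} to Theorem~\ref{thm:InvariantProbability_MainTheorem1} by exploiting the complementary relationship between the first-exit-time event and the minimum-value event. The key starting observation is the sample-path identity
\begin{align*}
\{\exit_x(\ell) \leq T\} = \{\minf_x(T) \leq \ell\}.
\end{align*}
Since $\phi(X_t)$ has continuous sample paths, the infimum over the compact interval $[0,T]$ is attained at some $t^\star \in [0,T]$: if $\minf_x(T) \leq \ell$ then $\phi(X_{t^\star}) \leq \ell$, so $\exit_x(\ell) \leq t^\star \leq T$, and the reverse inclusion is immediate from the definitions. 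Therefore
\begin{align*}
\FwdInvExit(z,T;\ell) = \mP\!\left(\minf_x(T) \leq \ell\right) = 1 - \mP\!\left(\minf_x(T) > \ell\right).
\end{align*}

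Next I would invoke Theorem~\ref{thm:InvariantProbability_MainTheorem1}, which gives $\FwdInv(z,T;\ell) = \mP(\minf_x(T) \geq \ell)$ as the solution to~\eqref{eq:CauchyProblem}, and identify $\FwdInvExit = 1 - \FwdInv$. The convection-diffusion operator on the right-hand side of~\eqref{eq:CauchyProblem} involves only spatial derivatives and therefore annihilates constants, so by linearity $1-\FwdInv$ satisfies the same PDE on $\{z: z[1]\geq \ell\}$ as $\FwdInv$. The boundary and initial data transform in the obvious way: $1 - \mathbb{1}_{\{z[1]\geq \ell\}} = \mathbb{1}_{\{z[1]<\ell\}}$, and $1 - 0 = 1$ on $\{z[1]<\ell\}$, which matches the conditions in~\eqref{eq:MainTheorem2}. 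Uniqueness of the initial-boundary-value problem (under the regularity assumed for $f,g,\sigma,K$) then pins down $\FwdInvExit$ as the unique solution.

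The principal obstacle is the measure-zero gap between $\mP(\minf_x(T) \geq \ell)$ and $\mP(\minf_x(T) > \ell)$; to conclude $\FwdInvExit = 1 - \FwdInv$ pointwise one needs $\mP(\minf_x(T)=\ell)=0$, i.e., that $\minf_x(T)$ carries no atom at $\ell$. Under the sufficient regularity assumed for~\eqref{eq:x_trajectory} together with non-degeneracy of $\sigma$ in the direction normal to $\{\phi=\ell\}$, the augmented process $Z_t$ admits a continuous transition density and the first-crossing time at level $\ell$ has no atom, which resolves the issue. A self-contained alternative, should this regularity need strengthening, is to bypass Theorem~\ref{thm:InvariantProbability_MainTheorem1} and derive the PDE for $\FwdInvExit$ directly: apply It\^o's formula or a Dynkin-type argument to $\mathbb{1}_{\{\exit_x(\ell)\leq T\}}$ using the stopped process $(Z_{t\wedge \exit_x(\ell)})_{t\geq 0}$, identify the infinitesimal generator of $Z_t$ with the spatial operator $\tfrac{1}{2}\grad\cdot(D\grad\cdot) + \gL_{\rho-\tfrac{1}{2}\grad\cdot D}(\cdot)$ appearing in~\eqref{eq:CauchyProblem}, and verify that $\FwdInvExit$ satisfies~\eqref{eq:MainTheorem2} in the sense required for the uniqueness theorem to apply.
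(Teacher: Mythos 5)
Your proposal is correct, but it takes a genuinely different route from the paper. The paper does not derive Theorem~\ref{thm:InvariantProbability_MainTheorem2} from Theorem~\ref{thm:InvariantProbability_MainTheorem1}; instead it builds separate machinery: Lemma~\ref{lem:EscapeTime} gives a time-independent Feynman--Kac/Dynkin representation for the stopped augmented process, and Lemma~\ref{lem:EscapeTimeLemma3} specializes it with $\hat{\psi}=1/\gamma$, $\hat{V}=1$, recognizes $\mE_x[\gamma^{-1}e^{-\gamma H_{\gM}}]$ as the Laplace transform in $T$ of the CDF $\mP_x(H_{\gM}\le T)$, and inverts the transform to obtain the parabolic initial-boundary-value problem; Theorem~\ref{thm:InvariantProbability_MainTheorem2} then follows by taking $\gM=\{z: z[1]\ge\ell\}$ and applying the divergence-form identity \eqref{eq:VectorIdentity}. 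Your complementation argument --- $\{\exit_x(\ell)\le T\}=\{\minf_x(T)\le\ell\}$, hence $\FwdInvExit=1-\FwdInv$ up to the atom of $\minf_x(T)$ at $\ell$, with $1-\FwdInv$ solving the complementary problem because the purely spatial operator annihilates constants --- is shorter, reuses Theorem~\ref{thm:InvariantProbability_MainTheorem1}, and makes the duality between the two statements explicit. The trade-off is that your route must explicitly handle the $\le$ versus $<$ gap (which the paper relegates to its footnote on CDF conventions, so you are in fact being more careful than the source on this point) and must invoke uniqueness of the IBVP to conclude that $1-\FwdInv$ \emph{is} $\FwdInvExit$; the paper's Laplace-transform route avoids the complementation entirely but carries its own implicit regularity burden (existence of a density for $H_{\gM}$ in the integration by parts, and validity of inverting the transform term by term on the PDE). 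Both arguments land on the same equation, and neither is strictly more rigorous than the other; your nondegeneracy condition on $\gL_\sigma\phi$ (so that the running minimum has no atom at $\ell$) is the one genuinely new hypothesis your proof needs that the paper's stated assumptions do not isolate.
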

Similarly, Theorem~\ref{thm:InvariantProbability_MainTheorem2} gives the distribution of the first exit time from an arbitrary super level set of barrier function as the solution to a deterministic convection-diffusion equation. When we set \(\ell=0\), it can also be used to compute the exact probability of staying within the safe set during any time interval.

\subsection{Probability of Forward Convergence}
\label{sec:ForwardConvergence}
When the state initiates outside the safe set, the distribution of the distance from the safe set, \(\maxf_x(T)\) in~\eqref{eq:max_phi}, is given below. 
\begin{theorem}\label{thm:ConvergenceProbability_MainTheorem3}
Consider system~\eqref{eq:x_trajectory} with the initial state \(X_0 = x\). Let $z = [\phi(x), x^\intercal]^\intercal\in\R^{n+1}$. Then, the CDF of \(\maxf_x(T)\),   
\addtocounter{equation}{1}
\begin{equation}\tag{\theequation.A}
\label{eq:Q_distribution}
    \FwdConv(z,T;\ell) = \mP\left(\maxf_x(T) < \ell \right), \quad \ell \in \R,
\end{equation}
is the solution to
    \begin{align}\tag{\theequation.B}
    \label{eq:pde_thm3}
        \begin{cases}
         {\partial \FwdConv\over\partial T} = {1\over 2}\grad\cdot(D\grad \FwdConv) + \gL_{\rho - {1\over 2}\grad\cdot D}\FwdConv,& z[1] < \ell, T>0,\\
         \FwdConv(z,T;\ell) = 0,& z[1]\geq\ell, T>0,\\
         \FwdConv(z,0;\ell) = \mathbb{1}_{\{z[1]<\ell\}}(z),&z\in\R^{n+1},\\
        \end{cases}
    \end{align}
where $D = \var\var^\intercal$.
\end{theorem}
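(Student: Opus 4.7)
The plan is to mirror the strategy used for Theorem~\ref{thm:InvariantProbability_MainTheorem1}, swapping the roles of super- and sub-level sets. The key reformulation is
\[
\{\maxf_x(T) < \ell\} \;=\; \bigl\{\phi(X_t) < \ell \text{ for all } t \in [0,T]\bigr\} \;=\; \bigl\{Z_t[1] < \ell \text{ for all } t \in [0,T]\bigr\},
\]
so, defining the hitting time $\tau := \inf\{t\in\R_+: Z_t[1]\geq \ell\}$, we have $\FwdConv(z,T;\ell) = \mP_z(\tau > T)$. That is, $\FwdConv$ is the \emph{survival probability} up to time $T$ of the augmented diffusion $Z_t$ killed upon entering the closed set $\{z[1]\geq \ell\}$. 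This identifies $\FwdConv$ as exactly the type of object handled by the Kolmogorov backward equation with an absorbing Dirichlet boundary.

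First, I would note that by \eqref{eq:MainTheorem_z}-\eqref{eq:mu_sigma_prime} the process $Z_t$ is an It\^o diffusion whose infinitesimal generator, after applying the product rule to the second-order term, admits the form
\[
\gA f = \tfrac{1}{2}\grad\cdot(D\grad f) + \gL_{\rho - \frac{1}{2}\grad\cdot D} f, \qquad D = \var\var^\intercal.
\]
This rewriting is the same identity used in Theorem~\ref{thm:InvariantProbability_MainTheorem1} and carries over verbatim here. Next, I would apply the Kolmogorov backward equation for diffusions killed on $\{z[1]\geq \ell\}$: for $z$ in the open set $\{z[1]<\ell\}$ the survival probability $\FwdConv(z,T;\ell)$ is a classical solution of $\partial_T \FwdConv = \gA \FwdConv$, with absorbing Dirichlet data $\FwdConv(z,T;\ell)=0$ on $\{z[1]\geq\ell\}$ (since the process is dead there and contributes nothing to survival).

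For the initial condition, at $T=0$ we have $\maxf_x(0)=\phi(X_0)=z[1]$, so $\mP(\maxf_x(0)<\ell)=\mathbb{1}_{\{z[1]<\ell\}}(z)$, matching~\eqref{eq:pde_thm3}. The exterior condition is consistent with the probabilistic definition: if $z[1]\geq \ell$, then $\maxf_x(T)\geq \phi(X_0)\geq \ell$ deterministically and the probability is zero. Putting these pieces together yields the initial-boundary-value problem~\eqref{eq:pde_thm3}.

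The main technical obstacle I expect is justifying the Kolmogorov backward equation up to the absorbing boundary $\{z[1]=\ell\}$, which requires that the exit time $\tau$ be regular for points on this boundary and that $\FwdConv$ be continuous there (so that the Dirichlet condition is attained in the classical sense rather than only in an $L^2$/viscosity sense). This is handled by the standing regularity hypotheses on $f,g,\sigma,K$ that guarantee existence of a unique strong solution to~\eqref{eq:x_trajectory} together with the assumption that $\grad \phi$ does not vanish on the level set $\{\phi=\ell\}$ (ensuring non-characteristic boundary in the $\phi$-component), exactly as in Theorem~\ref{thm:InvariantProbability_MainTheorem1}. Beyond this, the remaining steps are routine and parallel to the earlier theorem, so I would state them succinctly by referring back to that proof.
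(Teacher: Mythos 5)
Your proposal is correct and follows essentially the same route as the paper: augment the state to $Z_t$, rewrite $\{\maxf_x(T)<\ell\}$ as confinement of $Z_t$ to the sub-level set $\gM=\{z:z[1]<\ell\}$, characterize that confinement probability by a convection--diffusion equation with absorbing data on $\gM^c$, and pass to divergence form via the identity $\nabla\cdot(D\nabla \FwdConv)=\gL_{\nabla\cdot D}\FwdConv+\tr(D\Hess\FwdConv)$. The only difference is provenance of the intermediate fact: you cite the Kolmogorov backward equation for the killed diffusion directly, whereas the paper derives the same statement (its Lemma on confinement probabilities) from a Feynman--Kac representation with soft-killing potential $V=\gamma\mathbb{1}_{\gM^c}$ and the limit $\gamma\to\infty$.
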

Theorem~\ref{thm:ConvergenceProbability_MainTheorem3} gives the distribution of safety distance \(\maxf_x(T)\) as a deterministic convection-diffusion problem. When we set \(\ell=0\), it provides the exact probability that a stochastic process~\eqref{eq:x_trajectory} initiating outside the safe set enters the safe set during the time interval $[0,T]$. When we set $\ell<0$, it can be used to compute the probability of coming close to $|\ell|$-distance from the safe set along the time interval $[0, T]$.

Finally, the distribution of the recovery time from the unsafe set, \(\entrance_x(0)\) in~\eqref{eq:first_entry_time}, and the entry time to an arbitrary super level set of the barrier function, \(\entrance_x(\ell)\), is given below. 
\begin{theorem}\label{thm:ConvergenceProbability_MainTheorem4}
Consider system~\eqref{eq:x_trajectory} with the initial state \(X_0 = x\). Let $z = [\phi(x), x^\intercal]^\intercal\in\R^{n+1}$. Then, the CDF of the recovery time \(\entrance_x(\ell)\), 
\addtocounter{equation}{1}
\begin{align}\tag{\theequation.A}
\label{eq:N_distribution}
    \FwdConvExit(z,T; \ell)=\mP ( \entrance_x(\ell) \leq T ),
\end{align}
is the solution to
\begin{align}\tag{\theequation.B}
\label{eq:PDE_D}
\begin{cases}
    \frac{\partial \FwdConvExit}{\partial t} =  \frac{1}{2}\grad\cdot(D\grad{\FwdConvExit}) + \gL_{\ave-{1\over 2} \grad{\cdot D}}{\FwdConvExit}
    ,& z[1] < \ell,T>0,\\
    \FwdConvExit(z,T;l) = 1,  & z[1]\geq \ell,T>0,\\
    \FwdConvExit(z,0;l) = \mathbb{1}_{\{z[1]\geq \ell\}}(z),& z\in\R^{n+1},\\
\end{cases}
\end{align}
where $D =\var\var^\intercal$.
\end{theorem}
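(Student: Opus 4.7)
The plan is to derive Theorem~\ref{thm:ConvergenceProbability_MainTheorem4} from Theorem~\ref{thm:ConvergenceProbability_MainTheorem3} via a pathwise duality between entry times and running maxima. First I would establish the event identity
\[
\{\entrance_x(\ell) \leq T\} \;=\; \{\maxf_x(T) \geq \ell\}.
\]
This follows essentially by definition: since $\phi(X_t)$ has continuous sample paths, the supremum over the compact interval $[0,T]$ is attained, so ``$\phi(X_t)$ reaches level $\ell$ at some $t\leq T$'' is equivalent to ``$\sup_{[0,T]}\phi(X_t)\geq \ell$.'' Combined with Theorem~\ref{thm:ConvergenceProbability_MainTheorem3}, this yields, on the region $\{z[1]<\ell\}$,
\[
\FwdConvExit(z,T;\ell) \;=\; 1 - \mP(\maxf_x(T) < \ell) \;=\; 1 - \FwdConv(z,T;\ell).
\]

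Second, I would verify that the PDE~\eqref{eq:PDE_D} is equivalent to~\eqref{eq:pde_thm3} under this substitution. Let $L := \tfrac{1}{2}\grad\cdot(D\grad\,\cdot\,) + \gL_{\rho - \tfrac{1}{2}\grad\cdot D}$ denote the convection-diffusion operator. Since $\grad(1)=0$ and $\gL_v(1)=0$, the operator $L$ annihilates constants, so by linearity $L(1-\FwdConv) = -L(\FwdConv)$; likewise $\partial_T(1-\FwdConv) = -\partial_T\FwdConv$. The two sign flips cancel, so $\FwdConvExit$ satisfies the same convection-diffusion PDE on $\{z[1]<\ell\}$ as the one written in~\eqref{eq:PDE_D}.

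Third, I would check the auxiliary conditions. On $\{z[1]\geq \ell\}$ one has $\phi(x)\geq \ell$, hence $\entrance_x(\ell)=0$ almost surely and $\FwdConvExit(z,T;\ell)=1$, matching the Dirichlet condition. At $T=0$,
\[
\FwdConvExit(z,0;\ell) \;=\; 1 - \mathbb{1}_{\{z[1]<\ell\}}(z) \;=\; \mathbb{1}_{\{z[1]\geq\ell\}}(z),
\]
as required by the initial condition.

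The main technical obstacle is the same regularity issue that underlies Theorem~\ref{thm:ConvergenceProbability_MainTheorem3}: identifying $\mP(\maxf_x(T)\geq \ell)$ with $1-\mP(\maxf_x(T)<\ell)$ requires that the distribution of the running maximum place no atom at the threshold~$\ell$, which follows from path continuity and the non-degeneracy of~\eqref{eq:MainTheorem_z} already assumed for the previous theorems. If a self-contained derivation is preferred, one can alternatively bypass Theorem~\ref{thm:ConvergenceProbability_MainTheorem3} and apply the Kolmogorov backward / Feynman--Kac formula directly to the first-passage functional $\FwdConvExit$ for the augmented process $Z_t$ with an absorbing boundary on $\{z[1]=\ell\}$, yielding the same PDE with the prescribed boundary and initial data.
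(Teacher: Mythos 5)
Your proposal is correct, but it takes a genuinely different route from the paper. The paper proves Theorem~\ref{thm:ConvergenceProbability_MainTheorem4} directly: it identifies $\entrance_x(\ell)$ with the escape time $\lemmaT_{\gM}$ of the augmented process $Z_t$ from $\gM=\{z[1]<\ell\}$ and then invokes Lemma~\ref{lem:EscapeTimeLemma3}, which characterizes the CDF of an escape time as the solution of the parabolic initial-boundary-value problem; that lemma is in turn obtained from a time-independent Feynman--Kac representation (Lemma~\ref{lem:EscapeTime}, proved by It\^o's formula and optional stopping) via a Laplace transform in time. You instead deduce the theorem from Theorem~\ref{thm:ConvergenceProbability_MainTheorem3} through the pathwise identity $\{\entrance_x(\ell)\leq T\}=\{\maxf_x(T)\geq\ell\}$ (valid by path continuity and compactness of $[0,T]$) together with the observation that the convection-diffusion operator annihilates constants, so $\FwdConvExit=1-\FwdConv$ satisfies the stated PDE with the complemented boundary and initial data. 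Your route is shorter and makes explicit the duality that the paper only gestures at in footnote~\ref{ft:inequality} and in \eqref{eq:safe_prob_convergent}; the paper's route is self-contained and reuses the same lemmas for Theorem~\ref{thm:InvariantProbability_MainTheorem2}, so the Laplace-transform machinery is needed anyway. One small correction: the ``technical obstacle'' you flag is not one. Because the paper defines $\FwdConv(z,T;\ell)=\mP(\maxf_x(T)<\ell)$ with a \emph{strict} inequality, the identity $\mP(\maxf_x(T)\geq\ell)=1-\mP(\maxf_x(T)<\ell)$ is pure complementation of events and requires no atomlessness of the running maximum at $\ell$; that concern would only arise if you were comparing $\mP(Y\geq\ell)$ with $1-\mP(Y\leq\ell)$.
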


Theorem~\ref{thm:ConvergenceProbability_MainTheorem4} gives the distribution of the re-entry time as a solution to a deterministic convection-diffusion equation. When we set \(\ell=0\), it can be used to compute the exact probability of entering the safe region during any time intervals. When we set $\ell < 0$, it can be used to study the first time to reach \(|\ell|\)-close to the safe set.

\rev{
Till here, we have characterized all probabilities of interest into the solution of certain PDEs. However, solving such PDEs with sampling-based methods~\cite{rubino2009rare} can be computationally expensive.
In the following section, we will present our proposed physics-informed learning framework for efficient evaluation of risk probabilities based on both sampled data and the PDE model we derived earlier.
}

\section{Physics-informed Learning}
\label{sec:PINN}

In this section, we present our proposed physics-informed learning framework to efficiently solve the associating PDEs for risk probabilities of interest.

\subsection{Physics-informed Learning Framework}

For simplicity of notation, we use one single notation $F$ to denote the probabilities of interest \ie equations~\eqref{eq:F_distribution},~\eqref{eq:G_distribution},~\eqref{eq:Q_distribution} and~\eqref{eq:N_distribution}, and use $W_F$ to denote the associated PDEs~\eqref{eq:CauchyProblem},~\eqref{eq:MainTheorem2},~\eqref{eq:pde_thm3} and~\eqref{eq:PDE_D}. For instance, for complementary cumulative distribution of \(\minf_x(T)\) defined in~\eqref{eq:F_distribution}, we have
\begin{equation}
\label{eq:safety_PDE}
    W_F := {\partial \FwdInv\over\partial T} - {1\over 2}\grad\cdot(D\grad \FwdInv) + \gL_{\rho - {1\over 2}\grad\cdot D}\FwdInv.
\end{equation}

While the PDE provides a way to get the actual risk probability of the system, to solve a PDE using numerical techniques is not easy in general, especially when the coefficients are time varying as in the case of~\eqref{eq:safety_PDE}.
Monte Carlo (MC) methods provide another way to solve this problem. Assume the dynamics of the system is given, one can simulate the system for an initial condition multiple times to get an empirical estimate of the risk probability by calculating the ratio of unsafe trajectories over all trajectories. However, 
MC requires huge number of trajectories to get accurate estimation, and the evaluation of the risk probability can only be conducted at a single point at a time.



\begin{figure*}[t]
    \centering
    \includegraphics[width=0.85\textwidth]{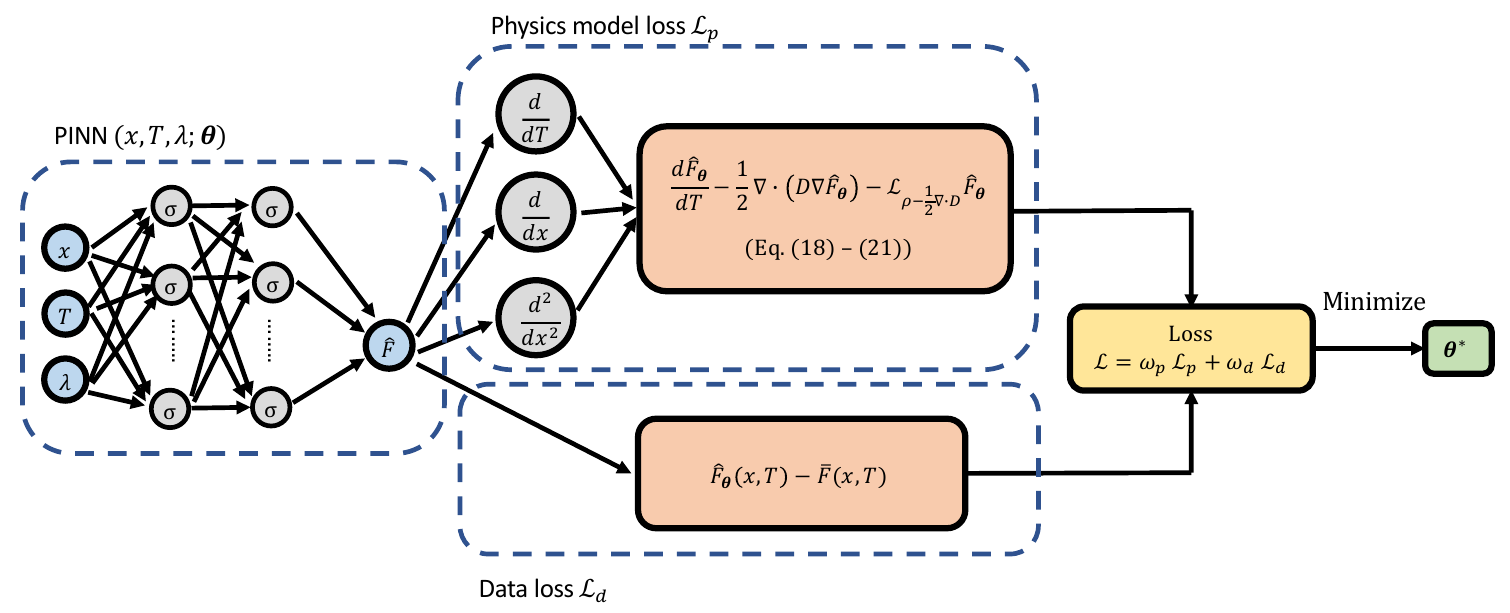}
    \caption{The training scheme of the physics-informed neural network. 
    }
    \label{fig:PINN diagram}
\end{figure*}

To leverage the advantages of PDE and MC and to overcome their drawbacks, we propose to use physics-informed neural networks (PINNs) to learn the mapping from the state and time horizon to the risk probability value $F$. Fig.~\ref{fig:PINN diagram} shows the architecture of the PINN. The PINN takes the state-time pair $(x,T)$ and the system parameter $\lambda$ as the input, and outputs the risk probability prediction $\hat{F}$, the state and time derivatives $ \frac{\partial \hat F}{\partial x}$ and $ \frac{\partial \hat F}{\partial T}$, and the Hessian $ \frac{\partial^2 \hat F}{\partial x^2}$, which come naturally from the automatic differentiation in deep learning frameworks such as PyTorch~\cite{paszke2019pytorch} and TensorFlow~\cite{abadi2016tensorflow}. Unlike standard PINN, we add the system parameter $\lambda$ as an input to achieve adaptations on varying system parameters.
Assume the PINN is parameterized by $\boldsymbol\theta$, the loss function is defined as
\begin{equation}
\label{eq:PINN overall loss function}
    \mathcal{L}(\boldsymbol\theta) = \omega_p \mathcal{L}_p(\boldsymbol\theta) + \omega_d \mathcal{L}_d(\boldsymbol\theta),
\end{equation}
where
\begin{equation}
\label{eq:PINN loss functions}
\begin{aligned}
    \mathcal{L}_p(\boldsymbol\theta) & = \frac{1}{|\mathcal{P}|} \sum_{(x,T) \in \mathcal{P}} \|W_{\hat{F}_{\boldsymbol\theta}}(x,T)\|_2^2, \\
    \mathcal{L}_d(\boldsymbol\theta) & = \frac{1}{|\mathcal{D}|} \sum_{(x,T) \in \mathcal{D}} \|\hat{F}_{\boldsymbol\theta}(x,T) - \bar{F}(x,T)\|_2^2.
\end{aligned}
\end{equation}
Here, $\bar{F}$ is the training data, $\hat{F}_{\boldsymbol\theta}$ is the prediction from the PINN, $\mathcal{P}$ and $\mathcal{D}$ are the training point sets for the physics model and external data, respectively. The loss function $\mathcal{L}$ consists of two parts, the physics model loss $\mathcal{L}_p$ and data loss $\mathcal{L}_d$. The physics model loss $\mathcal{L}_p$ measures the satisfaction of the PDE constraints for the learned output. It calculates the actual PDE equation value $W_{\hat{F}_{\boldsymbol\theta}}$, which is supposed to be $0$, and use its 2-norm as the loss. The data loss $\mathcal{L}_d$ measures the accuracy of the prediction of PINN on the training data. It calculates the mean square error between the PINN prediction and the training data point as the loss. The overall loss function $\mathcal{L}$ is the weighted sum of the physics model loss and data loss with weighting coefficients $\omega_p$ and $\omega_d$. In practice, we suggest setting a larger weight $\omega_d$ when sufficient data are available, and a smaller weight otherwise. Usually one can set both coefficients $\omega_p$ and $\omega_d$ close to 1 and increase the weight $\omega_p$ or $\omega_d$ if the physics model or data is more reliable.

The resulting PIPE framework combines MC data and the governing PDE into a PINN to learn the risk probability. The advantages of the PIPE framework include fast inference at test time, accurate estimation, and ability to generalize from the combination of data and model.

\subsection{Performance Analysis}
Here we provide performance analysis of PIPE. We first show that for standard neural networks (NNs) without physics model constraints, it is fundamentally difficult to estimate the risk probability of a longer time horizon than those generated from sampled trajectories. We then show that with the PINN, we are able to estimate the risk probability at any state for any time horizon with bounded error.
Let $\Omega$ be the state space, $\tau = [0,T_H]$ be the time domain, $\Sigma=(\partial \Omega \times[0, T_H]) \cup(\Omega \times\{0\})$ be the boundary of the space-time domain.
Denote $\D:=\Omega \times \tau$ for notation simplicity and denote $\Bar{\D}$ be the interior of $\D$.

\begin{corollary}
\label{cor:NN_worst_case_bound}
Suppose that $\D \in \mathbb{R}^{d+1}$ is a bounded domain, $u \in C^0(\bar{\D}) \cap C^2(\D)$ is the solution to the PDE of interest, and $\Tilde{u}(x,T), (x,T) \in \Sigma$ is the boundary condition. 
Let $\Sigma_s$ be a strict sub-region in $\Sigma$, and $\D_s$ be a strict sub-region in $\D$.
Consider a neural network ${F}_{\boldsymbol\theta}$ that is parameterized by 
$\boldsymbol\theta$ and has sufficient representation capabilities. For $\forall M > 0$, there can exist $\Bar{\boldsymbol\theta}$ that satisfies both of the following conditions simultaneously: 
\begin{enumerate}
    \item 
    $\sup _{(x, T) \in  \Sigma_s}|{F}_{\Bar{\boldsymbol\theta}}(x, T)-\Tilde{u}(x, T)|<\delta_1$
    \item
    $\sup _{(x, T) \in \D_s}|{F}_{\Bar{\boldsymbol\theta}}(x, T)-{u}(x, T) |<\delta_2$
\end{enumerate}
and
\begin{equation}
\label{eq:PINN_worst_case_bound}
    \sup _{(x, T) \in \D}\left|{F}_{\Bar{\boldsymbol\theta}}(x, T)-u(x, T)\right| \geq M.
\end{equation}   
\end{corollary}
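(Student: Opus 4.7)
The plan is to exhibit, for every $M>0$, an explicit bad parameter choice $\bar{\boldsymbol\theta}$ rather than argue indirectly. The idea is to construct a target function $v:\mathcal{D}\to\mathbb{R}$ that agrees with $u$ on the observed region $\Sigma_s\cup\mathcal{D}_s$ but differs from $u$ by at least $M$ on some unobserved region, and then invoke universal approximation to realize $v$ by a neural network. Since the pure-data loss has no mechanism to propagate information off $\Sigma_s\cup\mathcal{D}_s$, this construction will trivially fit the data while violating \eqref{eq:PINN_worst_case_bound}.

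First, I would pin down the geometry. Because $\Sigma_s\subsetneq \Sigma$ and $\mathcal{D}_s\subsetneq \mathcal{D}$, the complement $\mathcal{D}\setminus(\overline{\Sigma_s}\cup\overline{\mathcal{D}_s})$ is a nonempty open subset of $\bar{\mathcal{D}}$; pick a point $(x_\star,T_\star)$ in its interior and a radius $r>0$ so that the closed ball $B_r(x_\star,T_\star)$ is disjoint from $\overline{\Sigma_s}\cup\overline{\mathcal{D}_s}$ and contained in $\bar{\mathcal{D}}$. Let $\chi\in C^\infty_c(B_r(x_\star,T_\star))$ be a standard bump with $\chi(x_\star,T_\star)=1$ and $0\le\chi\le 1$. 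Define
\begin{equation*}
v(x,T) \;:=\; u(x,T) \;+\; (M+1)\,\chi(x,T),\qquad (x,T)\in \bar{\mathcal{D}}.
\end{equation*}
By construction $v\equiv u$ on $\Sigma_s\cup\mathcal{D}_s$, while $|v(x_\star,T_\star)-u(x_\star,T_\star)|=M+1$. The function $v$ is continuous on the compact set $\bar{\mathcal{D}}$.

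Next, I would invoke universal approximation: for any $\delta>0$ there exists $\bar{\boldsymbol\theta}$ such that $\sup_{(x,T)\in\bar{\mathcal{D}}}|F_{\bar{\boldsymbol\theta}}(x,T)-v(x,T)|<\delta$. Choosing $\delta<\min\{\delta_1,\delta_2,1\}$, one verifies the three claims directly: on $\Sigma_s$ we have $|F_{\bar{\boldsymbol\theta}}-\tilde u|=|F_{\bar{\boldsymbol\theta}}-v|<\delta<\delta_1$; on $\mathcal{D}_s$ we have $|F_{\bar{\boldsymbol\theta}}-u|=|F_{\bar{\boldsymbol\theta}}-v|<\delta<\delta_2$; and at $(x_\star,T_\star)\in\mathcal{D}$,
\begin{equation*}
|F_{\bar{\boldsymbol\theta}}(x_\star,T_\star)-u(x_\star,T_\star)|
\;\ge\; (M+1) - \delta \;\ge\; M,
\end{equation*}
giving \eqref{eq:PINN_worst_case_bound}. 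This would conclude the proof.

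The only nontrivial step is the geometric claim that an open ball fitting inside $\mathcal{D}\setminus(\overline{\Sigma_s}\cup\overline{\mathcal{D}_s})$ exists; this is where the hypothesis that $\Sigma_s$ and $\mathcal{D}_s$ are \emph{strict} sub-regions is essential. I expect this is the only place where one has to be slightly careful: one should state precisely what \emph{strict sub-region} means (e.g.\ there exists an open subset of $\mathcal{D}$ disjoint from $\overline{\Sigma_s}\cup\overline{\mathcal{D}_s}$), since otherwise pathological configurations such as $\mathcal{D}_s$ being dense in $\mathcal{D}$ could block the bump construction. Everything else reduces to the bump-plus-universal-approximation template, and no property of the PDE $W_F$ or of its solution $u$ beyond continuity is used—which is exactly the point: without the physics loss $\mathcal{L}_p$, the training objective is blind to unobserved regions.
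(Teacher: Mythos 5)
Your proposal is correct and follows essentially the same route as the paper: perturb $u$ by a function vanishing on the observed region but of size $M$ plus a margin elsewhere, then invoke universal approximation to realize the perturbed target within a tolerance smaller than $\min\{\delta_1,\delta_2\}$. The only difference is cosmetic but in your favor: the paper uses a distance ramp $u+\tfrac{d}{d_{\max}}(M+\delta)$ that vanishes only on $\D_s$, so verifying condition~1 on $\Sigma_s$ implicitly needs $\Sigma_s$ to lie where the ramp is negligible, whereas your compactly supported bump is disjoint from both $\overline{\Sigma_s}$ and $\overline{\D_s}$ by construction and you correctly isolate the one real hypothesis needed (that \emph{strict sub-region} leaves an open set free for the bump).
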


\begin{proof}
    See Appendix~\ref{apx:proof_pinn}.
\end{proof}

\begin{theorem}
\label{thm:full_pde_constraint}
Suppose that $\D \in \mathbb{R}^{d+1}$ is a bounded domain, $u \in C^0(\bar{\D}) \cap C^2(\D)$ is the solution to the PDE of interest, and $\Tilde{u}(x,T), (x,T) \in \Sigma$ is the boundary condition. Let ${F}_{\boldsymbol\theta}$ denote a PINN parameterized by 
$\boldsymbol\theta$. If the following conditions holds:
\begin{enumerate}
    \item $\mathbb{E}_{\mathbf{Y}}\left[|{F}_{\boldsymbol\theta}(\mathbf{Y})-\Tilde{u}(\mathbf{Y})|\right]<\delta_1$, where $\mathbf{Y}$ is uniformly sampled from $\Sigma$
    \item $\mathbb{E}_{\mathbf{X}}\left[|W_{{F}_{\boldsymbol\theta}}(\mathbf{X})|\right]<\delta_2$, where $\mathbf{X}$ is uniformly sampled from $\D$
    \item 
    $F_{\boldsymbol\theta}$, $W_{{F}_{\boldsymbol\theta}}$, $u$ are $\frac{l}{2}$ Lipshitz continuous on $\D$.
\end{enumerate} 
Then the error of ${F}_{\boldsymbol\theta}$ over $\D$ is bounded by
\begin{equation}
\label{eq:pde_constraint_bound}
    \sup _{(x,T) \in \D}\left|{F}_{\boldsymbol\theta}(x, T)-u(x, T)\right| \leq \tilde \delta_1 + C \frac{\tilde \delta_2}{\sigma^2}
\end{equation}
where $C$ is a constant depending on $\D$, $\Sigma$ and $W$, and
\begin{equation}
\begin{aligned}
    \tilde \delta_1 & = \max \left\{\frac{2 \delta_1 |\Sigma| }{R_{\Sigma}|\Sigma|}, 2 l \cdot\left(\frac{\delta_1 |\Sigma| \cdot \Gamma(d / 2+1)}{l R_{\Sigma} \cdot \pi^{d / 2}}\right)^{\frac{1}{d+1}}\right\}, \\
    \tilde \delta_2 & = \max \left\{\frac{2\delta_2 |\D|}{R_{\D}|\D|}, 2 l \cdot\left(\frac{\delta_2 |\D| \cdot \Gamma((d+1) / 2+1)}{l R_{\D} \cdot \pi^{(d+1) / 2}}\right)^{\frac{1}{d+2}}\right\},
\end{aligned}
\end{equation}
with $R_{(\cdot)}$ being the regularity of $(\cdot)$, $\|(\cdot)\|$ is the Lebesgue measure
of a set $(\cdot)$ and $\Gamma$ is the Gamma function.
\end{theorem}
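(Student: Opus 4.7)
The plan is to decompose the argument into three stages: (i) lift each expectation-type bound in Assumptions~1 and~2 to a pointwise bound using Lipschitz regularity (Assumption~3), (ii) observe that the error $e:=F_{\boldsymbol\theta}-u$ satisfies a linear inhomogeneous parabolic problem with small data, and (iii) close the estimate with a parabolic maximum-principle / barrier argument in which the diffusion coefficient $D=\zeta\zeta^\intercal$ produces the $1/\sigma^2$ factor.

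For stage (i), I rewrite Assumption~1 as $\int_{\Sigma}|F_{\boldsymbol\theta}-\tilde u|\,dS<\delta_1|\Sigma|$ and apply the standard Lipschitz-to-sup inequality: if $g$ is $(\ell/2)$-Lipschitz on $\Sigma$ and attains $|g|=M$ at some $y^\star$, then $|g|\geq M/2$ on the intersection of $\Sigma$ with the ball $B(y^\star,M/\ell)$, whose volume is at least $R_\Sigma\,\pi^{d/2}(M/\ell)^{d}/\Gamma(d/2+1)$. Matching this against $\delta_1|\Sigma|$ and taking the worse of the \emph{small-$M$} (linear) regime and the \emph{large-$M$} (sublinear) regime produces exactly the formula for $\tilde\delta_1$; hence $\sup_{\Sigma}|F_{\boldsymbol\theta}-\tilde u|\leq\tilde\delta_1$. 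The same computation in dimension $d+1$ applied to the $(\ell/2)$-Lipschitz residual $W_{F_{\boldsymbol\theta}}$ yields $\sup_{\D}|W_{F_{\boldsymbol\theta}}|\leq\tilde\delta_2$.

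For stage (ii), I exploit the linearity of the convection-diffusion operator $W$ in~\eqref{eq:safety_PDE}. Since $u$ solves $W_u=0$ on $\D$, the error satisfies the boundary value problem
\begin{equation*}
W_e \;=\; W_{F_{\boldsymbol\theta}}\ \text{in }\D,\qquad e|_{\Sigma}\;=\;F_{\boldsymbol\theta}-\tilde u\ \text{on }\Sigma,
\end{equation*}
with data controlled by $\tilde\delta_1$ and $\tilde\delta_2$ respectively.

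Finally, for stage (iii), I construct a deterministic barrier $v(x,T)\geq 0$ on $\bar\D$ with $v|_{\Sigma}\geq 0$, $Wv\geq 1$ pointwise on $\D$, and $\|v\|_\infty\leq C/\sigma^2$, where $\sigma^2>0$ is a uniform lower bound on the smallest eigenvalue of $D$. A natural choice is a quadratic (or exponential) in a single spatial coordinate whose second-order term $\frac{1}{2}\nabla\cdot(D\nabla v)$ is coerced to dominate the drift, and the resulting scaling of its amplitude is precisely $1/\sigma^2$. Applying the parabolic comparison principle to $\pm e-\tilde\delta_1-\tilde\delta_2 v$ then yields~\eqref{eq:pde_constraint_bound}. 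The main obstacle is stage (iii): a textbook parabolic maximum principle only delivers the horizon-dependent bound $\tilde\delta_1+T_H\tilde\delta_2$, so one has to craft the barrier carefully to trade $T_H$ for $1/\sigma^2$ and expose the sharp degradation of the estimate as the PDE becomes convection-dominated; the constant $C$ then absorbs the geometry of $\D$ and $\Sigma$ and the bounded coefficients of $W$.
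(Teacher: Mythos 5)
Your proposal matches the paper's proof essentially step for step: your stage (i) is the paper's Lemma~\ref{lem:Lipshitz_bound} (the Lipschitz-to-sup conversion with the regularity constant $R_{(\cdot)}$ and the two-regime maximum defining $\tilde\delta_1,\tilde\delta_2$), your stage (ii) is the linear decomposition $h_1 = L[F_{\boldsymbol\theta}]-q$, $h_2 = F_{\boldsymbol\theta}-u$ used in Theorem~\ref{thm:PINN_bound}, and your stage (iii) is exactly the paper's Theorem~\ref{thm:parabolic_PDE_bound}, which builds the time-independent barrier $v=\sup_\Sigma u^+ + (e^{\alpha d}-e^{\alpha x_1})\sup_\D|q^-|/\lambda$ in a single spatial coordinate and applies the weak maximum/comparison principle, with $\lambda=\tfrac12\sigma^2$ producing the $1/\sigma^2$ factor. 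The proof is correct and takes the same route as the paper.
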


\begin{proof}
    See Appendix~\ref{apx:proof_pinn}.
\end{proof}


Corollary~\ref{cor:NN_worst_case_bound} says that standard NN can give arbitrarily inaccurate prediction due to insufficient physical constraints. This explains why risk estimation problems cannot be handled solely on fitting sampled data. 
Theorem~\ref{thm:full_pde_constraint} says that when the PDE constraint is imposed on the full space-time domain, the prediction of the PINN has bounded error.


Also, we present the following theorem based on~\cite{shin2020convergence} to show that the PINN converges to the solution of the safety-related PDE when the number of training data increases under some additional regularity assumptions.
\begin{assumption}
\label{asp:dynamics}
    The system dynamics $f$, $g$ and $\sigma$ in~\eqref{eq:x_trajectory} are bounded continuous functions. Also, for any $x \in \mathcal{X}$, $\text{eig}(\sigma(x)) \neq 0$ and $\sigma^{11}(x) > 0$, \ie all eigenvalues of $\sigma$ is non-zero and the first row first column entry of the noise magnitude is positive.
\end{assumption}
\begin{assumption}
\label{asp:domain}

    Let $D \in \mathbb{R}^{d+1}$ and $\Gamma \in \mathbb{R}^{d}$ be the state-time space of interest and a subset of its boundary, respectively.
    Let $\mu_r$ and $\mu_b$ be probability distributions defined on $D$ and $\Gamma$.
    Let $\rho_r$ be the probability density of $\mu_r$ with respect to $(d+1)$-dimensional Lebesgue measure on $D$. Let $\rho_b$ be the probability density of $\mu_b$ with respect to $d$-dimensional Hausdorff measure on $\Gamma$.
    We assume the following conditions.
    \begin{enumerate}

    \item $D$ is at least of class $C^{0,1}$.

    \item  $\rho_r$ and $\rho_b$ are supported on $D$ and $\Gamma$, respectively. Also, $\inf _D \rho_r>0$ and $\inf _{\Gamma} \rho_b>0$.
    
    \item For $\epsilon>0$, there exists partitions of $D$ and $\Gamma,\left\{D_j^\epsilon\right\}_{j=1}^{K_r}$ and $\left\{\Gamma_j^\epsilon\right\}_{j=1}^{K_b}$ that depend on $\epsilon$ such that for each $j$, there are cubes $H_\epsilon\left(\mathbf{z}_{j, r}\right)$ and $H_\epsilon\left(\mathbf{z}_{j, b}\right)$ of side length $\epsilon$ centered at $\mathbf{z}_{j, r} \in D_j^\epsilon$ and $\mathbf{z}_{j, b} \in \Gamma_j^\epsilon$, respectively, satisfying $D_j^\epsilon \subset H_\epsilon\left(\mathbf{z}_{j, r}\right)$ and $\Gamma_j^\epsilon \subset H_\epsilon\left(\mathbf{z}_{j, b}\right)$.

    \item There exists positive constants $c_r, c_b$ such that $\forall \epsilon>0$, the partitions from the above satisfy $c_r \epsilon^d \leq \mu_r\left(D_j^\epsilon\right)$ and $c_b \epsilon^{d-1} \leq \mu_b\left(\Gamma_j^\epsilon\right)$ for all $j$.
    There exists positive constants $C_r, C_b$ such that $\forall x_r \in D$ and $\forall x_b \in \Gamma, \mu_r\left(B_\epsilon\left(x_r\right) \cap D\right) \leq C_r \epsilon^d$ and $\mu_b\left(B_\epsilon\left(x_b\right) \cap \Gamma\right) \leq C_b \epsilon^{d-1}$ where $B_\epsilon(x)$ is a closed ball of radius $\epsilon$ centered at $x$.
    Here $C_r, c_r$ depend only on $\left(D, \mu_r\right)$ and $C_b, c_b$ depend only on $\left(\Gamma, \mu_b\right)$.

    \item When $d=1$, we assume that all boundary points are available. Thus, no random sample is needed on the boundary.
        
    \item For $x^{\prime} \in \partial D$, there exists a closed ball $\bar{B}$ in $\mathbb{R}^d$ such that $\bar{B} \cap \bar{D}=\left\{x^{\prime}\right\}$.
    \end{enumerate}
\end{assumption}
Note that Assumption~\ref{asp:dynamics} is not strict for most 
systems.\footnote{the noise magnitude requirement can be easily satisfied by rearranging the sequence of the state if the first state has zero noise.} 
\rev{
Assumption~\ref{asp:domain} guarantees that random samples drawn from probability distributions can fill up both the interior of the domain $D$ and the boundary $\partial D$. These are mild assumptions and can be satisfied in many practical cases. For example, let $D=(0,1)^d$, then the uniform probability distributions on both $D$ and $\partial D$ satisfy Assumption~\ref{asp:domain}.
}
\begin{theorem}
\label{thm:pinn_convergence}
    Suppose Assumption~\ref{asp:dynamics} holds, and $\D \in \mathbb{R}^{d+1}$ is a bounded domain that satisfies Assumption~\ref{asp:domain}, $u \in C^0(\bar{\D}) \cap C^2(\D)$ is the solution to the PDE of interest.
    Let $m_r$ be the number of training data in $D$. Let $F_{m_r}$ be the physics-informed neural networks that minimize the losses~\eqref{eq:PINN overall loss function} with $m_r$ training data. Assume for any $m_r$, architecture-wise the physics-informed neural network has enough representation power to characterize $u$, then
    \begin{equation}
        \lim _{m_r \rightarrow \infty} F_{m_r}=u.
    \end{equation}
\end{theorem}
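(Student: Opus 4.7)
The plan is to follow the convergence framework of \cite{shin2020convergence}, adapted to the convection-diffusion PDEs $W_F = 0$ established in Theorems~\ref{thm:InvariantProbability_MainTheorem1}--\ref{thm:ConvergenceProbability_MainTheorem4}. The argument decomposes into three pieces: (i) showing that the empirical loss $\mathcal{L}(\boldsymbol\theta_{m_r})$ at the minimizer can be made arbitrarily small; (ii) converting the empirical loss into continuous $L^2$ norms of the PDE residual and of the boundary/initial mismatch; and (iii) invoking a stability estimate for the PDE to turn those continuous residuals into a uniform error bound on $F_{m_r} - u$. Chaining these steps drives $\|F_{m_r} - u\|_{L^\infty(\D)} \to 0$, which is exactly the claim.

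For step (i), the representation-power assumption implies that for every $\epsilon > 0$ there exists $\boldsymbol\theta_\epsilon$ with $\|F_{\boldsymbol\theta_\epsilon} - u\|_{C^2(\bar{\D})} < \epsilon$, which forces both $\mathcal{L}_p(\boldsymbol\theta_\epsilon)$ and $\mathcal{L}_d(\boldsymbol\theta_\epsilon)$ to be small since $u$ exactly satisfies the PDE and matches the boundary data; by optimality the loss at $\boldsymbol\theta_{m_r}$ is no larger. For step (ii), I would use Assumption~\ref{asp:domain}, in particular the partition property and the density lower bounds on $\mu_r$ and $\mu_b$, to run a H\"older / covering-number argument analogous to the one in \cite{shin2020convergence}: equicontinuity of $F_{\boldsymbol\theta}$ and of the residual $W_{F_{\boldsymbol\theta}}$ on $\bar{\D}$, combined with the side-length-$\epsilon$ cubes supplied by parts (3)--(4) of Assumption~\ref{asp:domain}, lets one bound the continuous integrals $\int_{\D}|W_{F_{m_r}}|^2\,\diff\mu_r$ and $\int_{\Sigma}|F_{m_r}-\tilde u|^2\,\diff\mu_b$ by their empirical averages plus a quadrature term that vanishes as $m_r \to \infty$.

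The hard step is (iii), the stability estimate for the four convection-diffusion PDEs \eqref{eq:CauchyProblem},~\eqref{eq:MainTheorem2},~\eqref{eq:pde_thm3} and \eqref{eq:PDE_D}. These live on the augmented state $z = [\phi(x), x^\intercal]^\intercal$ with diffusion matrix $D = \zeta\zeta^\intercal$, which is a priori only positive semi-definite because the first row of $\zeta$, namely $\nabla\phi(x)^\intercal\sigma(x)$, is a linear combination of the remaining rows. Assumption~\ref{asp:dynamics} is exactly what rules out pathological degeneracy in the direction normal to the level-set boundary $\{z[1]=\ell\}$: the requirements that all eigenvalues of $\sigma$ are non-zero and that $\sigma^{11}(x) > 0$ ensure non-degeneracy of the diffusion coefficient governing $\phi(X_t)$, so classical $L^2$-to-$L^\infty$ parabolic estimates (or equivalently the Feynman--Kac representation implicit in Theorems~\ref{thm:InvariantProbability_MainTheorem1}--\ref{thm:ConvergenceProbability_MainTheorem4}) yield a bound of the form
\begin{equation*}
\|F_{m_r} - u\|_{L^\infty(\D)} \lesssim \|W_{F_{m_r}}\|_{L^2(\D)} + \|F_{m_r} - \tilde u\|_{L^2(\Sigma)}.
\end{equation*}

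The main obstacle is making this stability bound rigorous, since the coefficients $\rho$ and $D$ need not be smooth globally and the boundary is a level set of $\phi$ rather than an a priori smooth manifold. I plan to handle this by localizing on compact subsets of $\bar{\D}$ and exploiting the boundedness and continuity of $f, g, \sigma$ from Assumption~\ref{asp:dynamics} to obtain the Schauder-type estimates underlying the convergence theorem of \cite{shin2020convergence}. Once (iii) is in place, combining (i)--(iii) and letting $m_r \to \infty$ sends both residuals to zero and delivers the claimed limit $F_{m_r} \to u$.
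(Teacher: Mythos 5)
Your overall strategy matches the paper's: both arguments reduce the claim to the convergence theorem of \cite{shin2020convergence}. The difference is one of packaging --- the paper simply verifies the hypotheses of that theorem (its Assumptions 3.1, 3.2, 3.4) against Assumption~\ref{asp:dynamics}, Assumption~\ref{asp:domain}, and the representation-power hypothesis, and then invokes \cite[Theorem 3.6]{shin2020convergence} as a black box, whereas you re-derive its three-step skeleton (loss consistency at the minimizer, quadrature from empirical to continuous residuals, PDE stability). Your steps (i) and (ii) are sound and correspond exactly to what the cited theorem already packages.

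The one place your argument would not go through as written is step (iii). You correctly observe that $D = \zeta\zeta^\intercal$ on the augmented space $\mathbb{R}^{n+1}$ is only positive semi-definite, but your proposed fix --- that Assumption~\ref{asp:dynamics} restores non-degeneracy ``in the direction normal to the level-set boundary'' --- does not work: since the first row of $\zeta$ is $\nabla\phi^\intercal\sigma$, a linear combination of the remaining rows, $\zeta\zeta^\intercal$ is singular at every point regardless of $\sigma$, so no uniform parabolic stability estimate is available on the augmented domain. The resolution, implicit in the paper's verification, is that $F$ is a function of $(x,T)$ alone because $\phi(x)$ is determined by $x$; the PDE actually imposed in the PINN loss is posed on the original state space $\{x:\phi(x)\geq\ell\}\times(0,T_H]$ with diffusion $A=\tfrac{1}{2}\sigma\sigma^\intercal$ and drift $f+gK$, where Assumption~\ref{asp:dynamics} (nonzero eigenvalues of $\sigma$ and $\sigma^{11}>0$) does yield uniform parabolicity and the structural condition $\theta^2 a^{11}+\theta\tilde b^1\geq 1$ required by \cite{shin2020convergence}. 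With the stability step carried out in those coordinates rather than on the augmented state, your plan closes and coincides with the paper's proof.
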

\begin{proof}
    See Appendix~\ref{apx:proof_pinn_cont}.
\end{proof}

Till here, we have shown error bounds and convergence properties of the PIPE framework.
For approximation error of PINNs in terms of neural network sizes (number of layers and number of neurons per layer), previous works on universal approximation theorem have shown that sufficiently large PINNs can approximate PDEs uniformly~\cite{sirignano2018dgm, grohs2023proof, darbon2020overcoming, darbon2021some}.

\section{Experiments}
\label{sec:experiments}
We conduct four experiments to illustrate the efficacy of the proposed method. The system dynamics of interest is~\eqref{eq:x_trajectory}
with $X \in \mathbb{R}$, $f(X) \equiv \lambda$, $g(X) \equiv 0$ and $\sigma(X) \equiv 1$. The system dynamics become
\begin{equation}
\label{eq:experiment system dynamics}
    dX_t = \lambda \: dt + dW_t.
\end{equation}
The safe set is defined as~\eqref{eq:safe_region} with $\phi(x) = x-2$. The state-time region of interest is $\Omega \times \tau = [-10,2] \times [0,10]$.
For risk probability, we consider the recovery probability of the system from initial state $x_0 \notin \mathcal{C}$ outside the safe set. Specifically, from section~\ref{sec:safety_thm} we know that the risk probability $F$ is characterized by the solution of the following convection diffusion equation
\begin{equation}
\label{eq:cdc recovery pde}
    \frac{\partial F}{\partial T}(x, T) 
    = \lambda \frac{\partial F}{\partial x}(x, T) + \frac{1}{2} \operatorname{tr}\left( \frac{\partial^{2} F}{\partial x^{2}}(x, T) \right),
\end{equation}
with initial condition $F(x,0) = \mathds{1}(x \geq 2)$ and boundary condition $F(2,T) = 1$.
We choose this system because we have the analytical solution of~\eqref{eq:cdc recovery pde} as ground truth for comparison, as given by
\begin{equation}
    F(x,T) = \int_0^T \frac{(2-x)}{\sqrt{2 \pi t^{3}}} \exp \left(-\frac{\left((2-x)-\lambda t\right)^2}{2 t}\right) dt.
\end{equation}
The empirical data of the risk probability is acquired by running MC with the system dynamics~\eqref{eq:x_trajectory} with initial state $x=x_0$ multiple times, and calculate ratio of trajectories with recovered state, \ie 
\begin{equation}
    \bar{F}(x,T) = \pr(\exists t \in [0,T], x_t \in \mathcal{C} \mid x_0 = x) = \frac{N_{\text{recovery}}}{N},
\end{equation}
where $N_{\text{recovery}}$ is the number of trajectories that the state recovers to the safe set during time horizon $[0,T]$, and $N$ is the total number of sample trajectories and is a tunable parameter that affects the accuracy of the estimated risk probability. 
Specifically, larger $N$ gives more accurate estimation.

In all experiments, we use PINN with 3 hidden layers and 32 neurons per layer. The activation function is chosen as hyperbolic tangent function ($\tanh$). We use Adam optimizer~\cite{kingma2014adam} for training with initial learning rate set as $0.001$. The PINN parameters $\boldsymbol\theta$ is initialized via Glorot uniform initialization. The weights in the loss function~\eqref{eq:PINN overall loss function} are set to be $\omega_p = \omega_d = 1$. We train the PINN for 60000 epochs in all experiments. The simulation is constructed based on the DeepXDE framework~\cite{lu2021deepxde}. 
Experiment details, simulation results on higher-dimensional systems, and applications to stochastic safe control can be found in the 
Appendix.
Code is available at~\href{https://github.com/jacobwang925/PIPE-L4DC}{https://github.com/jacobwang925/PIPE-L4DC}.


\begin{figure}
    \centering
    \includegraphics[width=0.43\textwidth]{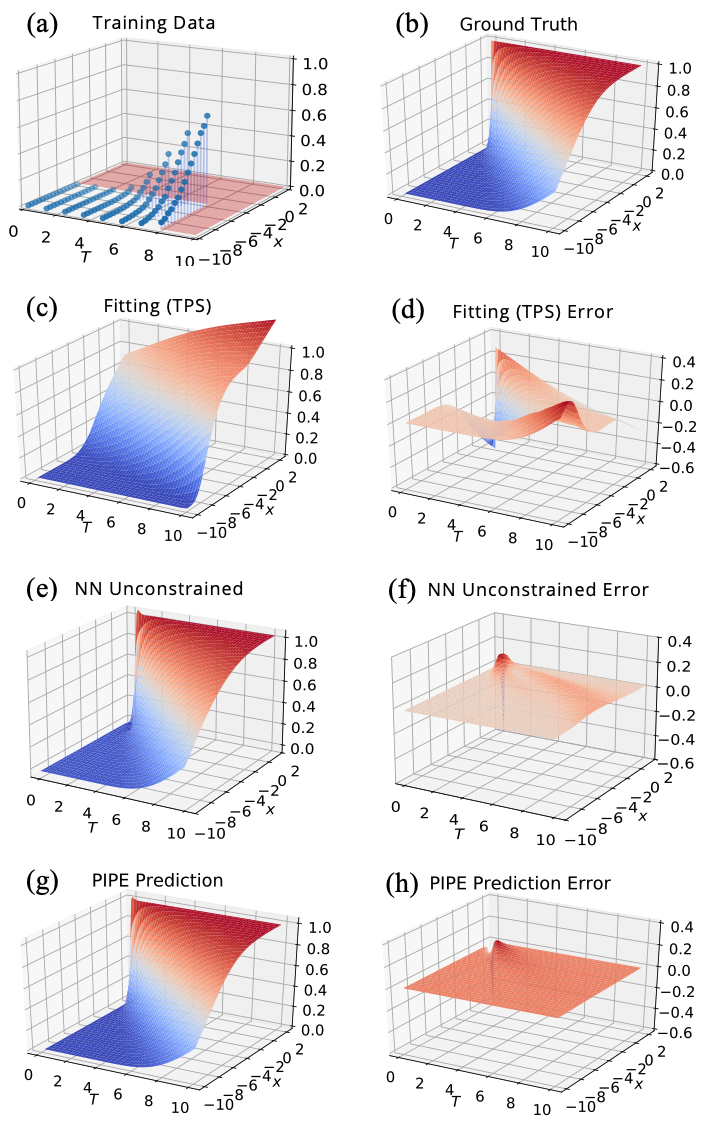}
    \caption{Settings and results of the risk probability generalization task. Note that shaded area in (a) in the spatial-temporal space are not covered by training data. MC with fitting (TPS), neural network without PDE constraint and the proposed PIPE framework are compared. The average absolute error of prediction is $9.2 \times 10^{-2}$ for TPS, and $1.5 \times 10^{-2}$ for neural network without PDE constraints, $0.3 \times 10^{-2}$ for PIPE.}
    \label{fig:generalization}
\end{figure}


\subsection{Generalization to unseen regions}
\label{sec:generalization}
In this experiment, we test the generalization ability of PIPE to unseen regions of the state-time space. We consider system~\eqref{eq:experiment system dynamics} with $\lambda=1$. We train the PINN with data only on the sub-region of the state-time space $\Omega \times \tau = [-10,-4] \times [0,8]$, but test the trained PINN on the full state-time region $\Omega \times \tau = [-10,2] \times [0,10]$.
The training data is acquired through MC with sample trajectory number $N = 1000$, and is down-sampled to $dx = 0.4$ and $dT = 1$. For comparison, we consider MC with fitting and neural networks without the PDE constraint. 
Among all fitting methods (\eg cubic spline and polynomial fitting), thin plate spline (TPS) performs the best and is used for presentation.
Fig.~\ref{fig:generalization} visualizes the training data samples and shows the results. The spline fitting and the standard neural network do not include any physical model constraint, thus fail to accurately generalize to unseen regions in the state space. In contrast, PIPE can infer the risk probability value very accurately in the entire state-temporal space due to the global enforcement of physics loss, which constrains the prediction to satisfy the PDE models we derived in section~\ref{sec:safety_thm}. 

\subsection{Efficient estimation of risk probability}
\label{sec:estimation}
In this experiment, we show that PIPE will give more efficient estimations of risk probability in terms of accuracy and sample number compared to MC and its variants. We consider system~\eqref{eq:experiment system dynamics} with $\lambda=1$. The training data is sampled on the state-time space $\Omega \times \tau = [-10,2] \times [0,10]$ with $dx = 0.2$ and $dT=0.1$.
We compare the risk probability estimation error of PIPE and MC, on two regions in the state-time space:
\begin{enumerate}
    \item Normal event region: $\Omega \times \tau = [-6,-2] \times [4,6]$, where the average probability is $0.412$.
    \item Rare event region: $\Omega \times \tau = [-2,0] \times [8,10]$, where the average probability is $0.985$.
\end{enumerate}
For fairer comparison, we use a uniform filter of kernel size $3$ on the MC data to smooth out the noise, as the main cause of inaccuracy of MC estimation is sampling noise. Fig.~\ref{fig:PINN MC log plot} shows the percentage errors of risk probability inference under different MC sample numbers $N$.
As the sample number goes up, prediction errors for all three approaches decrease. The denoised MC has lower error compared to standard MC as a result of denoising, and their errors tend to converge since the sampling noise contributes less to the error as the sample number increases. On both rare events and normal events, PIPE yields more accurate estimation than MC and denoised MC across all sample numbers. This indicates that PIPE has better sample efficiency than MC and its variants, as it requires less sample data to achieve the same prediction accuracy.
This desired feature of PIPE is due to the fact that it incorporates model knowledge into the MC data to further enhance its accuracy by taking the physics-informed neighboring relationships of the data into consideration.
\begin{figure}
    \centering
    \includegraphics[width=0.4\textwidth]{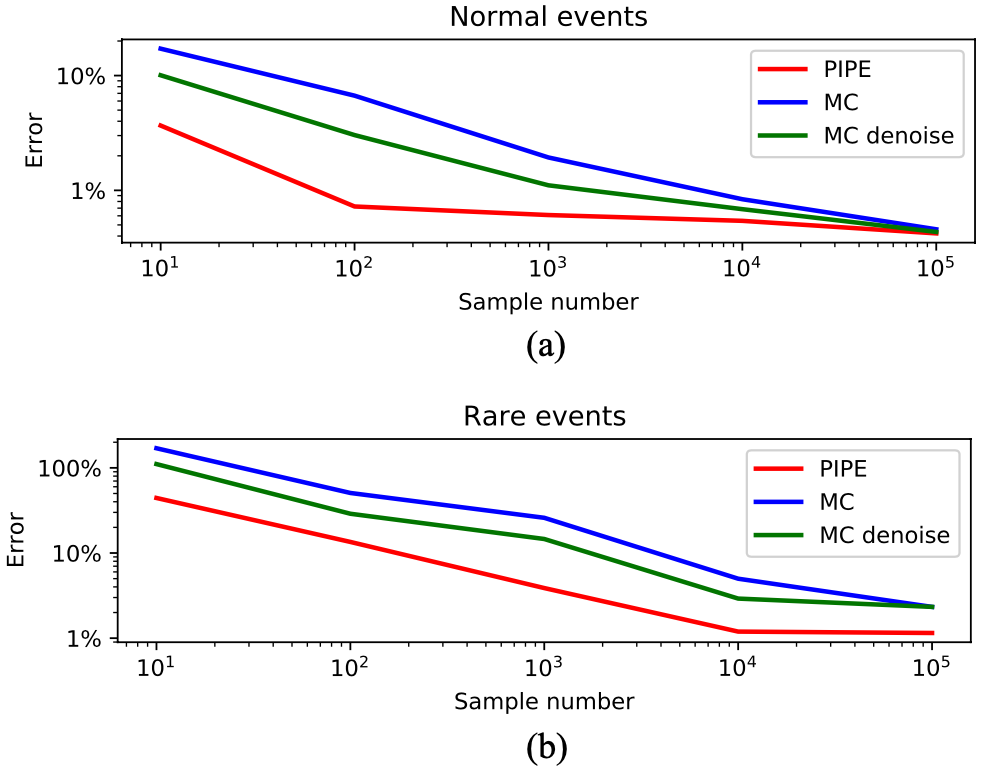}
    \caption{Percentage error of risk probability estimation for different MC sample numbers for (a) rare events and (b) normal events. PIPE, MC and denoised MC with uniform kernel filtering are compared. Both error and sample number are in log scale.}
    \label{fig:PINN MC log plot}
\end{figure}

\begin{figure}
    \centering
    \includegraphics[width=0.43\textwidth]{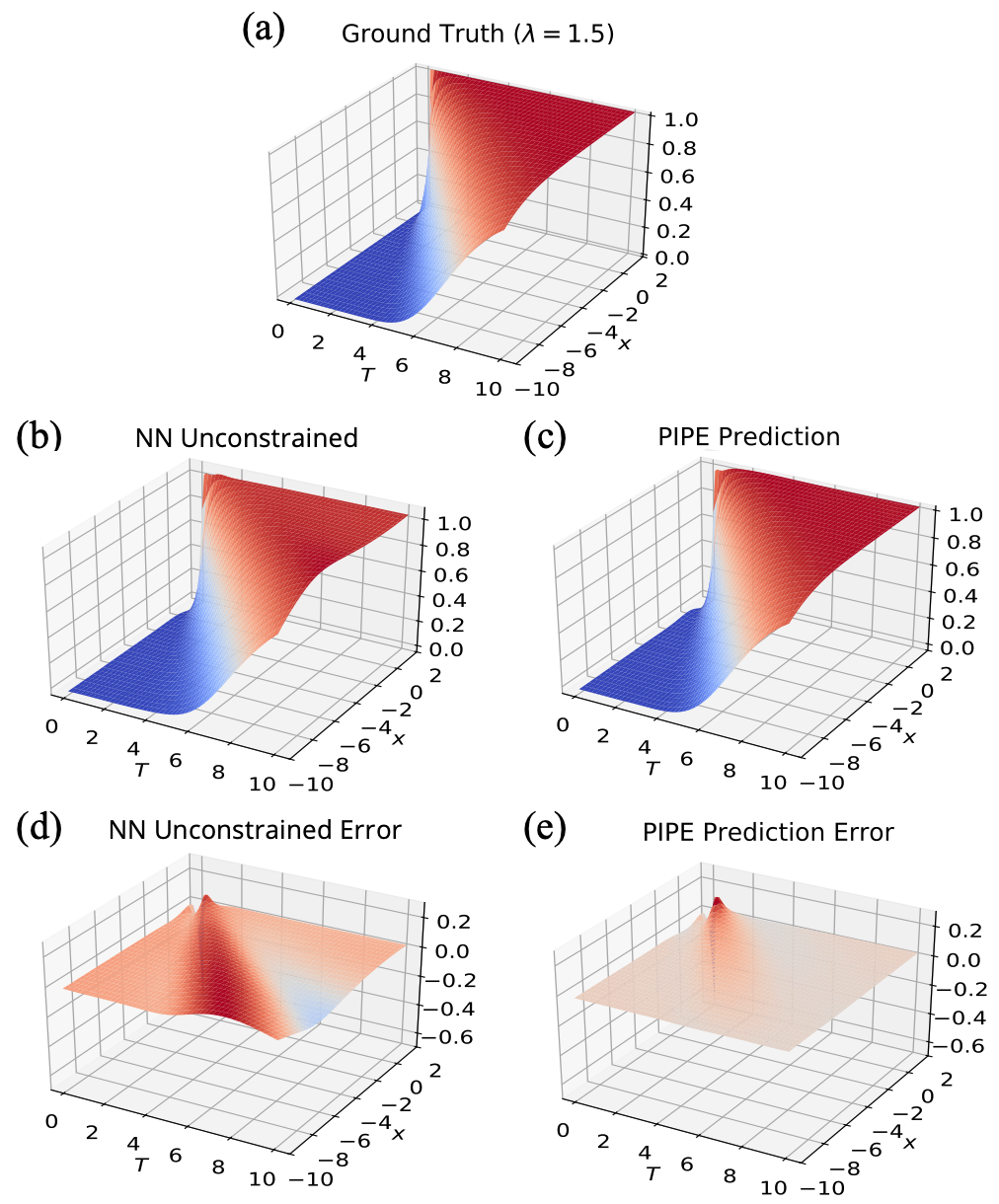}
    \caption{Risk probability prediction of unconstrained neural network and PIPE on unseen system parameters $\lambda_{\text{test}} = 1.5$. The average absolute prediction error across all $\lambda_{\text{test}} = [0.3, 0.7, 1.2, 1.5, 2]$ is $2.23 \times 10^{-2}$ for unconstrained neural network, and $0.60 \times 10^{-2}$ for PIPE.}
    \label{fig:varying parameter}
\end{figure}

\rev{
The computation time for different methods are reported in Table~\ref{tab:method_timing}. It can be seen that, although PIPE requires offline training, it offers fast online inference, enabling real-time evaluation for risk probabilities. In addition, the training time is independent of the precision of the training data. MC has no training phase, but incurs a significant online cost when the number of samples is high to achieve the desired accuracy. 
}

\rev{
\begin{table}
\centering
\begin{tabular}{|l|c|c|}
\hline
\textbf{Method} & \textbf{Offline (Training)} & \textbf{Online (Inference)} \\
\hline
PIPE (GPU, L4) & 62.26 s  & 0.0013 s \\
PIPE (CPU)     & 262.79 s & 0.0012 s \\
MC ($10^1$ samples) & N/A      & 0.0026 s \\
MC ($10^5$ samples) & N/A      & 2.53 s \\
\hline
\end{tabular}
\caption{\rev{Comparison of computation time for different methods. The inference time is calculated for evaluating risk probability of one initial state with fixed time horizon $T = 10$.}}
\label{tab:method_timing}
\end{table}
}

\subsection{Adaptation on changing system parameters}
\label{sec:adaptation}
In this experiment, we show that PIPE will allow generalization to uncertain parameters of the system. We consider system~\eqref{eq:experiment system dynamics} with varying $\lambda\in [0,2]$. 
We use MC data with sample number $N=10000$ for a fixed set of $\lambda_{\text{train}} = [0.1,0.5,0.8,1]$ for training, and test PIPE after training on $\lambda_{\text{test}} = [0.3, 0.7, 1.2, 1.5, 2]$. 
Fig.~\ref{fig:varying parameter} visualizes the results testing instance $\lambda_{\text{test}} = 1.5$.
We can see that PIPE is capable of accurately predicting risk probability for systems with unseen and out-of-distribution parameters beyond training, while a neural network without PDE constraints fails to do so. In the prediction error plot, the only region where PIPE has larger prediction error is at $T=0$ and $x \in \partial \mathcal{C}$ on the boundary of the safe set. This is because the risk probability at this point is not well defined (it can be either $0$ or $1$), and this point will not be considered in a control scenario as we are always interested in long-term safety where $T \gg 0$. This adaptation feature of the PIPE framework indicates its potential use on stochastic safe control with uncertain system parameters, and it also opens the door for physics-informed learning on a family of PDEs. In general, PDEs with different parameters can have qualitatively different behaviors, so is hard to generalize. The control theory model allows us to have a sense when the PDEs are qualitatively similar with different parameters, and thus allows generalization within the qualitatively similar cases. 

\subsection{Estimating the gradient of risk probability}
\label{sec:gradient}
In this experiment, we show that PIPE is able to generate accurate gradient predictions of risk probabilities. We consider system~\eqref{eq:experiment system dynamics} with $\lambda=1$. Similar to the generalization task, we train the PINN with MC data of $N=1000$ on the sub-region $\Omega \times \tau = [-10,-4] \times [0,8]$ and test the trained PINN on the full state-time region $\Omega \times \tau = [-10,2] \times [0,10]$. We then take the finite difference of the risk probability with regard to the state $x$ to calculate its gradient, for ground truth $F$, MC estimation $\bar{F}$ and PIPE prediction $\hat{F}_{\boldsymbol\theta}$ with and without PDE constraints.\footnote{Note that for PIPE, automatic differentiation~\cite{baydin2018automatic} can also be used to obtain risk probability gradients, and we did not find qualitative difference with the finite difference methods for this specific experiment.} Fig.~\ref{fig:gradient} shows the results. It can be seen that the MC gradient estimate is rather noisy, and the unconstrained NN does not provide accurate gradient estimation beyond the region of training samples. In contrast, PIPE gives a much more accurate gradient estimation, because it incorporates the physics model information inside the training process. It is also worth noting that PIPE does not use any governing laws of the risk probability gradient during training, and by considering the risk probability PDE alone, it can provide very accurate estimations of the gradient. The results indicate that PIPE can enable the usage of a lot of first- and higher-order stochastic safe control methods online, by providing accurate and fast estimation of the risk probability gradients.

\begin{figure}
    \centering
    \includegraphics[width=0.43\textwidth]{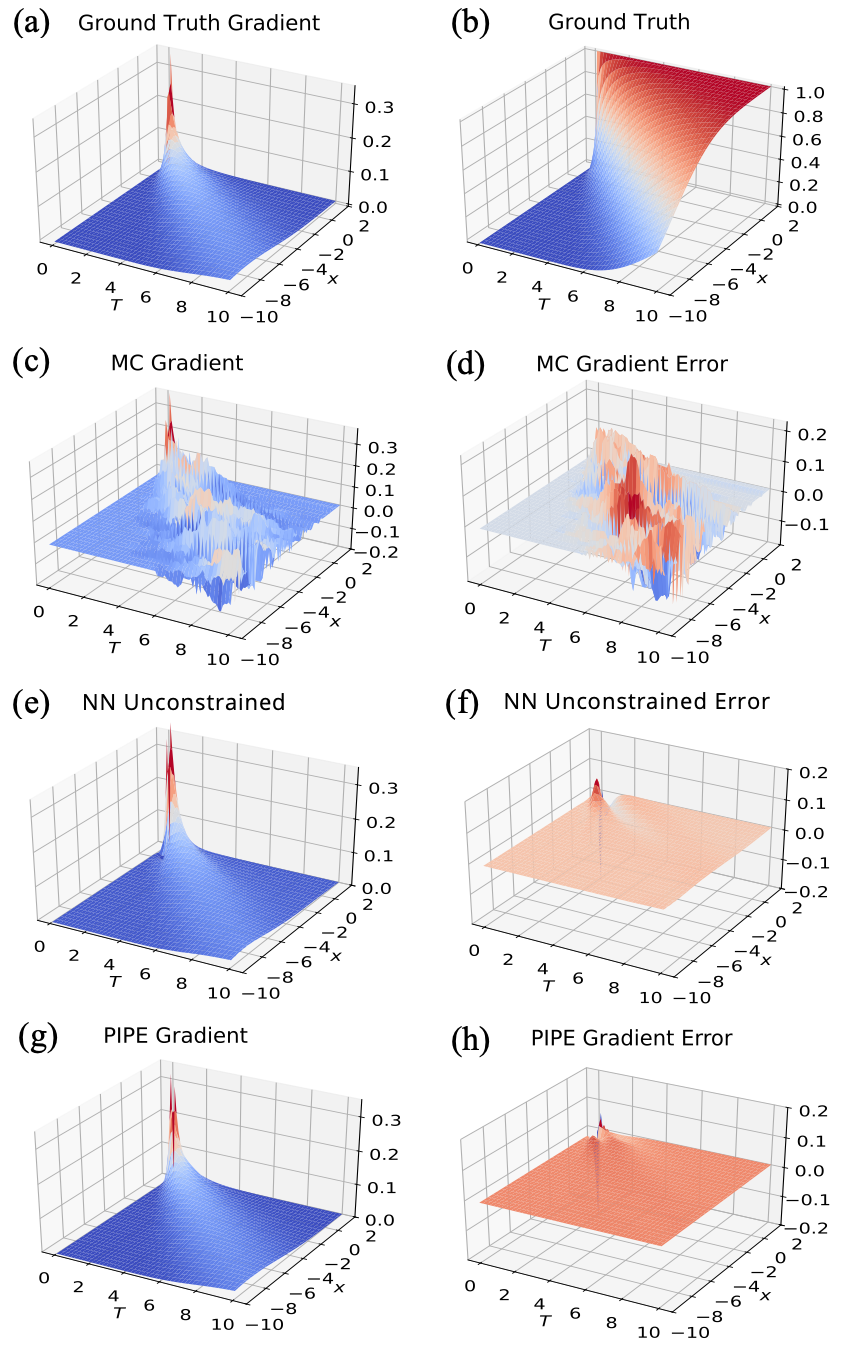}
    \caption{Gradient of the risk probability prediction of MC, unconstrained NN and PIPE. The average absolute error of gradient prediction is $2.78 \times 10^{-2}$ for MC, $0.17 \times 10^{-2}$ for NN without constraints, and $0.06 \times 10^{-2}$ for PIPE.}
    \label{fig:gradient}
\end{figure}

\section{Conclusion}
\label{sec:conclusion}
In this paper, we propose physics-informed probability estimator (PIPE), a framework to efficiently and accurately estimate long-term risk probabilities in stochastic systems. We first characterize the exact probability distributions of the minimum and maximum barrier function values in a time interval and the first entry and exit times to and from its super level sets. The distributions of these variables are then used to characterize the probability of safety and recovery, the safety margin, and the mean and tail distributions of the failure and recovery times into the solution of certain partial differential equations (PDEs).
We then propose a physics-informed learning approach to solve the corresponding PDEs and efficiently estimate the risk probabilities. The method combines data from rare event simulation and the underlying governing PDE we derived, to accurately learn the risk probability as well as its gradient on the entire state-temporal domain of interest. Experiments show better sample efficiencies of the physics-informed learning framework compared to MC, and the ability to generalize to unseen regions in the state-temporal space beyond training. The framework is also robust to uncertain parameters in the system dynamics, and can infer risk probability values of a class of systems with training data only from a fixed number of systems, which provides key foundations for first- and higher-order methods for stochastic safe control. 
Future work includes extending the current framework to inverse learning~\cite{raissi2019physics} and Bayesian learning~\cite{yang2021b} settings, to account for systems with unknown dynamics where the PDEs for risk characterization are uncertain, and to real-world high-dimensional systems~\cite{hoshino2023scalable, wang2024physics}.

\bibliography{citation}

\newpage

\appendix

\section{Proofs}

\subsection{Proof of Theorem~\ref{thm:InvariantProbability_MainTheorem1} and Theorem~\ref{thm:ConvergenceProbability_MainTheorem3}}
\label{sec:Proof_theorem1_3}
The proof of Theorem~\ref{thm:InvariantProbability_MainTheorem1} and Theorem~\ref{thm:ConvergenceProbability_MainTheorem3} is based on the lemmas presented below. These lemmas relate the distributions of various functionals of a diffusion process \(X_t\in\R^n, t\in\R_+\) with deterministic convection-diffusion equations. Let \(X_t\in\R^n, t\in\R_+\) be a solution of the SDE
\begin{equation}\label{eq:Wiener_drift_diffusion}
    d X_{t}=\mu(X_t) d t+\sigma(X_t) d W_{t},
\end{equation}
We assume sufficient regularity in the coefficients of~\eqref{eq:Wiener_drift_diffusion} such that \eqref{eq:Wiener_drift_diffusion} has a unique solution, similar to the assumptions on~\eqref{eq:x_trajectory}.

We first present a variant of the Feynman--Kac Representation~\cite[Theorem 1.3.17]{pham2009continuous}.

\begin{lemma}[Feynman--Kac Representation]\label{thm:Invariant Probability}
Consider the diffusion process~\eqref{eq:Wiener_drift_diffusion} with $X_0=x \in \R^n$. Let $V:\R^n\times \R_+ \to \R$, $\beta: \R^n\times \R_+\to \R$ and \(\psi\colon\R^n\to\R\)
be some given functions. The solution of the Cauchy problem 
\begin{align}
\begin{cases}
    \frac{\partial \lemmaF}{\partial T} 
    =  
    \textstyle\frac{1}{2}\tr(\sigma\sigma^\intercal\Hess{\lemmaF}) + \gL_\mu\lemmaF
    - V\lemmaF + \beta,
 &T>0,
    \\
    \lemmaF(x,0) = \psi(x) 
\end{cases}
\end{align}
is given by
\begin{align}
\begin{split}
\lemmaF(x,T) &= \mE_x\Biggl[
\textstyle
e^{-\int_0^T V(X_t,T-t)\, dt}
\psi(X_T)\\
&\left. + 
\int_0^T
\textstyle e^{-\int_0^\tau V(X_t,T-t)\, dt}
\beta(X_\tau,T-\tau)\, d\tau \right].
\end{split}
\end{align}
\end{lemma}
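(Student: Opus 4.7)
The plan is to execute the classical Feynman--Kac martingale argument, adapted here to the presence of the source term $\beta$ and the explicit time dependence in $V$ and $\beta$. The core idea is to absorb the potential $V$ into an exponential discount factor, apply It\^o's lemma to a time-reversed composition of $\lemmaF$ with the diffusion $X_t$, and use the Cauchy problem to collapse the resulting $ds$-drift down to a pure $\beta$-term.

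Concretely, for $s\in[0,T]$ I would introduce the auxiliary process
\begin{equation*}
M_s := \exp\!\Bigl(-\int_0^s V(X_t, T-t)\,dt\Bigr)\,\lemmaF(X_s, T-s),
\end{equation*}
so that $M_0 = \lemmaF(x,T)$ and $M_T = \exp\!\bigl(-\int_0^T V(X_t,T-t)\,dt\bigr)\,\psi(X_T)$. Applying It\^o's lemma to $\lemmaF(X_s, T-s)$, noting that $\tfrac{d}{ds}(T-s)=-1$ flips the sign of the $\partial_T \lemmaF$-term, and combining with the product rule for the exponential factor, the $ds$-drift of $M_s$ reads
\begin{equation*}
e^{-\int_0^s V\,dt}\Bigl(-\partial_T \lemmaF + \gL_\mu \lemmaF + \tfrac{1}{2}\tr(\sigma\sigma^\intercal\,\Hess{\lemmaF}) - V\lemmaF\Bigr)(X_s, T-s)\,ds,
\end{equation*}
and the Cauchy problem forces the bracketed quantity to equal $-\beta(X_s, T-s)$. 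Consequently
\begin{equation*}
dM_s = -e^{-\int_0^s V\,dt}\,\beta(X_s, T-s)\,ds + e^{-\int_0^s V\,dt}\,(\grad{\lemmaF})^\intercal \sigma(X_s)\,dW_s.
\end{equation*}

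Integrating from $0$ to $T$, rearranging, and taking $\mE_x[\cdot]$ of both sides so that the It\^o integral drops out, I recover
\begin{equation*}
\lemmaF(x,T) = \mE_x\!\Bigl[e^{-\int_0^T V\,dt}\psi(X_T) + \int_0^T e^{-\int_0^\tau V\,dt}\,\beta(X_\tau, T-\tau)\,d\tau\Bigr],
\end{equation*}
which is the claimed representation. The main obstacle is justifying that the It\^o integral has zero mean, i.e.\ upgrading the argument from local martingale to true martingale. I would handle this by a standard localization: stop at $\tau_n := \inf\{s : |X_s|\geq n\}\wedge T$, apply optional stopping to the localized process $M^{\tau_n}$, and pass to the limit $n\to\infty$ via dominated convergence, using non-explosiveness of $X_t$ guaranteed by the SDE regularity invoked in footnote~\ref{ft:FN-Solution} together with standard polynomial-growth or boundedness assumptions on $\psi$, $\beta$, $V$, $\lemmaF$, and $\grad{\lemmaF}$. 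A secondary, purely bookkeeping, concern is the time-reversal sign flip: evaluating $\lemmaF$ at argument $T-s$ (rather than $s$) is precisely what makes the $\partial_T \lemmaF$-term produced by It\^o cancel against the $\partial_T \lemmaF$-term in the PDE, leaving only $-\beta$ in the drift.
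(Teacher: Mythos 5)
Your proof is correct, but it is worth noting that the paper does not actually prove this lemma at all: it invokes it as a known result, citing Theorem 1.3.17 of Pham's book, and only remarks that its statement runs time forward (in $T$) rather than backward (in $t$) relative to the cited version. What you have written is precisely the standard martingale argument that underlies that cited theorem, carried out self-contained: the discounted, time-reversed process $M_s = e^{-\int_0^s V(X_t,T-t)\,dt}\,\lemmaF(X_s,T-s)$, the sign flip $-\partial_T\lemmaF$ coming from $\tfrac{d}{ds}(T-s)=-1$, the cancellation of the drift down to $-\beta$ via the PDE, and integration plus expectation. Your handling of the time-reversal is exactly the point the paper's remark glosses over, so your write-up makes explicit what the paper leaves to the reader. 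Two small caveats, neither fatal: first, applying It\^o's lemma requires $\lemmaF\in C^{1,2}$, so your argument is really a representation (hence uniqueness) statement for classical solutions --- which is consistent with how the lemma is phrased ("the solution of the Cauchy problem is given by"), but you should say so; second, the localization step you flag does require some growth or boundedness hypothesis on $\lemmaF$, $\grad\lemmaF$, $V$, $\beta$, $\psi$ to pass to the limit, and the lemma as stated ("some given functions") does not supply one --- the paper absorbs this into its blanket regularity footnote, and you should do the same explicitly rather than leave it implicit. With those provisos your argument buys a self-contained proof where the paper offers only a citation.
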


While the original statement given in~\cite[Theorem 1.3.17]{pham2009continuous} presents \(\lemmaF\) as a function of \(t\) that moves backward in time, we present \(\lemmaF\) as a function of \(T\) that moves forward in time.  Even though both approaches express the time derivative with opposite signs, both yield the same probability. The representation in Lemma~\ref{thm:Invariant Probability} can be used to derive the following result. 

\begin{lemma}\label{lem:Lemma1}
Let \(\gM\subset\R^n\) be a domain.  Consider the diffusion process \eqref{eq:Wiener_drift_diffusion} with the initial condition \(X_0=x\in\R^n\).
Then the function \(\lemmaF:\R^n\times [0,\infty)\to \R\) defined by
\begin{align}
U(x,T):=\mP_x(X_t\in\gM, \forall t\in [0,T])
\end{align}
is the solution to the convection-diffusion equation
\begin{align}
\label{eq:Lemma1CDE}
    \begin{cases}
     {\partial \lemmaF\over\partial T} = {1\over 2}\tr(\sigma\sigma^\intercal\Hess\lemmaF) + \gL_\mu \lemmaF,
     & x\in\gM, T>0,\\
     \lemmaF(x,T) = 0,&x\notin\gM, T>0,\\
     \lemmaF(x,0) = \mathbb{1}_{\gM}(x),& x\in\R^n.
    \end{cases}
\end{align}
\end{lemma}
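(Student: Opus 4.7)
The plan is to realize $U(x,T)$ as a killed Feynman--Kac functional: introduce a potential $V$ that is zero inside $\gM$ and infinite outside, so that the exponential discount factor in Lemma~\ref{thm:Invariant Probability} collapses to the indicator of the event $\{X_t\in\gM,\,\forall t\in[0,T]\}$. First I would separate the initial/boundary conditions from the interior PDE. If $x\notin\gM$, then $X_0=x\notin\gM$, and the event under study is empty for any $T>0$, giving $U(x,T)=0$; evaluating at $T=0$ gives $U(x,0)=\mathbbm{1}_\gM(x)$. This pins down the two prescribed data in~\eqref{eq:Lemma1CDE}, so the work reduces to verifying the convection-diffusion equation on the interior $\gM\times(0,\infty)$.

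To derive the PDE, I would apply Lemma~\ref{thm:Invariant Probability} with $\beta\equiv 0$, $\psi=\mathbbm{1}_\gM$, and the potential $V_k(x):=k\,\mathbbm{1}_{\gM^c}(x)$ for $k>0$. For each finite $k$, the lemma asserts that
\[
\lemmaF_k(x,T):=\mE_x\!\left[\exp\!\left(-k\!\int_0^T\!\mathbbm{1}_{\gM^c}(X_t)\,dt\right)\mathbbm{1}_\gM(X_T)\right]
\]
is a classical solution of $\partial_T \lemmaF_k=\tfrac12\tr(\sigma\sigma^\intercal\Hess{\lemmaF_k})+\gL_\mu \lemmaF_k - V_k \lemmaF_k$ with initial datum $\mathbbm{1}_\gM$. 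On the open set $\gM$ the killing term $V_k$ vanishes identically, so the equation there is exactly the homogeneous one appearing in~\eqref{eq:Lemma1CDE}. Letting $k\to\infty$ and using path continuity, the exponential factor decreases monotonically to $\mathbbm{1}_{\{X_t\in\gM,\,\forall t\in[0,T]\}}$, so $\lemmaF_k(x,T)\to U(x,T)$ pointwise by monotone convergence. Standard parabolic regularity for the generator of $X_t$ then upgrades this to a solution of the limiting PDE on $\gM$.

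An alternative, slightly cleaner route bypasses the killing potential entirely. By the strong Markov property applied at a small time $h>0$, one obtains the semigroup identity
\[
U(x,T+h)=\mE_x\!\left[\mathbbm{1}_{\{X_s\in\gM,\,\forall s\in[0,h]\}}\,U(X_h,T)\right].
\]
Subtracting $U(x,T)$, dividing by $h$, and letting $h\downarrow 0$ while applying It\^o's formula to $U(X_h,T)$ stopped at the first exit time from $\gM$ yields $\partial_T U=\tfrac12\tr(\sigma\sigma^\intercal\Hess{U})+\gL_\mu U$ on $\gM$. This is essentially the Kolmogorov backward derivation and avoids any limiting potential.

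The main obstacle I anticipate is not the formal derivation but the regularity required to justify the limits or the It\^o expansion up to the boundary $\partial\gM$, and the sense in which the Dirichlet condition $U=0$ is attained there. Under the standing sufficient-regularity assumptions on $\mu,\sigma$ and a mild regularity condition on $\partial\gM$ (for instance an exterior cone condition), classical parabolic theory delivers a $C^{2,1}$ solution on $\gM\times(0,\infty)$ whose trace on $\partial\gM$ vanishes. Verifying those hypotheses is where most of the work would actually go; every remaining step reduces to direct applications of Lemma~\ref{thm:Invariant Probability} together with monotone convergence.
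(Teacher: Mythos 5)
Your primary route is exactly the paper's: apply Lemma~\ref{thm:Invariant Probability} with $\beta\equiv 0$, $\psi=\mathbb{1}_{\gM}$, and the killing potential $V=\gamma\,\mathbb{1}_{\gM^c}$, then let $\gamma\to\infty$ so the exponential collapses to the indicator of $\{X_t\in\gM,\ \forall t\in[0,T]\}$ and the interior equation reduces to the stated convection-diffusion equation. The only cosmetic difference is that you obtain the condition $\lemmaF=0$ on $\gM^c$ directly from the probabilistic definition, whereas the paper reads it off as the algebraic limit of the killed equation; both are fine, and your proposal is correct.
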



\begin{proof}
\rev{
Note that Lemma~\ref{thm:Invariant Probability} holds for any given functions $V:\R^n\times \R_+ \to \R$, $\beta: \R^n\times \R_+\to \R$ and \(\psi\colon\R^n\to\R\). We apply Lemma~\ref{thm:Invariant Probability} with
\begin{align*}
    \beta(x,t) \equiv 0,\, \psi(x) = \mathbb{1}_{\gM}(x),\,
    V(x,t) = \gamma\mathbb{1}_{\gM^c}(x),
\end{align*}
for any $\gamma >0$, where $\gM^c$ is the complement of $\gM$.
}
Then we have
\begin{align}
\label{eq:e2p_0}
    \lemmaF(x,T)
& = \mE_x\left[ e^{-\int^T_0 \gamma \mathbb{1}_{\gM^c}(X_t)\,dt}\psi(X_T)\right].
\end{align}
and
\begin{align}
\label{eq:eq_1}
    \begin{cases}
    \frac{\partial \lemmaF}{\partial T} = \frac{1}{2}\tr(\sigma\sigma^\intercal\Hess{\lemmaF}) + \gL_\mu\lemmaF,&x\in\gM, T>0,\\
    \frac{\partial \lemmaF}{\partial T} = \frac{1}{2}\tr(\sigma\sigma^\intercal\Hess{\lemmaF}) + \gL_\mu\lemmaF -\gamma\lemmaF,&x\notin \gM, T>0,\\
    \lemmaF(x,0) = \mathbb{1}_{\gM}(x),& x\in\R^n.
\end{cases}
\end{align}

Now, we take the limit of \(\gamma\to\infty\).  Since
\begin{align*}
    \lim_{\gamma \rightarrow \infty} e^{-\int^T_0 \gamma \mathbb{1}_{\gM^c}(X_t)\,dt } = 
\begin{cases}
0, & \text{if } X_t \notin \gM , \exists t\in[0,T],\\
1, & \text{otherwise}.
\end{cases},
\end{align*} \eqref{eq:e2p_0} becomes
\begin{align}\label{eq:Lemma1ProofFN-Solution}
    U(x,T)&= \mE_x[\mathbb{1}_{\{ X_t \in \gM , \forall t\in[0,T] \}}] \\
    & = \mP_x(X_t \in \gM , \forall t\in[0,T]).
\end{align}
Meanwhile, under the limit of \(\gamma\to\infty\), the part of \eqref{eq:eq_1} with \(x\notin \gM\) and \(T>0\)
reduces to the algebraic condition
\begin{align}\label{eq:Lemma1ProofCon}
    \lemmaF(x,T) = 0,\quad x\notin \gM, T>0.
\end{align}
Combining~\eqref{eq:eq_1}, \eqref{eq:Lemma1ProofFN-Solution}, and~\eqref{eq:Lemma1ProofCon} and gives Lemma~\ref{lem:Lemma1}.
\end{proof}

Using Lemma~\ref{lem:Lemma1}, we can prove Theorem~\ref{thm:InvariantProbability_MainTheorem1} and Theorem~\ref{thm:ConvergenceProbability_MainTheorem3} as follows.

\begin{proof}(Theorem~\ref{thm:InvariantProbability_MainTheorem1})
Consider the augmented space space $Z_t = [ \phi(X_t), X_t^\intercal]^\intercal \in \R^{n+1}$. The stochastic process \(Z_t\) is the solution to~\eqref{eq:MainTheorem_z} with parameters $\ave$ and $\var$ defined in~\eqref{eq:mu_sigma_prime} with the initial state
\begin{align}
    Z_0 = z = [\phi(x),x^\intercal]^\intercal.
\end{align}
Let us define 
\begin{align}
\label{eq:set_thm1} 
    \gM = \{ z \in \R^{n+1} : z[1] \geq \ell \} .
\end{align}
From Lemma~\ref{lem:Lemma1},
\begin{align*}
    F(x,T;\ell) = \mP_x( z_t \in \gM , \forall t \in [ 0, T] )
\end{align*}
is the solution to the convection-diffusion equation \begin{align}
\label{eq:lem1_pde}
\begin{cases}
{\partial \FwdInv \over\partial T} = {1\over 2}\operatorname{tr}(\zeta\zeta^\intercal\operatorname{Hess}\FwdInv) + \gL_\rho \FwdInv,& z\in \gM, T>0,\\
\FwdInv(z,T) = 0,& z\notin\gM,T>0,\\
\FwdInv(z,0) = \mathbb{1}_{\gM}(z),&z\in\R^{n+1}.
\end{cases}
\end{align}
To obtain  \eqref{eq:CauchyProblem}, define \(D = \zeta\zeta^\intercal\) and apply the vector identity\footnote{The identity is a direct consequence of the Leibniz rule \(\partial_i(D_{ij}\partial_j F) = (\partial_iD_{ij})\partial_j F + D_{ij}\partial_i\partial_j F.\)}
\begin{align}
\label{eq:VectorIdentity}
    \nabla\cdot (D\grad\FwdInv) = \gL_{\nabla\cdot D}\FwdInv + \tr(D\Hess\FwdInv).
\end{align}
\end{proof}
\begin{proof}(Theorem~\ref{thm:ConvergenceProbability_MainTheorem3})
Consider the augmented space of $Z_t = [ \phi(X_t), X_t^\intercal]^\intercal \in \R^{n+1}$ in~\eqref{eq:augumented_z}. The stochastic process $Z_t$ is a solution of~\eqref{eq:MainTheorem_z} with parameters $\ave$ and $\var$ defined in~\eqref{eq:mu_sigma_prime} with the initial state  
\begin{align}
\label{eq:thm3-initial}
    Z_0 = z  = [ \phi(x), x^\intercal ]^\intercal.
\end{align}
Let us define 
\begin{align}
\label{eq:set_thm3} 
    \gM = \{ z \in \R^{n+1} : z[1] < \ell \} .
\end{align}
The CDF of $\maxf_x(T)$ is given by
\begin{align}
    \mP(\maxf_x(T) <\ell ) &= \mP_x(\forall t\in[0,T], \phi(X_t)<\ell)\\
    &=\mP_x(\forall t\in[0,T],Z_t\in\gM),
\end{align}
which, by Lemma~\ref{lem:Lemma1}, is the solution to the convection-diffusion equation \eqref{eq:Lemma1CDE}, yielding \eqref{eq:pde_thm3} after the application of identity \eqref{eq:VectorIdentity}.
\end{proof}

\subsection{Proof of Theorem~\ref{thm:InvariantProbability_MainTheorem2} and Theorem~\ref{thm:ConvergenceProbability_MainTheorem4}}
\label{sec:Proof_theorem2_4}


\rev{
We first present
Lemma~\ref{lem:EscapeTime} and Lemma~\ref{lem:EscapeTimeLemma3}, which link the 
escape time of a diffusion process to PDE solutions via a Feynman--Kac formulation, 
and then use these lemmas to prove 
Theorem~\ref{thm:InvariantProbability_MainTheorem2} and 
Theorem~\ref{thm:ConvergenceProbability_MainTheorem4}.
}




\begin{lemma}\label{lem:EscapeTime}
Consider the diffusion process \eqref{eq:Wiener_drift_diffusion} with the initial point \(X_0=x\). Define the escape time as~\eqref{eq:EscapeTime}. 
Let \(\psi(x), V(x)\) be continuous functions and \(V\) be non-negative. If $\lemmaF: \R^{n} \to \R$ is the bounded solution to the boundary value problem
\begin{align}\label{eq:Col_boundary_condition}
    \begin{cases}
    {1\over 2}\tr(\sigma\sigma^\intercal \Hess{\lemmaF}) + \gL_\mu\lemmaF - V\lemmaF =0, & x\in\gM,\\
    \lemmaF(x) = \psi(x),& x\notin \gM,\\
    \end{cases}
\end{align}
then
\begin{align}
\label{eq:TimeIndependentSolution}
    \lemmaF(x) = \mE_x\left[\psi(X_{\lemmaT_{\gM}})e^{-\int_0^{\lemmaT_{\gM}}V(X_{s})\, ds}\right].
\end{align}
\end{lemma}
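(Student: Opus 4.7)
The plan is to use a standard Feynman--Kac-type argument: apply Itô's formula to a suitably chosen auxiliary process, invoke the PDE to kill the drift term, and then apply optional stopping at $\lemmaT_{\gM}$ to recover the claimed expectation representation.

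First, I would introduce the auxiliary process
\begin{align}
M_t = \lemmaF(X_t)\exp\Bigl(-\textstyle\int_0^t V(X_s)\,ds\Bigr),\qquad t\in[0,\lemmaT_{\gM}],
\end{align}
and apply Itô's formula. Writing $E_t = \exp(-\int_0^t V(X_s)\,ds)$, whose differential is $dE_t = -V(X_t)E_t\,dt$, and using \eqref{eq:Wiener_drift_diffusion} together with Itô's formula for $\lemmaF(X_t)$, I would collect terms to obtain
\begin{align}
dM_t = E_t\Bigl[\tfrac{1}{2}\tr(\sigma\sigma^\intercal\Hess{\lemmaF})(X_t) + \gL_\mu \lemmaF(X_t) - V(X_t)\lemmaF(X_t)\Bigr]\,dt + E_t\,\grad{\lemmaF}(X_t)^\intercal \sigma(X_t)\,dW_t.
\end{align}
For $t<\lemmaT_{\gM}$ the state $X_t$ lies in $\gM$, so the PDE in \eqref{eq:Col_boundary_condition} forces the bracketed drift coefficient to vanish. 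Hence $M_{t\wedge \lemmaT_{\gM}}$ is a local martingale.

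Next I would promote this local martingale to a true martingale and apply optional stopping. Because $\lemmaF$ is bounded and $V\geq 0$ makes $E_t\in(0,1]$, the process $M_{t\wedge \lemmaT_{\gM}}$ is uniformly bounded. Stopping at $\tau_n := \lemmaT_{\gM}\wedge n$ gives
\begin{align}
\lemmaF(x) = M_0 = \mE_x\bigl[M_{\tau_n}\bigr] = \mE_x\Bigl[\lemmaF(X_{\tau_n})\exp\Bigl(-\textstyle\int_0^{\tau_n}V(X_s)\,ds\Bigr)\Bigr].
\end{align}
I would then pass $n\to\infty$ via dominated convergence, using the uniform bound on $M$ and continuity of $\lemmaF$ and $V$. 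On the event $\{\lemmaT_{\gM}<\infty\}$, $X_{\tau_n}\to X_{\lemmaT_{\gM}}\notin\gM$, so the boundary condition $\lemmaF=\psi$ outside $\gM$ converts $\lemmaF(X_{\lemmaT_{\gM}})$ into $\psi(X_{\lemmaT_{\gM}})$; on $\{\lemmaT_{\gM}=\infty\}$, the non-negativity of $V$ combined with boundedness of $\lemmaF$ ensures the limiting contribution is handled consistently (either with the standard convention that the integrand is set to zero there, or by absorbing it into the $V\geq 0$ hypothesis to guarantee the exponential factor stays controlled). This produces exactly \eqref{eq:TimeIndependentSolution}.

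The main obstacle I anticipate is the step of pushing the local martingale property through the stopping time $\lemmaT_{\gM}$ cleanly: one must justify applying Itô's formula up to $\lemmaT_{\gM}$ (which requires $\lemmaF\in C^2$ on a neighborhood of $\gM$, implicit in the lemma's hypotheses), verify that the stochastic integral term is a true martingale on $[0,\tau_n]$ (which follows from local boundedness of $\grad{\lemmaF}$ and $\sigma$ on compact subsets, after a further localization by exit times from balls if $\gM$ is unbounded), and argue that no probability mass is lost as $n\to\infty$ on $\{\lemmaT_{\gM}=\infty\}$. Once these measure-theoretic details are dispatched, the algebraic core (Itô plus the PDE cancellation) yields the representation immediately.
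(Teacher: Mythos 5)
Your proposal follows essentially the same route as the paper's proof: both apply It\^o's formula to the auxiliary process $\lemmaF(X_t)e^{-\int_0^t V(X_s)\,ds}$, use the PDE to cancel the drift inside $\gM$, and evaluate at the escape time using the boundary condition $\lemmaF=\psi$ outside $\gM$. Your version is in fact more careful than the paper's, which stops directly at $q=\lemmaT_{\gM}$ and asserts the stochastic integral has zero mean without the localization at $\tau_n=\lemmaT_{\gM}\wedge n$, the dominated-convergence passage, or any discussion of the event $\{\lemmaT_{\gM}=\infty\}$ that you correctly flag.
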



\begin{proof}
We first define a mapping $\eta: \R \rightarrow \R$ by 
 \begin{equation}\label{eq:67}
     \eta(s) := \lemmaF(X_s)e^{-\int_0^s V(X_v)\,dv}.
 \end{equation}
It satisfies
\begin{align}
\label{eq:noise_eta0}
    \eta(q)  &= \int_0^q d\eta(s) + \eta(0)\\
    \nonumber
    &=\int_0^q e^{-\int_0^s V(X_v)\, dv}\bigl[
    -V(X_s)\lemmaF(X_s) + \gL_\mu \lemmaF(X_s)\\
    \label{eq:noise_eta1}
    &\quad+\textstyle{1\over 2}\tr \left(\sigma(X_s)\sigma^\intercal(X_s)\Hess \lemmaF(X_s)\right)
    \bigr]ds \\
    \nonumber
    &\quad + \int_0^q e^{-\int_0^s V(X_v)\, dv}\gL_\sigma U(X_s)\, dW_s + \eta(0) \\
    &=\int_0^q
    e^{-\int_0^s V(X_v)\, dv}\gL_\sigma U(X_s)\, dW_s + \eta(0) ,
    \label{eq:noise_eta}
\end{align}
where \eqref{eq:noise_eta1} is from It\^o's Lemma; \eqref{eq:noise_eta} is from \eqref{eq:Col_boundary_condition} with $x \in \gM$. Thus, its expectation satisfies
\begin{align}
 \label{eq:eta_F1}
    \mE_x[\eta(q)] & = \mE_x[\eta(0)] \\
    \label{eq:eta_F2}
    &=  \mE_x[\lemmaF(X_0)] \\
    &= \lemmaF(x).
     \label{eq:eta_F}
\end{align}
where \eqref{eq:eta_F1} holds because the right hand side of \eqref{eq:noise_eta} has zero mean; \eqref{eq:eta_F2} is due to \eqref{eq:67}; and \eqref{eq:eta_F} is from the assumption that $X_0 = x$. 

Next, we set
\begin{align}
\label{eq:q-value}
   q = H_{\gM} .
\end{align}
As \eqref{eq:q-value} implies \(X_{q}\notin\gM\), condition \eqref{eq:Col_boundary_condition} with $x \notin \gM$ yields  
\begin{align}
\label{eq:UGetsBdyValue1}
    \lemmaF(X_q) = \psi(X_q). 
\end{align}
Finally, we have 
\begin{align}
\label{eq:thm4_ucondition1}
    \lemmaF(x)&= \mE_x[\eta(q)]\\
    \label{eq:thm4_ucondition2}
    &=\mE_x\left[\lemmaF(X_q)e^{-\int_0^q V(X_v)\,dv} \right]\\
    \label{eq:thm4_ucondition3}
    &= \mE_x\left[\psi(X_q)e^{-\int_0^q V(X_v)\,dv }\right]\\
    \label{eq:thm4_ucondition4}
     &= \mE_x\left[\psi(X_{H_{\gM}})e^{-\int_0^{H_{\gM}} V(X_v)\,dv }\right],
\end{align}
where \eqref{eq:thm4_ucondition1} is due to \eqref{eq:eta_F}; \eqref{eq:thm4_ucondition2} is due to \eqref{eq:67}; \eqref{eq:thm4_ucondition3} is due to \eqref{eq:UGetsBdyValue1}, and \eqref{eq:thm4_ucondition4} is due to \eqref{eq:q-value}. 

\end{proof}

\begin{lemma}\label{lem:EscapeTimeLemma3}
Consider the diffusion process \eqref{eq:Wiener_drift_diffusion} with the initial condition \(X_0=x\in\R^n\).
Define the escape time,
\begin{align}\label{eq:EscapeTime}
    \lemmaT_{\gM}:=\inf\{t\in \R_+: X_t\notin\gM\}.
\end{align}
The CDF of the escape time
\begin{align}
    \lemmaF(x,t) = \mP_x(\lemmaT_\gM\leq t),\quad t>0,
\end{align}
is the solution to
\begin{align}\label{eq:EscapeTimeLemma3_boundary_condition}
    \begin{cases}
    {\partial \lemmaF \over \partial t}(x,t) = {1\over 2}\tr(\sigma\sigma^\intercal  \Hess{\lemmaF}) + \gL_\mu\lemmaF, & x\in\gM,t>0,\\
    \lemmaF(x,t) = 1, & x\notin\gM,t>0,\\
    \lemmaF(x,0) = \mathbb{1}_{\gM^c}(x), & x\in\R^n.\\
    \end{cases}
\end{align}
\end{lemma}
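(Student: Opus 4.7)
My plan is to deduce Lemma~\ref{lem:EscapeTimeLemma3} directly from Lemma~\ref{lem:Lemma1} through a complementary-probability argument. The starting observation is that, by sample-path continuity of the diffusion \eqref{eq:Wiener_drift_diffusion}, the event $\{\lemmaT_\gM > t\}$ coincides almost surely with $\{X_s \in \gM,\, \forall s \in [0,t]\}$. Hence, defining $V(x,t) := 1 - \lemmaF(x,t)$, we obtain
\begin{equation*}
V(x,t) = \mP_x(\lemmaT_\gM > t) = \mP_x(X_s \in \gM,\, \forall s \in [0,t]),
\end{equation*}
which is exactly the quantity analyzed in Lemma~\ref{lem:Lemma1}.

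Next, I would invoke Lemma~\ref{lem:Lemma1} to assert that $V$ satisfies the convection-diffusion equation on $\gM \times (0,\infty)$ with boundary datum $V(x,t)=0$ on $\gM^c$ and initial condition $V(x,0) = \mathbb{1}_\gM(x)$. The PDE that governs $V$ has the form $\partial_t V = \tfrac{1}{2}\tr(\sigma\sigma^\intercal \Hess{V}) + \gL_\mu V$. Since each of the operators $\partial_t$, $\tr(\sigma\sigma^\intercal \Hess{\cdot})$, and $\gL_\mu$ is linear and annihilates constants, the substitution $V = 1 - \lemmaF$ merely flips the sign on every term, leaving the same PDE in $\lemmaF$, namely $\partial_t \lemmaF = \tfrac{1}{2}\tr(\sigma\sigma^\intercal \Hess{\lemmaF}) + \gL_\mu \lemmaF$ on $\gM \times (0,\infty)$.

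The boundary and initial data are then translated by the same substitution: $V(x,t)=0$ on $\gM^c$ becomes $\lemmaF(x,t)=1$ on $\gM^c$, and $V(x,0) = \mathbb{1}_\gM(x)$ becomes $\lemmaF(x,0) = 1 - \mathbb{1}_\gM(x) = \mathbb{1}_{\gM^c}(x)$, matching \eqref{eq:EscapeTimeLemma3_boundary_condition} exactly.

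I anticipate the only subtlety, and hence the main thing I would justify carefully rather than treat as obvious, is the almost-sure equivalence $\{\lemmaT_\gM > t\} = \{X_s \in \gM,\, \forall s \in [0,t]\}$. This relies on the existence of a continuous version of $X_t$ and on measurability of $\lemmaT_\gM$ as a stopping time with respect to the natural filtration; both are standard consequences of the regularity conditions already assumed on the SDE coefficients (see footnote~\ref{ft:FN-Solution}). Everything after that reduces to the linearity argument above, so no additional stochastic-analysis machinery is needed beyond what Lemma~\ref{lem:Lemma1} already supplies.
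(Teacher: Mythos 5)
Your proposal is correct, but it takes a genuinely different route from the paper. The paper does not derive Lemma~\ref{lem:EscapeTimeLemma3} from Lemma~\ref{lem:Lemma1} at all: it first establishes a time-independent Feynman--Kac representation for an elliptic boundary-value problem with killing rate $\gamma$ (Lemma~\ref{lem:EscapeTime}), specializes to $\hat\psi = 1/\gamma$, $\hat V = 1$, recognizes the resulting $\hat\lemmaF(x,\gamma)$ as the Laplace transform in time of $\mP_x(\lemmaT_\gM\leq t)$, and then inverts the Laplace transform of the PDE \eqref{eq:LaplaceDomainPDE} to obtain \eqref{eq:EscapeTimeLemma3_boundary_condition}. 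Your argument instead observes that $1-\lemmaF(x,t)=\mP_x(X_s\in\gM,\ \forall s\in[0,t])$ is exactly the object of Lemma~\ref{lem:Lemma1}, and that the operator $\partial_t - \tfrac12\tr(\sigma\sigma^\intercal\Hess{\cdot}) - \gL_\mu$ is linear with no zeroth-order term, so it annihilates constants and the substitution $\lemmaF = 1-V$ preserves the equation while flipping the data to $\lemmaF=1$ on $\gM^c$ and $\lemmaF(\cdot,0)=\mathbb{1}_{\gM^c}$. This is shorter, reuses a result the paper has already proved, and mirrors the complementary-probability identities the paper itself states in \eqref{eq:safe_prob_invariant} and \eqref{eq:safe_prob_convergent}; what it does not give you is the representation formula \eqref{eq:TimeIndependentSolution} of Lemma~\ref{lem:EscapeTime}, which the paper's Laplace-transform route produces as a byproduct and uses again in the proofs of Theorems~\ref{thm:InvariantProbability_MainTheorem2} and~\ref{thm:ConvergenceProbability_MainTheorem4}. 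One refinement to the subtlety you flag: the set identity $\{\lemmaT_\gM>t\}=\{X_s\in\gM,\ \forall s\in[0,t]\}$ can fail on the event $\{\lemmaT_\gM=t\}$ (the path stays in $\gM$ on $[0,t]$ but exits immediately after), so what you actually need is that the law of $\lemmaT_\gM$ has no atom at the fixed time $t>0$; this is standard for nondegenerate diffusions and is the same presence-or-absence-of-equality caveat the paper disposes of in its footnote on CDFs versus CCDFs, so it does not affect the validity of your argument at the paper's level of rigor.
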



\begin{proof}
Let \(\gamma\in\Bbb C\) be a spectral parameter with \(\operatorname{Re}(\gamma)> 0\). For each fixed \(\gamma\), let $\hat{\lemmaF}(\cdot,\gamma): \R^n \to \Bbb C$ be the solution of
\begin{align}\label{eq:InvariantProbability_FeynmaKac_PDE_complex}
\begin{cases}
         \frac{1}{2}\tr(\sigma\sigma^\intercal\Hess{\hat{\lemmaF}}) + \gL_\mu\hat{\lemmaF} -\gamma \hat{V}\hat{\lemmaF}=0, & x\in\gM,\\
         \hat{\lemmaF}(x,\gamma) = \hat{\psi}(x,\gamma), & x\notin\gM .\\
\end{cases}
\end{align}
According to Lemma~\ref{lem:EscapeTime}, \(\hat{\lemmaF}(x,\gamma)\) is given by
\begin{align}\label{eq:78}
    \hat \lemmaF(x,\gamma) & = \mE_x\left[\hat{\psi}(X_{H_{\gM}})e^{-\gamma\int_0^{H_{\gM}}\hat{V}(X_s)\, ds}\right],
\end{align}
where \(X_s\) is the diffusion process in~\eqref{eq:Wiener_drift_diffusion}. Now, take
\begin{align}\label{eq:SpecialPsiV}
    \hat{\psi}(x,\gamma) = 1/\gamma,\quad \hat{V}(x) = 1,
\end{align}
in~\eqref{eq:InvariantProbability_FeynmaKac_PDE_complex} and~\eqref{eq:78}.
Then,~\eqref{eq:InvariantProbability_FeynmaKac_PDE_complex} becomes
\begin{align}
\label{eq:LaplaceDomainPDE}
\begin{cases}
         \frac{1}{2}\tr(\sigma\sigma^\intercal\Hess{\hat{\lemmaF}})+ \gL_\mu\hat{\lemmaF}-\gamma\hat{\lemmaF}=0, & x\in\gM,\\
         \hat{\lemmaF}(x,\gamma) = 1/ \gamma, & x\notin\gM,\\
\end{cases}
\end{align}
and \eqref{eq:78} becomes
\begin{align}
\nonumber
    \hat \lemmaF(x, \gamma) & = \mE_x\left[\tfrac{1}{\gamma}e^{-\gamma H_{\gM}}\right]\\
    \nonumber
    & = \int_0^\infty {1\over\gamma} e^{-\gamma t}p_{H_{\gM}\mid X_0}(t\mid x)\, dt\\
    \label{eq:LaplaceCDF3}
    &=\int_0^\infty e^{-\gamma t}\mP_x(H_{\gM}\leq t)\, dt
\end{align}
where integration by parts is used in \eqref{eq:LaplaceCDF3}.
Here, 
\(
    p_{H_{\gM}|X_0}(t|x) = {d \over dt}\mP_x(H_{\gM}\leq t)
\)
denotes the probability density function of \(H_\gM\) conditioned on \(X_0 = x\). 

Now, for fixed \(x\), let \(\lemmaF(x,t)\)  be the inverse Laplace transformation of \(\hat{\lemmaF}(x,\gamma)\). In other words,
\begin{align}\label{eq:Laplace}
    \hat \lemmaF(x,\gamma) & = \int_0^\infty \lemmaF(x,t)e^{-\gamma t}\, dt.
\end{align}
On one hand, comparing~\eqref{eq:LaplaceCDF3} and~\eqref{eq:Laplace}  gives
\begin{align}
    \lemmaF(x,t) = \mP_x\left(H_{\gM} \leq t\right).
\end{align}
On the other hand,
taking the inverse Laplace transformation of the PDE~\eqref{eq:LaplaceDomainPDE} for \(\hat \lemmaF\) yields
the PDE satisfied by \(\lemmaF\):
\begin{align}
\begin{cases}
         {\partial \lemmaF \over \partial t}(x,t)=\frac{1}{2}\tr(\sigma\sigma^\intercal\Hess{\lemmaF}) + \gL_\mu\lemmaF, & x\in\gM,t>0,\\
         \lemmaF(x,t) = 1, & x\notin\gM,t>0,\\
        \lemmaF(x,0) = \mathbb{1}_{\gM^c}(x), &  x\in\R^n.
\end{cases}
\end{align}
\end{proof}


Now, we are ready to prove Theorem~\ref{thm:InvariantProbability_MainTheorem2} and Theorem~\ref{thm:ConvergenceProbability_MainTheorem4}.
\begin{proof} (Theorem~\ref{thm:InvariantProbability_MainTheorem2})
Let the augmented state \(Z_t\) be the solution of~\eqref{eq:MainTheorem_z} with the initial state 
    \(Z_0 = z = [\phi(x), x^\intercal]^\intercal\in\R^{n+1}.\)
Let 
\begin{align}
\label{eq:set_thm2}
    \gM = \{z\in\R^{n+1}: z[1]\geq \ell\}.
\end{align}
Thus, we have \(\exit_x(\ell) = H_{\gM}\), for \(\FwdInvExit\) be the solution of the convection-diffusion equation with \(D = \zeta\zeta^\intercal\)
\begin{align*}\
\begin{cases}
    \frac{\partial \FwdInvExit}{\partial t} =  \frac{1}{2}\grad\cdot(D\grad{\FwdInvExit}) + \gL_{\rho-{1\over 2}\grad{ \cdot} D}\FwdInvExit
    ,& z\in\gM, t>0,\\
    \FwdInvExit(z,t) = 1,  &  z\notin\gM, t>0.\\
    \FwdInvExit(z,0) = \mathbb{1}_{\gM^c}(z),&z\in\R^{n+1},
\end{cases}
\end{align*}
which is equivalent to~\eqref{eq:MainTheorem2}. 
From Lemma~\ref{lem:EscapeTimeLemma3}, we have 
\(
   \FwdInvExit(z,t;l)  = \mP(\exit_{x}(l) \leq t).
\)
\end{proof}

\begin{proof}(Theorem~\ref{thm:ConvergenceProbability_MainTheorem4})
Let the augmented state space \(Z_t\) be the solution of~\eqref{eq:MainTheorem_z} with the initial state 
    \(Z_0 = z = [\phi(x), x^\intercal]^\intercal\in\R^{n+1}.\)
Let 
\begin{equation}
\label{eq:set_thm4}
    \gM = \{z\in\R^{n+1}: z[1] < \ell\}.
\end{equation}
We have \(\entrance_x(\ell) = H_{\gM}\) for \(\FwdConvExit\) be the solution of the convection diffusion equation with \(D = \zeta\zeta^\intercal\)
\begin{align*}
\begin{cases}
    \frac{\partial \FwdConvExit}{\partial t}(z,t) =  \frac{1}{2}\grad\cdot(D\grad{\FwdConvExit}) + \gL_{\ave-{1\over 2} \grad{\cdot D}}{\FwdConvExit}
    ,& z\in\gM,t>0,\\
    \FwdConvExit(x,t) = 1,  & z\notin\gM,t>0,\\
    \FwdConvExit(x,0) = \mathbb{1}_{\gM^c}(z),&z\in\R^{n+1},
\end{cases}
\end{align*}
which is equivalent to~\eqref{eq:PDE_D}.
From Lemma~\ref{lem:EscapeTimeLemma3}, we have 
\(
   \FwdConvExit(z,T;l)  = \mP(\entrance_{x}(l) \leq T).
\)
\end{proof}

\subsection{Proof of Theorems for Physics-informed Learning}
\label{apx:proof_pinn}

\begin{proof}(Corollary~\ref{cor:NN_worst_case_bound})
We know that $u$ is the solution to the PDE of interest. 
We can construct $\Bar{u}$ such that
\begin{equation}
    \bar{u} = \begin{cases}
    u, \quad (x,T) \in \D_s \\
    u + \frac{d}{d_{\text{max}}}(M+\delta), \quad (x,T) \in \Omega \times \tau \backslash \D_s
    \end{cases}
\end{equation}
where $d$ characterizes the distance between $(x,t) \in \Omega \backslash \D_s$ and set $\D_s$, $d_{\text{max}} = \max_{(x,t) \in \Omega \backslash \D_s} d$ is the maximum distance, and $0 < \delta < \min\{\delta_1, \delta_2\}$ is a constant. Then we have
\begin{equation}
    \sup_{(x, T) \in \Omega \times \tau} \left|\bar{u}(x, t)-u(x, t)\right| = M + \delta
\end{equation}
Since we assume the neural network has sufficient representation power, by universal approximation theorem~\cite{cybenko1989approximation}, for $\delta$ given above, there exist $\Bar{\boldsymbol\theta}$ such that
\begin{equation}
    \sup _{(x, T) \in \Omega \times \tau}\left|F_{\Bar{\boldsymbol\theta}}(x, t)-\bar{u}(x, t)\right| \leq \delta.
\end{equation}
Then we have $\Bar{\boldsymbol\theta}$ satisfies 
\begin{equation}
\begin{aligned}
\sup _{(x, T) \in  \Sigma_s}|{F}_{\Bar{\boldsymbol\theta}}(x, T)-\Tilde{u}(x, T)|<\delta_1, \\
\sup _{(x, T) \in \D_s}|{F}_{\Bar{\boldsymbol\theta}}(x, T)-{u}(x, T) |<\delta_2,
\end{aligned}
\end{equation}
and 
\begin{equation}
    \sup _{(x, T) \in \Omega \times \tau}\left|F_{\Bar{\boldsymbol\theta}}(x, t)-u(x, t)\right| \geq M.
\end{equation}

\end{proof}

The proof of Theorem~\ref{thm:full_pde_constraint} is based on~\cite{peng2020accelerating, gilbarg1977elliptic} by adapting the results on elliptic PDEs to parabolic PDEs. We first give some supporting definitions and lemmas.
We define the second order parabolic operator $L$ w.r.t. $u$ as follows.
\begin{equation}
\label{eq:parabolic_operator}
\begin{aligned}
     L[u]:= & -\frac{\partial u}{\partial T} + \sum_{i, j} a_{i, j}(x, T) \frac{\partial^2 u}{\partial x_i x_j} \\
    & \qquad +\sum_i b_i(x, T) \frac{\partial u}{\partial x_i}+c(x, T) u.
\end{aligned}
\end{equation}
Let $A(x):=\left[a_{i, j}(x, T)\right]_{i, j} \in \mathbb{R}^{d \times d}$ and $b(x, T):=\left[b_i(x)\right]_i \in \mathbb{R}^d$. We consider uniform parabolic operators, where $A$ is positive definite, with the smallest eigenvalue $\lambda > 0$. 

Let $\Omega \times \tau \in \mathbb{R}^{d+1}$ be a bounded domain of interest, where $\Sigma$ is the boundary of $\Omega \times \tau$. We consider the following partial differential equation for the function $u(\cdot)$.
\begin{equation}
\label{eq:parabolic_PDE}
\begin{aligned}
    L[u](x, T) & = q(x, T), \quad (x, T) \in \Omega \times \tau \\
    u(x,T) & = \tilde{u}(x, T), \quad (x, T) \in \Sigma
\end{aligned}
\end{equation}
We know that the risk probability PDEs (18)-(21) are parabolic PDEs and can be written in the form of~\eqref{eq:parabolic_PDE}.
Specifically, in our case we have $c(x,T) \equiv 0$
and $A = \frac{1}{2}\sigma^2 I$ which gives $\lambda = \frac{1}{2}\sigma^2$.

For any space $\Omega \in \mathbb{R}^d$, for any function $f: \mathbb{R}^d \rightarrow \mathbb{R}$, we define the $L_1$ norm of the function $f$ on $\Omega$ to be
\begin{equation}
    \|f\|_{L_1 (\Omega)} := \int_\Omega f(\mathbf{X}) d\mathbf{X}.
\end{equation}
With this definition, we know that 
\begin{equation}   \mathbb{E}_{\mathbf{X}}\left[f\right] =  \|f\|_{L_1 (\Omega)} / |\Omega|,
\end{equation}
where $\mathbf{X}$ is uniformly sampled from $\Omega$, and $|\Omega|$ denote the size of the space $\Omega$.

\begin{corollary}
\label{cor:maximum_principle}
\textbf{Weak maximum principle}. Suppose that $\D = \Omega \times \tau$ is bounded, $L$ is uniformly parabolic with $c \leq 0$ and that $u \in C^0(\bar{\D}) \cap C^2(\D)$ satisfies $Lu \geq 0$ in $\D$, and $M = \max_{\Bar{\D}} u \geq 0$. Then
\begin{equation}
    \max_{\Bar{\D}} u = \max_{\Sigma} u.
\end{equation}
\end{corollary}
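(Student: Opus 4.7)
The plan is to adapt the classical weak maximum principle for second-order elliptic operators (Gilbarg--Trudinger style) to the uniformly parabolic operator $L$ defined in~\eqref{eq:parabolic_operator}. The essential structural difference from the elliptic case is that the ``boundary'' on which the maximum is pinned is the parabolic boundary $\Sigma = (\partial\Omega \times [0, T_H]) \cup (\Omega \times \{0\})$, which excludes the top slice $\Omega \times \{T_H\}$. The argument proceeds in two steps: a strict maximum principle via a second-derivative test, followed by a perturbation argument that reduces the hypothesis $L[u] \geq 0$ to the strict case $L[u] > 0$.

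For Step 1 (strict case), I would assume some $w \in C^0(\bar{\D}) \cap C^2(\D)$ satisfies $L[w] > 0$ strictly in $\D$, together with $c \leq 0$ and $\max_{\bar{\D}} w \geq 0$, and argue by contradiction. If the maximum were attained at some $(x_0, T_0) \in \bar{\D} \setminus \Sigma$, then $x_0 \in \Omega$ and $T_0 \in (0, T_H]$. Spatial optimality yields $\nabla_x w(x_0, T_0) = 0$ and a negative semidefinite Hessian $\nabla_x^2 w(x_0, T_0) \preceq 0$; temporal optimality yields $\partial_T w(x_0, T_0) = 0$ when $T_0 < T_H$ and the one-sided inequality $\partial_T w(x_0, T_0) \geq 0$ at the top slice $T_0 = T_H$. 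Using positive-definiteness of $A$ (so that $\operatorname{tr}(A \cdot \nabla_x^2 w) \leq 0$), $c \leq 0$, and $w(x_0, T_0) \geq 0$, each term in $L[w](x_0, T_0)$ is nonpositive, contradicting $L[w] > 0$.

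For Step 2 (reduction via perturbation), given only $L[u] \geq 0$, I would define $v_\epsilon := u - \epsilon T$ for small $\epsilon > 0$. A direct computation yields $L[v_\epsilon] = L[u] + \epsilon - c \epsilon T \geq \epsilon > 0$, since $c \leq 0$ and $T \geq 0$. Because $M = \max_{\bar{\D}} u \geq 0$, one has $\max_{\bar{\D}} v_\epsilon \geq M - \epsilon T_H$, which can be kept nonnegative by choosing $\epsilon$ small (handling the borderline case $M = 0$ by an additional arbitrarily small additive perturbation that is sent to zero at the end). Step 1 applied to $v_\epsilon$ then yields $\max_{\bar{\D}} v_\epsilon = \max_\Sigma v_\epsilon$, and sending $\epsilon \to 0$ with uniform convergence $v_\epsilon \to u$ on $\bar{\D}$ completes the proof.

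The main obstacle is not the interior calculation but the careful bookkeeping at the top slice $T = T_H$ and the verification that the perturbation hypothesis is preserved. The top slice lies in $\bar{\D} \setminus \Sigma$, so the second-derivative test must use a one-sided inequality in time there, and the sign conveniently conspires with the $-\partial_T$ in the definition of $L$ to deliver the required nonpositivity. A secondary technical concern is ensuring $\max_{\bar{\D}} v_\epsilon \geq 0$ so Step 1 actually applies, which is precisely where the hypothesis $M \geq 0$ is consumed.
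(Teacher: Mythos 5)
The paper never proves this corollary: it is stated without proof and imported as a standard fact, the surrounding text saying only that the argument is ``based on [Gilbarg--Trudinger] by adapting the results on elliptic PDEs to parabolic PDEs.'' So there is no in-paper proof to compare against; what you supply is the standard textbook argument (second-derivative test at a putative maximum off the parabolic boundary, with the one-sided time inequality $\partial_T w \ge 0$ at the top slice $\Omega\times\{T_H\}$, followed by the $u-\epsilon T$ perturbation). Your Step 1 is correct, the computation $L[u-\epsilon T]=L[u]+\epsilon-c\epsilon T\ge\epsilon>0$ is correct, and for $M>0$ --- or for $c\equiv 0$, which is the only case the paper actually uses, since it sets $c\equiv 0$ just before invoking the corollary --- the argument closes cleanly.

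The one genuine gap is the borderline case $M=0$ with $c\not\equiv 0$. The proposed patch of adding an ``arbitrarily small additive perturbation'' $\delta>0$ does not work as described: $L[v_\epsilon+\delta]=L[v_\epsilon]+c\delta$, and since $c\le 0$ the added constant \emph{degrades} the strict inequality rather than being harmless. To guarantee $\max_{\bar{\D}}(v_\epsilon+\delta)\ge 0$ you need $\delta\ge\epsilon T_H$ (since $\max_{\bar{\D}} v_\epsilon$ can be as low as $M-\epsilon T_H=-\epsilon T_H$), while to preserve $L[v_\epsilon+\delta]>0$ you need $\delta<\epsilon/\sup|c|$; these are compatible only when $T_H\sup|c|<1$. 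This is not just bookkeeping: the perturbation method proves the inequality $\max_{\bar{\D}}u\le\max_\Sigma u^+$, and when $M=0$ (so $u\le 0$ everywhere and $u^+\equiv 0$) that inequality says nothing about $\max_\Sigma u$ itself. The claimed \emph{equality} $\max_{\bar{\D}}u=\max_\Sigma u$ in the case $M=0$ genuinely requires the strong maximum principle (a zero interior maximum with $c\le 0$ forces $u\equiv 0$ on the parabolic past of that point, hence on $\Omega\times\{0\}\subset\Sigma$). The clean fixes are: restrict Step 2 to $M>0$ or $c\equiv 0$, which suffices for the paper's downstream use (the proof of the bound on $\sup_{\D}|u|$ only ever invokes $\sup_\Sigma u^+$); or state the conclusion in the standard form $\max_{\bar{\D}}u\le\max_\Sigma u^+$; or explicitly appeal to the strong maximum principle for the $M=0$ case.
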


\begin{corollary}
\label{cor:comparison_principle}
\textbf{Comparison principle.} Suppose that $\D$ is bounded and $L$ is uniformly parabolic.
If $u, v \in C^0(\bar{\D}) \cap C^2(\D)$ satisfies $Lu \leq Lv$ in $\D$ and $u \geq v$ on $\Sigma$, then $u \geq v$ on $\D$.
\end{corollary}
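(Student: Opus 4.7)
The plan is to reduce Corollary~\ref{cor:comparison_principle} to the weak maximum principle (Corollary~\ref{cor:maximum_principle}) via the classical subtraction trick. I would introduce the auxiliary function $w := v - u$, which lies in $C^0(\bar\D)\cap C^2(\D)$ because $u$ and $v$ do. By linearity of the parabolic operator $L$ defined in~\eqref{eq:parabolic_operator}, the hypothesis $Lu \leq Lv$ translates into the differential inequality $Lw = Lv - Lu \geq 0$ on $\D$, and the hypothesis $u \geq v$ on $\Sigma$ translates into the boundary inequality $w \leq 0$ on $\Sigma$. The target conclusion $u \geq v$ on $\D$ is then exactly $w \leq 0$ on $\D$.

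Next I would argue by contradiction in order to bring Corollary~\ref{cor:maximum_principle} into play. Suppose $\max_{\bar\D} w > 0$. Then in particular $M := \max_{\bar\D} w \geq 0$, so the sign hypothesis required by Corollary~\ref{cor:maximum_principle} is met and the weak maximum principle applied to $w$ yields
\begin{equation*}
\max_{\bar\D} w \;=\; \max_{\Sigma} w \;\leq\; 0,
\end{equation*}
contradicting $\max_{\bar\D} w > 0$. Hence $\max_{\bar\D} w \leq 0$, which gives $u \geq v$ throughout $\bar\D$ and in particular on $\D$.

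The only point that needs care is the compatibility of the hypotheses between the two corollaries. Corollary~\ref{cor:maximum_principle} requires the zeroth-order coefficient $c$ to satisfy $c \leq 0$, so I would state Corollary~\ref{cor:comparison_principle} under the same uniform parabolicity assumption (with $c\leq 0$). This is automatic for the risk-probability PDEs of interest in the paper since $c \equiv 0$, so no generality is lost in the intended application. I do not anticipate any significant technical obstacle: once the substitution $w=v-u$ is made, the two hypotheses of the weak maximum principle line up exactly, and the proof is completed by the one-line contradiction above.
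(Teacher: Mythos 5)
Your proof is correct, and it is the standard derivation the paper implicitly relies on: the paper states this comparison principle without proof (importing it as a classical fact adapted from the elliptic theory in Gilbarg--Trudinger), so there is no in-paper argument to compare against, but your subtraction trick $w = v - u$ followed by the weak maximum principle of Corollary~\ref{cor:maximum_principle} is exactly the canonical route. Your observation that the statement should carry the hypothesis $c \leq 0$ (inherited from Corollary~\ref{cor:maximum_principle}, and harmless here since the risk-probability PDEs have $c \equiv 0$) is a fair catch of a small imprecision in the paper's statement.
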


\begin{theorem}
\label{thm:parabolic_PDE_bound}
Let $Lu = q$ in a bounded domain $\D$, where $L$ is parabolic, $c \leq 0$ and $u \in C^0(\bar{\D}) \cap C^2(\D)$. Then
\begin{equation}
    \sup_\D |u| \leq \sup_{\Sigma} |u| + C \sup_\D \frac{|q|}{\lambda},
\end{equation}
where $C$ is a constant depending only on diam $\D$ and $\beta = \sup |b|/\lambda$.
\end{theorem}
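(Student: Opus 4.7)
The plan is to mimic the classical barrier-function argument used in Gilbarg--Trudinger for linear elliptic equations, adapting it to the parabolic setting via the comparison principle recorded in Corollary~\ref{cor:comparison_principle}. Set $k := \sup_\Sigma |u|$, $Q := \sup_\D |q|$, $d := \operatorname{diam}\D$, and $x_1^0 := \inf\{x_1 : (x,T) \in \bar\D\}$. I would introduce the time-independent barrier
\[
w(x,T) := k + \frac{Q}{\lambda}\bigl(e^{\alpha d} - e^{\alpha (x_1 - x_1^0)}\bigr),
\]
where $\alpha > 0$ depends only on $\beta = \sup |b|/\lambda$ and is to be pinned down below. The role of $w$ is to dominate $\pm u$ pointwise.

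Because $w$ does not depend on $T$, the $-\partial_T$ term in $L$ vanishes on $w$, and only the $x_1$-derivatives contribute, giving
\[
L[w] = c\,w - (a_{11}\alpha^2 + b_1\alpha)\,\frac{Q}{\lambda}\,e^{\alpha(x_1 - x_1^0)}.
\]
The first term is non-positive since $c \leq 0$ and $w \geq 0$. Uniform parabolicity gives $a_{11} \geq \lambda$, and by definition of $\beta$ we have $|b_1| \leq \beta\lambda$, so $a_{11}\alpha^2 + b_1\alpha \geq \lambda\alpha(\alpha - \beta)$. Choosing $\alpha := \beta + 1$ forces $a_{11}\alpha^2 + b_1\alpha \geq \lambda$, and therefore $L[w] \leq -Q\,e^{\alpha(x_1-x_1^0)} \leq -Q$ throughout $\D$.

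Next I would verify the comparison hypotheses. On $\D$, $L[u] = q \geq -Q \geq L[w]$; on $\Sigma$, $w \geq k \geq |u|$, so in particular $w \geq u$ and $w \geq -u$. Corollary~\ref{cor:comparison_principle} applied to the pairs $(w,u)$ and $(w,-u)$ then yields $|u| \leq w$ throughout $\D$, whence
\[
\sup_\D |u| \;\leq\; \sup_\D w \;\leq\; k + \frac{Q}{\lambda}\bigl(e^{\alpha d} - 1\bigr) \;=\; \sup_\Sigma |u| + C\,\sup_\D \frac{|q|}{\lambda},
\]
with $C = e^{(\beta+1)d} - 1$, a constant depending only on $d=\operatorname{diam}\D$ and $\beta$.

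The main obstacle, as typical for this type of estimate, is choosing the barrier so that its second-order part genuinely dominates $|q|$ after normalization by $\lambda$, uniformly in the direction that the lower-order coefficient $b$ points. Aligning the exponential with the $x_1$-axis and taking $\alpha = \beta+1$ is what accomplishes this: the worst case is when $b_1$ opposes the convection produced by $e^{\alpha x_1}$, and $\alpha - \beta \geq 1$ is precisely the slack needed to overcome that fight while still yielding a $\lambda$-sized coercive term. No additional subtlety enters from the parabolic nature of $L$ because $w$ is time-independent, so the $-\partial_T$ term is trivial and Corollary~\ref{cor:comparison_principle} handles the temporal direction of $\Sigma$ automatically.
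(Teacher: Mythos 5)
Your proof is correct and follows essentially the same route as the paper: the exponential barrier $e^{\alpha x_1}$ with $\alpha = \beta+1$, the computation $a_{11}\alpha^2 + b_1\alpha \geq \lambda\alpha(\alpha-\beta) \geq \lambda$, and the comparison principle of Corollary~\ref{cor:comparison_principle} applied to dominate $\pm u$. The only differences are cosmetic — you work with $\sup_\Sigma|u|$ and $\sup_\D|q|$ symmetrically in a single pass, whereas the paper uses $\sup_\Sigma u^+$ and $\sup_\D|q^-|$ and treats the cases $Lu \geq q$ and $Lu \leq q$ separately before combining.
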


\begin{proof}
(Theorem~\ref{thm:parabolic_PDE_bound})
Let $\D$ lie in the slab $0 < x_1 < d$. Without loss of generality, we assume $\lambda_1 \geq \lambda > 0$. Set $L_0 = -\frac{\partial}{\partial t} + a^{ij}\frac{\partial^2}{\partial x_i x_j} + b^i\frac{\partial}{\partial x_i}$. For $\alpha \geq \beta + 1$, we have
\begin{equation}
    L_0 e^{\alpha x_1} = (\alpha^2 a^{11} + \alpha b^1) e^{\alpha x_1} \geq \lambda (\alpha^2 - \alpha \beta) e^{\alpha x_1} \geq \lambda
\end{equation}
Consider the case when $Lu \geq q$. Let
\begin{equation}
    v = \sup_{\Sigma} u^+ + (e^{\alpha d} - e^{\alpha x_1})\sup_\D \frac{|q^-|}{\lambda},
\end{equation}
where $u^+ = \max (u, 0)$ and $q^- = \min(q, 0)$.
Then, since $Lv = L_0 v + cv \leq -\lambda \sup_{\Sigma} (|q^-|/\lambda)$ by maximum principle (Corollary~\ref{cor:maximum_principle}), we have
\begin{equation}
    L(v-u) \leq -\lambda (\sup_{\D} \frac{|q^-|}{\lambda} + \frac{q}{\lambda}) \leq 0 \text{ in } \D,
\end{equation}
and $v-u \geq 0$ on $\Sigma$. Hence, from comparison principle (Corollary~\ref{cor:comparison_principle}) we have
\begin{equation}
    \sup_{\D} u \leq \sup_{\D} v \leq \sup_{\Sigma} |u| + C \sup_\D \frac{|q|}{\lambda}
\end{equation}
with $C = e^{\alpha d} - 1$.
Consider $Lu \leq q$, we can get similar results with flipped signs. Combine both cases we have for $Lu=q$
\begin{equation}
    \sup_\D |u| \leq \sup_{\Sigma} |u| + C \sup_\D \frac{|q|}{\lambda}.
\end{equation}
\end{proof}

\begin{theorem}
\label{thm:PINN_bound}
Suppose that $\D \in \mathbb{R}^{d+1}$ is a bounded domain, $L$ is the parabolic operator defined in~\eqref{eq:parabolic_operator} and $u \in C^0(\bar{\D}) \cap C^2(\D)$ is a solution to the risk probability PDE. If the PINN ${F}_{\boldsymbol\theta}$ satisfies the following conditions:
\begin{enumerate}
    \item $\sup _{(x, T) \in \Sigma}| {F}_{\boldsymbol\theta}(x, T)-\Tilde{u}(x, T)|<\delta_1$;
    \item $\sup _{(x, T) \in \D}|W_{{F}_{\boldsymbol\theta}}(x,T)|<\delta_2$;
    \item ${F}_{\boldsymbol\theta} \in C^0(\bar{\D}) \cap C^2(\D)$,
\end{enumerate}
Then the error of $F_{\boldsymbol\theta}$ over $\Omega$ is bounded by
\begin{equation}
\label{eq:PINN_bound}
    \sup _{(x, T) \in \D}\left|{F}_{\boldsymbol\theta}(x, T)-u(x, t)\right| \leq {\delta_1}+C \frac{{\delta_2}}{\lambda}
\end{equation}
where $C$ is a constant depending on $\D$ and $L$.
\end{theorem}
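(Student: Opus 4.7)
The plan is to apply Theorem~\ref{thm:parabolic_PDE_bound} directly to the error function $v(x,T) := F_{\boldsymbol\theta}(x,T) - u(x,T)$. Since the risk probability PDEs (18)--(21) can all be cast in the canonical form $Lu = q$ of~\eqref{eq:parabolic_PDE} with $L$ the uniformly parabolic operator~\eqref{eq:parabolic_operator}, and since in our setting the zeroth-order coefficient satisfies $c \equiv 0$ (so in particular $c \leq 0$), we are exactly in the regime covered by Theorem~\ref{thm:parabolic_PDE_bound}.

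First I would verify that $v \in C^0(\bar{\D}) \cap C^2(\D)$, which follows from assumption~3 on $F_{\boldsymbol\theta}$ together with the hypothesis $u \in C^0(\bar{\D}) \cap C^2(\D)$. Next I would identify $Lv$ with the PDE residual: since $u$ is the exact solution, $Lu = q$, and since the residual operator $W_{F_{\boldsymbol\theta}}$ is by construction $L F_{\boldsymbol\theta} - q$ (cf.\ the form of~\eqref{eq:safety_PDE}), we obtain
\begin{equation*}
Lv = L F_{\boldsymbol\theta} - Lu = W_{F_{\boldsymbol\theta}}.
\end{equation*}
Assumption~2 then yields $\sup_{\D} |Lv| < \delta_2$. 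On the boundary $\Sigma$, the exact solution satisfies $u|_\Sigma = \tilde u$, so $v|_\Sigma = F_{\boldsymbol\theta}|_\Sigma - \tilde u$, and assumption~1 gives $\sup_{\Sigma} |v| < \delta_1$.

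Finally, I would invoke Theorem~\ref{thm:parabolic_PDE_bound} applied to $v$ (which is legitimate since $L$ is uniformly parabolic with smallest eigenvalue $\lambda > 0$, $\D$ is bounded, $c \leq 0$, and $v \in C^0(\bar{\D}) \cap C^2(\D)$) to conclude
\begin{equation*}
\sup_{(x,T)\in \D} |v(x,T)| \;\leq\; \sup_{\Sigma} |v| \;+\; C\, \sup_{\D} \frac{|Lv|}{\lambda} \;\leq\; \delta_1 + C\frac{\delta_2}{\lambda},
\end{equation*}
which is precisely~\eqref{eq:PINN_bound}, with $C$ inheriting its dependence on $\operatorname{diam}\D$ and $\beta = \sup|b|/\lambda$ from Theorem~\ref{thm:parabolic_PDE_bound}.

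The argument is essentially bookkeeping once Theorem~\ref{thm:parabolic_PDE_bound} is in hand; the only subtlety I anticipate is confirming that every instance of the risk probability PDE appearing in Theorems~\ref{thm:InvariantProbability_MainTheorem1}--\ref{thm:ConvergenceProbability_MainTheorem4} really does fit the canonical uniformly parabolic form with $c \leq 0$. This reduces to inspecting the coefficients in~\eqref{eq:CauchyProblem}, \eqref{eq:MainTheorem2}, \eqref{eq:pde_thm3}, and~\eqref{eq:PDE_D}, where the diffusion matrix $D = \zeta\zeta^\intercal$ is positive semidefinite; uniform parabolicity is then inherited from the non-degeneracy assumption on $\sigma$ used elsewhere in the paper, and $c \equiv 0$ is immediate since no undifferentiated $F$ term appears. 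Aside from this routine verification, the proof is a one-line application of the comparison-principle-based bound already established.
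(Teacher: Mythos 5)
Your proposal is correct and matches the paper's own proof: the paper likewise sets $h_2 = F_{\boldsymbol\theta}-u$ and $h_1 = L[F_{\boldsymbol\theta}]-q$, observes $Lh_2 = h_1$ with $h_2|_\Sigma = F_{\boldsymbol\theta}-\tilde u$, and applies Theorem~\ref{thm:parabolic_PDE_bound} to conclude. Your added verification that the risk probability PDEs fit the uniformly parabolic form with $c\equiv 0$ is exactly what the paper records in the discussion preceding the theorem, where it notes $c(x,T)\equiv 0$ and $\lambda = \tfrac{1}{2}\sigma^2$.
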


\begin{proof}
(Theorem~\ref{thm:PINN_bound})
Denote $h_1 = L\left[{F}_{\boldsymbol\theta}\right] - q$, and $h_2 = {F}_{\boldsymbol\theta} - u$. Since ${F}_{\boldsymbol\theta}$ and $u$ both fall in $C^0(\bar{\D}) \cap C^2(\D)$, from Theorem~\ref{thm:parabolic_PDE_bound} we have
\begin{equation}
\begin{aligned}
    \sup _{\D}\left|h_2(x, t)\right| & \leq \sup_{\Sigma} |h_2(x, t)| + C \sup_\D \frac{|h_1(x, t)|}{\lambda} \\
    & \leq {\delta_1}+C \frac{{\delta_2}}{\lambda} 
\end{aligned}
\end{equation}
which gives~\eqref{eq:PINN_bound}.
\end{proof}

\begin{lemma}
\label{lem:Lipshitz_bound}
Let $\D \subset \mathbb{R}^{d+1}$ be a domain. Define the regularity of $\D$ as

\begin{equation}
    R_{\D}:=\inf _{(x,T) \in \D, r>0} \frac{|B(x, T,  r) \cap \D|}{\min \left\{|\D|, \frac{\pi^{(d+1) / 2} r^{d+1}}{\Gamma((d+1) / 2+1)}\right\}},
\end{equation}
where $B(x, T, r):=\left\{y \in \mathbb{R}^{d+1} \mid\|y-(x,T)\| \leq r\right\}$ and $|S|$ is the Lebesgue measure of a set $S$. Suppose that $\D$ is bounded and $R_{\D}>0$. Let $q \in C^{0}(\bar{\D})$ be an $l$-Lipschitz continuous function on $\bar{\D}$. Then

\begin{equation}
\label{eq:Lipshitz_bound}
\begin{aligned}
    \sup _{\D}|q| \leq & \max \biggl\{\frac{2\|q\|_{L_{1}(\bar{\D})}}{R_{\D}|\D|}, \\
& \quad 2 l \cdot\left(\frac{\|q\|_{L_{1}(\bar{\D})} \cdot \Gamma((d+1) / 2+1)}{l R_{\D} \cdot \pi^{(d+1) / 2}}\right)^{\frac{1}{d+2}}\biggl\}.  
\end{aligned}
\end{equation}
\end{lemma}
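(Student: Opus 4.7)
The plan is to bound $M := \sup_\D |q|$ from above by combining two ingredients: (i) the $l_0$-Lipschitz property forces $|q|$ to remain large in a small ball around any near-maximizer, and (ii) the regularity constant $R_\D$ guarantees that this ball has substantial overlap with $\D$, which in turn forces a nontrivial contribution to $\|q\|_{L_1(\bar\D)}$. The desired bound will then emerge from a single case split governed by which term realizes the minimum in the definition of $R_\D$.

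First I would dispose of the trivial case $M = 0$. Otherwise, since $q$ is continuous on the compact set $\bar\D$, pick a point $(x^*, T^*)\in\bar\D$ with $|q(x^*,T^*)| = M$. For any $r \in (0, M/l_0]$, the Lipschitz property yields $|q(y)| \geq M - l_0\|y - (x^*,T^*)\| \geq M - l_0 r$ on the ball $B((x^*,T^*),r)$. Integrating over $B((x^*,T^*),r)\cap\D$ and using the definition of $R_\D$ gives
\begin{equation*}
\|q\|_{L_1(\bar\D)} \;\geq\; (M - l_0 r)\,\big|B((x^*,T^*),r)\cap\D\big| \;\geq\; (M - l_0 r)\, R_\D \,\min\!\left\{|\D|,\; \tfrac{\pi^{(d+1)/2} r^{d+1}}{\Gamma((d+1)/2+1)}\right\}.
\end{equation*}

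Next I would optimize by choosing $r = M/(2l_0)$, which is admissible and makes $M - l_0 r = M/2$. The inequality becomes
\begin{equation*}
\|q\|_{L_1(\bar\D)} \;\geq\; \tfrac{M}{2}\,R_\D\,\min\!\left\{|\D|,\; \tfrac{\pi^{(d+1)/2}}{\Gamma((d+1)/2+1)}\left(\tfrac{M}{2l_0}\right)^{d+1}\right\}.
\end{equation*}
Case A: if the minimum is $|\D|$, rearranging gives $M \leq 2\|q\|_{L_1(\bar\D)}/(R_\D |\D|)$, the first term in the max. Case B: if the minimum is the ball volume, then
\begin{equation*}
\|q\|_{L_1(\bar\D)} \;\geq\; \frac{R_\D\,\pi^{(d+1)/2}}{2^{d+2}\,l_0^{d+1}\,\Gamma((d+1)/2+1)}\, M^{d+2},
\end{equation*}
which after solving for $M$ yields precisely the second term $2l_0\big(\|q\|_{L_1(\bar\D)}\Gamma((d+1)/2+1)/(l_0 R_\D \pi^{(d+1)/2})\big)^{1/(d+2)}$. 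Taking the maximum of the two bounds covers both cases and produces \eqref{eq:Lipshitz_bound}.

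I do not anticipate any real obstacle here; the argument is essentially a textbook Lipschitz--volume estimate. The only delicate point is bookkeeping with the radius choice $r = M/(2l_0)$: one has to verify that the hypothesis $r \leq M/l_0$ holds (it does trivially), and that the exponent arithmetic in Case B matches the stated form. The regularity assumption $R_\D > 0$ is what prevents the ball $B((x^*,T^*),r)$ from being essentially disjoint from $\D$ when $(x^*,T^*)$ lies on (or close to) the boundary of $\D$, and is used only through the single inequality displayed above.
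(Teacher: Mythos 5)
Your proposal is correct and follows essentially the same route as the paper's proof: both center a ball of radius $M/(2l_0)$ at a maximizer of $|q|$, use the Lipschitz bound to force $|q| \geq M/2$ on that ball, lower-bound the measure of its intersection with $\D$ via $R_\D$, and then split cases according to which term realizes the minimum in the regularity definition. The only cosmetic difference is that the paper integrates the linear lower bound $M - l_0\|y-(\bar x,\bar T)\|$ over the ball before bounding it by $|B_1|\cdot M/2$, whereas you bound the integrand by $M/2$ pointwise first; the resulting inequality is identical.
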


\begin{proof}
(Lemma~\ref{lem:Lipshitz_bound})
According to the definition of $l$-Lipschitz continuity, we have
\begin{equation}
    l\|(x,T)-(\bar{x}, \bar{T})\|_2 \geq|q(x,T)-q(\bar{x}, \bar{T})|, \quad \forall (x,T), (\bar{x}, \bar{T}) \in \bar{\D},
\end{equation}
which follows
\begin{equation}
\label{eq:L1_bound}
\begin{aligned}
    \|q\|_{L_{1}(\bar{\D})} \geq & \int_{\D^{+}}|q(x, T)| d x dT \\
    \geq & \int_{\D^{+}}|f(\bar{x}, \bar{T})|-l\|(x,T)-(\bar{x}, \bar{T})\| d x dT,
\end{aligned}
\end{equation}
where $\D^{+}:=\{(x,T) \in \bar{\D} \mid |q(\bar{x},\bar{T})| -l\|(x,T)-(\bar{x}, \bar{T})\| \geq 0\}$. Without loss of generality, we assume that $(\bar{x}, \bar{T}) \in \arg \max _{\bar{\D}}|q|$ and $q(\bar{x}, \bar{T})>0$. Denote that
\begin{equation}
    B_{1}:=B\left(\bar{x}, \bar{T}, \frac{q(\bar{x}, \bar{T})}{2 l}\right) \cap \D.
\end{equation}
It obvious that $B_{1} \subset \D^{+}$. Note that the Lebesgue measure of a hypersphere in $\mathbb{R}^{d+1}$ with radius $r$ is $\pi^{(d+1) / 2} r^{d+1} / \Gamma((d+1) / 2+1)$. Then~\eqref{eq:L1_bound} becomes
\begin{equation}
\begin{aligned}
    & \|q\|_{L_{1}(\bar{\D})} \geq \int_{B_{1}} q(\bar{x})-l\|(x,T)-(\bar{x}, \bar{T})\| d x dT\\
    \geq & \left|B_{1}\right| \cdot \frac{q(\bar{x}, \bar{T})}{2} \\
    \geq & \frac{q(\bar{x}, \bar{T})}{2} \cdot R_{\D} \cdot \min \left\{|\D|, \frac{\pi^{(d+1) / 2} q(\bar{x}, \bar{T})^{d+1}}{2^{d+1} l^{d+1} \Gamma((d+1) / 2+1)}\right\} \\
    = & \sup_{D} |q| \cdot \frac{R_{\D}}{2} \cdot \min \left\{|\D|, \frac{\pi^{(d+1) / 2} q(\bar{x}, \bar{T})^{d+1}}{2^{d+1} l^{d+1} \Gamma((d+1) / 2+1)}\right\},
\end{aligned}
\end{equation}
which leads to~\eqref{eq:Lipshitz_bound}.
\end{proof} 

Now we are ready to prove Theorem~\ref{thm:full_pde_constraint}.

\begin{proof}
(Theorem~\ref{thm:full_pde_constraint})
From condition 1, $\mathbf{Y}$ is uniformly sampled from $\Sigma$. From condition 2, $\mathbf{X}$ is uniformly sampled from $\D$. Then we have
\begin{equation}
    \mathbb{E}_{\mathbf{Y}}\left[|{F}_{\boldsymbol\theta}(\mathbf{Y})-\Tilde{u}(\mathbf{Y})|\right] = 
    \| {F}_{\boldsymbol\theta}(x, T)-\Tilde{u}(x, T)\|_{L_1(\Sigma)}/|\Sigma|,
\end{equation}
\begin{equation}
    \mathbb{E}_{\mathbf{X}}\left[|W_{{F}_{\boldsymbol\theta}}(\mathbf{X})|\right] = 
    \| W_{{F}_{\boldsymbol\theta}}(x, T)\|_{L_1(\D)}/|\D|.
\end{equation}
From condition 1 and 2 we know that
\begin{equation}
     \| {F}_{\boldsymbol\theta}(x, T)-\Tilde{u}(x, T)\|_{L_1(\Sigma)} < \delta_1 |\Sigma|
\end{equation}
\begin{equation}
      \| W_{{F}_{\boldsymbol\theta}}(x, T)\|_{L_1(\D)} < \delta_2 |\D|
\end{equation}
Also from condition 3 we have that $F_{\boldsymbol\theta}-\tilde{u}$ and $W_{F_{\boldsymbol\theta}}$ are both $l$-Lipschitz continuous on $\bar{\D}$. 
From Lemma~\ref{lem:Lipshitz_bound} we know that
\begin{equation}
\begin{aligned}
    & \quad \sup _{(x, T) \in \Sigma}| {F}_{\boldsymbol\theta}(x, T)-\Tilde{u}(x, T)|  \\
    & \leq \max \biggl\{\frac{2\| {F}_{\boldsymbol\theta}(x, T)-\Tilde{u}(x, T)\|_{L_1(\Sigma)}}{R_{\Sigma}|\Sigma|}, \\
    & \qquad 2 l \cdot\left(\frac{\| {F}_{\boldsymbol\theta}(x, T)-\Tilde{u}(x, T)\|_{L_1(\Sigma)} \cdot \Gamma(d / 2+1)}{l R_{\Sigma} \cdot \pi^{d / 2}}\right)^{\frac{1}{d+1}}\biggl\} \\
    & < \max \left\{\frac{2 \delta_1 |\Sigma| }{R_{\Sigma}|\Sigma|}, 2 l \cdot\left(\frac{\delta_1 |\Sigma| \cdot \Gamma(d / 2+1)}{l R_{\Sigma} \cdot \pi^{d / 2}}\right)^{\frac{1}{d+1}}\right\},
\end{aligned} 
\end{equation}
\begin{equation}
\begin{aligned}
    & \quad \sup _{(x, T) \in \D}|W_{{F}_{\boldsymbol\theta}}(x,T)| \\
    & \leq \max \biggl\{\frac{2\| W_{{F}_{\boldsymbol\theta}}(x, T)\|_{L_1(\D)}}{R_{\D}|\D|}, \\
    & \qquad 2 l \cdot\left(\frac{\| W_{{F}_{\boldsymbol\theta}}(x, T)\|_{L_1(\D)} \cdot \Gamma((d+1) / 2+1)}{l R_{\D} \cdot \pi^{(d+1) / 2}}\right)^{\frac{1}{d+2}}\biggl\} \\
    & < \max \left\{\frac{2\delta_2 |\D|}{R_{\D}|\D|}, 2 l \cdot\left(\frac{\delta_2 |\D| \cdot \Gamma((d+1) / 2+1)}{l R_{\D} \cdot \pi^{(d+1) / 2}}\right)^{\frac{1}{d+2}}\right\}.
\end{aligned}
\end{equation}
Let
\begin{equation}
\begin{aligned}
    \tilde \delta_1 & = \max \left\{\frac{2 \delta_1 |\Sigma| }{R_{\Sigma}|\Sigma|}, 2 l \cdot\left(\frac{\delta_1 |\Sigma| \cdot \Gamma(d / 2+1)}{l R_{\Sigma} \cdot \pi^{d / 2}}\right)^{\frac{1}{d+1}}\right\}, \\
    \tilde \delta_2 & = \max \left\{\frac{2\delta_2 |\D|}{R_{\D}|\D|}, 2 l \cdot\left(\frac{\delta_2 |\D| \cdot \Gamma((d+1) / 2+1)}{l R_{\D} \cdot \pi^{(d+1) / 2}}\right)^{\frac{1}{d+2}}\right\}.
\end{aligned}
\end{equation}
Then from Theorem~\ref{thm:PINN_bound} we know that 
\begin{equation}
    \sup _{(x,T) \in \D}\left|{F}_{\boldsymbol\theta}(x, T)-u(x, T)\right| \leq \tilde\delta_1 + C \frac{\tilde \delta_2}{\lambda}.
\end{equation}
Given that $\lambda = \frac{1}{2}\sigma^2$, replace $C$ with $2C$ we get
\begin{equation}
    \sup _{(x,T) \in \D}\left|{F}_{\boldsymbol\theta}(x, T)-u(x, T)\right| \leq \tilde\delta_1 + C \frac{\tilde \delta_2}{\sigma^2},
\end{equation}
which completes the proof.

\end{proof}

\newpage

\begin{IEEEbiography}[{\includegraphics[width=1in,height=1.25in,clip,keepaspectratio]{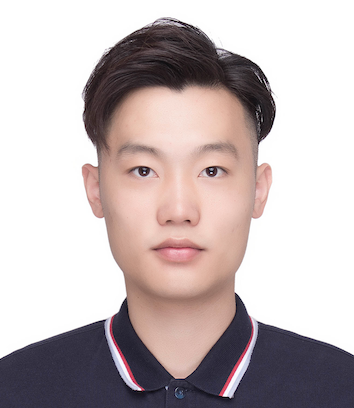}}]{Zhuoyuan Wang} received his B.E. degree in Automation from Tsinghua University, Beijing, China, in 2020 and is currently pursuing a Ph.D. degree in 
Electrical and Computer Engineering at Carnegie Mellon University, Pittsburgh, PA, USA.

His research interests include safety-critical control for stochastic systems, physics-informed learning, safe reinforcement learning and application to robotic systems.
He is a recipient of the Michel and Kathy Doreau Graduate Fellowship at Carnegie Mellon University.
\end{IEEEbiography}

\vspace{-20em}

\begin{IEEEbiography}[{\includegraphics[width=1in,height=1.25in,clip,keepaspectratio]{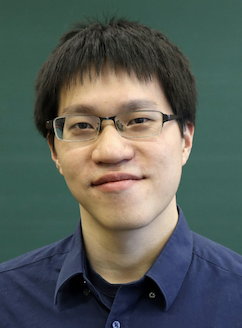}}]{Albert Chern} is an Assistant Professor in Computer Science and Engineering at University of California San Diego, La Jolla, CA, USA, since 2020.  He received his Ph.D. in Applied and Computational Mathematics at California Institute of Technology, Pasadena, CA, USA, in 2017, and was a Postdoctoral Researcher in Mathematics at Technische Universit{\"a}t Berlin, Berlin, Germany, from 2017 to 2020.  Chern's research interest lies in applications of differential geometry to computational math, fluid dynamics, computer graphics, and the interplay between stochastic processes and differential equations.  Chern was a recipient of the NSF CAREER Award in 2023.
\end{IEEEbiography}

\vspace{-20em}

\begin{IEEEbiography}[{\includegraphics[width=1in,height=1.25in,clip,keepaspectratio]{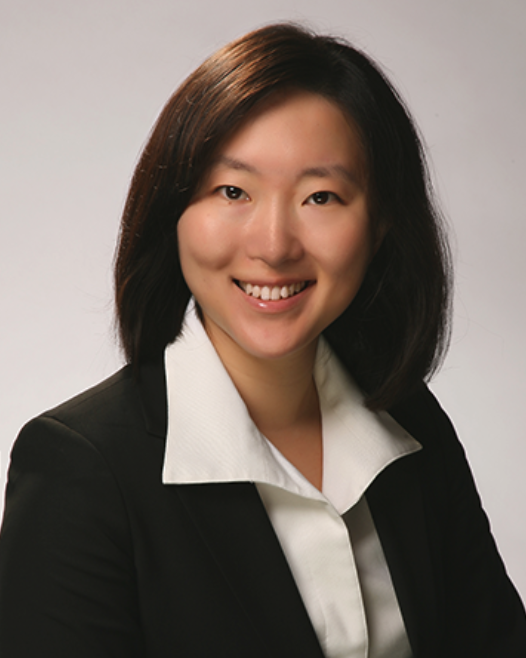}}]{Yorie Nakahira} is an Assistant Professor in the Department of Electrical and Computer Engineering at Carnegie Mellon University. She received B.E. in Control and Systems Engineering from Tokyo Institute of Technology in 2012 and Ph.D. in Control and Dynamical Systems from California Institute of Technology in 2019. Her research interests include the fundamental theory of optimization, control, and learning and its application to neuroscience, cell biology, smart grid, cloud computing, finance, autonomous robots.
\end{IEEEbiography}

\newpage

\onecolumn

\subsection{Proof of Theorems for Physics-informed Learning (Continued)}
\label{apx:proof_pinn_cont}

\begin{proof}
(Theorem~\ref{thm:pinn_convergence})

The proof is based on~\cite[Theorem 3.6]{shin2020convergence}. We first introduce some definitions.
A generic partial differential equation (PDE) has the form
\begin{equation}
    \mathcal{L}[u](x)=p(x), \; \forall x \in D, \quad \mathcal{B}[u](x)=q(x), \; \forall x \in \Gamma \subseteq \partial D,
\end{equation}
where $\mathcal{L}[\cdot]$ is a differential operator and $\mathcal{B}[\cdot]$ could be Dirichlet, Neumann, Robin, or periodic boundary conditions. 
We define the physics-model loss for the PINN training with data $m$ for a neural network instance $h$ as
\begin{equation}
\begin{aligned}
    \operatorname{Loss}_m^{\mathrm{PINN}}(h ; \boldsymbol{\lambda})= & \frac{\lambda_r}{m_r} \sum_{i=1}^{m_r}\left\|\mathcal{L}[h]\left(x_r^i\right)-a\left(x_r^i\right)\right\|^2+ \\
    & \quad \frac{\lambda_b}{m_b} \sum_{i=1}^{m_b}\left\|\mathcal{B}[h]\left(x_b^i\right)-b\left(x_b^i\right)\right\|^2,
\end{aligned}
\end{equation}
where $\boldsymbol{\lambda} = [\lambda_r, \lambda_b]$ are hyperparameters.

We now restate the assumptions used in~\cite{shin2020convergence}.

\begin{assumption}\cite[Assumption 3.1]{shin2020convergence}
\label{asp:shin_asp_3.1}

    Let $D \in \mathbb{R}^{d+1}$ and $\Gamma \in \mathbb{R}^{d}$ be the state-time space of interest and a subset of its boundary, respectively.
    Let $\mu_r$ and $\mu_b$ be probability distributions defined on $D$ and $\Gamma$.
    Let $\rho_r$ be the probability density of $\mu_r$ with respect to $(d+1)$-dimensional Lebesgue measure on $D$. Let $\rho_b$ be the probability density of $\mu_b$ with respect to $d$-dimensional Hausdorff measure on $\Gamma$.
    We assume the following conditions.
    \begin{enumerate}

    \item $D$ is at least of class $C^{0,1}$.

    \item  $\rho_r$ and $\rho_b$ are supported on $D$ and $\Gamma$, respectively. Also, $\inf _D \rho_r>0$ and $\inf _{\Gamma} \rho_b>0$.
    
    \item For $\epsilon>0$, there exists partitions of $D$ and $\Gamma,\left\{D_j^\epsilon\right\}_{j=1}^{K_r}$ and $\left\{\Gamma_j^\epsilon\right\}_{j=1}^{K_b}$ that depend on $\epsilon$ such that for each $j$, there are cubes $H_\epsilon\left(\mathbf{z}_{j, r}\right)$ and $H_\epsilon\left(\mathbf{z}_{j, b}\right)$ of side length $\epsilon$ centered at $\mathbf{z}_{j, r} \in D_j^\epsilon$ and $\mathbf{z}_{j, b} \in \Gamma_j^\epsilon$, respectively, satisfying $D_j^\epsilon \subset H_\epsilon\left(\mathbf{z}_{j, r}\right)$ and $\Gamma_j^\epsilon \subset H_\epsilon\left(\mathbf{z}_{j, b}\right)$.

    \item There exists positive constants $c_r, c_b$ such that $\forall \epsilon>0$, the partitions from the above satisfy $c_r \epsilon^d \leq \mu_r\left(D_j^\epsilon\right)$ and $c_b \epsilon^{d-1} \leq \mu_b\left(\Gamma_j^\epsilon\right)$ for all $j$.
    There exists positive constants $C_r, C_b$ such that $\forall x_r \in D$ and $\forall x_b \in \Gamma, \mu_r\left(B_\epsilon\left(x_r\right) \cap D\right) \leq C_r \epsilon^d$ and $\mu_b\left(B_\epsilon\left(x_b\right) \cap \Gamma\right) \leq C_b \epsilon^{d-1}$ where $B_\epsilon(x)$ is a closed ball of radius $\epsilon$ centered at $x$.
    Here $C_r, c_r$ depend only on $\left(D, \mu_r\right)$ and $C_b, c_b$ depend only on $\left(\Gamma, \mu_b\right)$.

    \item When $d=1$, we assume that all boundary points are available. Thus, no random sample is needed on the boundary.
    \end{enumerate}
\end{assumption}

\begin{assumption}\cite[Assumption 3.2]{shin2020convergence}
\label{asp:shin_asp_3.2}

    Let $k$ be the highest order of the derivative for the PDE of interest. For some $0<\alpha \leq 1$, let $p \in C^{0, \alpha}(D)$ and $q \in C^{0, \alpha}(\Gamma)$.
    \begin{enumerate}
        \item For each training dataset $m$, let $\mathcal{H}_m$ be a class of neural networks in $C^{k, \alpha}(D) \cap C^{0, \alpha}(\bar{D})$ such that for any $h \in \mathcal{H}_m, \mathcal{L}[h] \in C^{0, \alpha}(D)$ and $\mathcal{B}[h] \in C^{0, \alpha}(\Gamma)$.

        \item 
        For each $m, \mathcal{H}_m$ contains a network $u_m^*$ satisfying $\operatorname{Loss}_m^{\mathrm{PINN}}\left(u_m^* ; \lambda\right)=0$.

        \item 
        And,
        $$
        \sup _m\left[\mathcal{L}\left[u_m^*\right]\right]_{\alpha ; D}<\infty, \quad \sup _m\left[\mathcal{B}\left[u_m^*\right]\right]_{\alpha ; \Gamma}<\infty .
        $$  
    \end{enumerate}
\end{assumption}

We also know that safety-related PDEs are second-order linear parabolic equations. Let $U$ be a bounded domain in $\mathbb{R}^d$ and let $U_T=U \times(0, T]$ for some fixed time $T>0$. 
Let $(x, t)$ be a point in $\mathbb{R}^{d+1}$.
For a parabolic PDE, we define
\begin{equation}
    \begin{cases}-u_t+L[u]=s, & \text { in } U_T \\ u=\varphi, & \text { in } \partial U \times[0, T] \\ u=v, & \text { in } \bar{U} \times\{t=0\}\end{cases}
\end{equation}
where $s: U_T \mapsto \mathbb{R}, v: \bar{U} \mapsto \mathbb{R}, \varphi: \partial U \times[0, T] \mapsto \mathbb{R}$, and
\begin{equation}
\begin{aligned}
    L[u]= & \sum_{i, j=1}^d D_i\left(a^{i j}(x, t) D_j u+b^i(x, t) u\right) \\
    & \quad +\sum_{i=1}^d c^i(x, t) D_i u+d(x, t) u .
\end{aligned}
\end{equation}

\begin{assumption}\cite[Assumption 3.4]{shin2020convergence}
\label{asp:shin_asp_3.4}

Let $\lambda(x, t)$ be the minimum eigenvalues of $\left[a^{i j}(x, t)\right]$ and $\alpha \in(0,1)$. Suppose $a^{i j}, b^i$ are differentiable and let $\tilde{b}^i(x, t)=\sum_{j=1}^d D_j a^{i j}(x, t)+b^i(x, t)+c^i(x, t)$ and $\tilde{c}(x, t)=$ $\sum_{i=1}^d D_i b^i(x, t)+d(x, t)$. Let $\Omega=U \times(0, T)$.
\begin{enumerate}
    \item For some constant $\lambda_0, \lambda(x, t) \geq \lambda_0>0$ for all $(x, t) \in \Omega$.

    \item 
    $a^{i j}, \tilde{b}^i, \tilde{c}^i$ are Hölder continuous (exponent $\alpha$) in $\Omega$, and $\left|a^{i j}\right|_{\alpha,}\left|\mathrm{d} \tilde{b}^i\right|_{\alpha,}\left|\mathrm{d}^2 \tilde{c}\right|_\alpha$ are uniformly bounded.

    \item 
    $s$ is Hölder continuous (exponent $\alpha$) in $U_T$ and $\left|\mathrm{d}^2 f\right|_\alpha<\infty$. $v$ and $\varphi$ are Hölder continuous (exponent $\alpha$) on $\bar{U} \times\{t=0\}$, and $\partial U \times[0, T]$, respectively, and $v(x)=\varphi(x, 0)$ on $\partial U$.

    \item 
    There exists $\theta>0$ such that $\theta^2 a^{11}(x, t)+\theta \tilde{b}^1(x, t) \geq 1$ in $\Omega$.

    \item 
    For $x^{\prime} \in \partial U$, there exists a closed ball $\bar{B}$ in $\mathbb{R}^d$ such that $\bar{B} \cap \bar{U}=\left\{x^{\prime}\right\}$.

    \item 
    There are constants $\Lambda, v \geq 0$ such that for all $(x, t) \in \Omega$
\begin{equation}
\begin{aligned}
    \sum_{i, j}\left|a^{i j}(x, t)\right|^2 & \leq \Lambda, \\
    \lambda_0^{-2}\left(\|\boldsymbol{b}(x, t)\|^2+\|\boldsymbol{c}(x, t)\|^2\right) & +\lambda_0^{-1}|d(x, t)| \leq v^2,
\end{aligned}
\end{equation}
where $\boldsymbol{b}(x, t)=\left[b^1(x, t), \cdots, b^d(x, t)\right]^T$ and $\boldsymbol{c}(x, t)=\left[c^1(x, t), \cdots, c^d(x, t)\right]^T$.
\end{enumerate}
\end{assumption}

Now that we have listed all assumptions needed in~\cite[Theorem 3.6]{shin2020convergence}, we are ready to prove Theorem~\ref{thm:pinn_convergence}. 

Note that conditions 1-5 in Assumption~\ref{asp:domain} match with Assumption~\ref{asp:shin_asp_3.1} (\cite[Assumption 3.1]{shin2020convergence}), thus it is satisfied. 

Since we assume for any training data set $m_r$, architecture-wise the physics-informed neural network has enough representation power to characterize the solution of the sasfety-related PDE $u$, Assumption~\ref{asp:shin_asp_3.2} (\cite[Assumption 3.2]{shin2020convergence}) is satisfied.

For Assumption~\ref{asp:shin_asp_3.4} (\cite[Assumption 3.4]{shin2020convergence}), condition 1 is satisfied because we assume the noise magnitude $\sigma$ has non-zero eigenvalues in Assumption~\ref{asp:dynamics}, and $a(x,t) = \sigma^\intercal \sigma$ which indicates all eigenvalues of $a$ are greater than 0, thus $\lambda_0$ can be found and this condition is satisfied.
For condition 2, we have $f$, $g$ and $\sigma$ in the system dynamics are bounded and continuous from Assumption~\ref{asp:dynamics}, and we know $a = \sigma^\intercal \sigma$, $b \equiv 0$, $c = f + g \, u_{\text{nominal}}$ and $d \equiv 0$. Thus, $a^{i j}, \tilde{b}^i, \tilde{c}^i$ are Hölder continuous (exponent $\alpha$) in $\Omega$, and $\left|a^{i j}\right|_{\alpha,}\left|\mathrm{d} \tilde{b}^i\right|_{\alpha,}\left|\mathrm{d}^2 \tilde{c}\right|_\alpha$ are uniformly bounded, indicating the condition is satisfied. 
For condition 3, we know that $s \equiv 0$ and $\varphi \equiv 0$ or $\varphi \equiv 1$, and $v \equiv 0$ or $v \equiv 1$ depending on the specific choice of the safety-related probability, thus they are Hölder continuous and we know that condition 3 is satisfied. 
For condition 4, since we have $a^{11} > 0$ in Assumption~\ref{asp:dynamics} and we know $b \equiv 0$, we can always find $\theta > 0$ so that the condition is satisfied.
For condition 5, it is assumed in the regularity condition in Assumption~\ref{asp:domain} thus is satisfied.
For condition 6, we have $f$, $g$, $\sigma$ bounded in Assumption~\ref{asp:dynamics} and we know that $a = \sigma^\intercal \sigma$, $b \equiv 0$, $c = f + g \, u_{\text{nominal}}$ and $d \equiv 0$. Thus, we can find constants $\Lambda, v \geq 0$ to bound $\sum_{i, j}\left|a^{i j}(x, t)\right|^2$ and $\lambda_0^{-2}\left(\|\boldsymbol{b}(x, t)\|^2+\|\boldsymbol{c}(x, t)\|^2\right) +\lambda_0^{-1}|d(x, t)|$ for all $(x, t) \in \Omega$, indicating this condition is satisfied.

Till here, we have shown that all assumptions needed in~\cite[Theorem 3.6]{shin2020convergence} are satisfied.
As the number of training data $m_r \rightarrow \infty$, the loss function~\eqref{eq:PINN overall loss function} is equivalent to the H\"{o}lder regularized empirical loss~\cite[Equation (3.3)]{shin2020convergence}. Thus, all conditions in~\cite[Theorem 3.6]{shin2020convergence} hold and we have
\begin{equation*}
        \lim _{m_r \rightarrow \infty} F_{m_r}=u.
\end{equation*}

\end{proof}

\section{Experiment Details}
In the following sections, we provide details for the experiments presented in the paper and some additional experiments.

\subsection{Generalization to unseen regions}
In the generalization task in section~\ref{sec:generalization}, we use a down-sampled sub-region of the system to train the proposed PIPE framework, and test the prediction results on the full state-time space. We showed that PIPE is able to give accurate inference on the entire state space, while standard fitting strategies cannot make accurate predictions. In the paper, we only show the fitting results of thin plate spline interpolation. Here, we show the results of all fitting strategies we tested for this generalization tasks. The fitting strategies are
\begin{enumerate}
    \item Polynomial fitting of 5 degrees for both state $x$ and time $T$ axes. The fitting sum of squares error (SSE) on the training data is $0.1803$.
    \item Lowess: locally weighted scatterplot smoothing~\cite{cleveland1981lowess}. The training SSE is $0.0205$.
    \item Cubic spline interpolation. The training SSE is $0$.
    \item Biharmonic spline interpolation. The training SSE is $2.52\times 10^{-27}$.
    \item TPS: thin plate spline interpolation. The training SSE is $1.64 \times 10^{-26}$.
\end{enumerate}
All fittings are conducted via the MATLAB Curve Fitting Toolbox.
Fig.~\ref{fig:fitting} visualizes the fitting results on the full state space. Polynomial fitting performs undesirably because the polynomial functions cannot capture the risk probability geometry well. Lowess fitting also fails at inference since it does not have any model information of the data. Given the risk probability data, cubic spline cannot extrapolate outside the training region, and we use $0$ value to fill in the unseen region where it yields NAN for prediction. Biharmonic and TPS give similar results as they are both spline interpolation methods. None of these fitting methods can accurately predict the risk probability in unseen regions, because they purely rely on training data and do not incorporate any model information of the risk probability for prediction.

\begin{figure}[h]
    \centering
    \includegraphics[width=11cm]{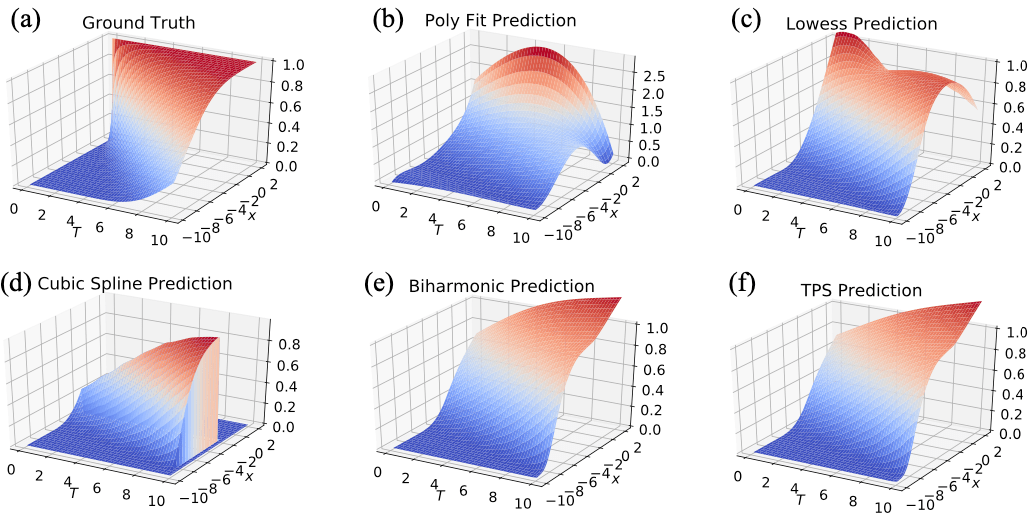}
    \caption{Results of different fitting strategies on the risk probability generalization task.}
    \label{fig:fitting}
\end{figure}

We also compare the prediction results for different network architectures in the proposed PIPE framework, to examine the effect of network architectures on the risk probability prediction performance.
The network settings we consider are different hidden layer numbers (1-4) and different numbers of neurons in one hidden layer (16, 32, 64). We use 3 hidden layers, 32 neurons per layer as baseline (the one used in the paper). Table~\ref{tb:prediction layer number} and Table~\ref{tb:prediction neuron number} report the averaged absolute error of the predictions for different layer numbers and neuron numbers per layer, respectively. We trained the neural networks 10 times with random initialization to report the standard deviation.
We can see that as the number of layers increases, the prediction error of the risk probability drops, but in a relatively graceful manner. The prediction error for a single layer PIPE is already very small, which indicates that the proposed PIPE framework is very efficient in terms of computation and storage. The prediction accuracy tends to saturate when the hidden layer number reaches 3, as there is no obvious improvement when we increase the layer number from 3 to 4. This means for the specific task we consider, a 3-layer neural net has enough representation.
Under the same layer number, as the neuron number per layer increases, the risk probability prediction error decreases. This indicates that with larger number of neuron in each layer (\ie wider neural networks), the neural network can better capture the mapping between state-time pair and the risk probability. However, the training time increases significantly for PIPEs with more neurons per layer ($152\mathrm{s}$ for 16 neurons and $971\mathrm{s}$ for 64 neurons), and the gain in prediction accuracy becomes marginal compared to the amount of additional computation involved. We suggest to use a moderate number of neurons per layer to achieve desirable trade-offs between computation and accuracy.

\begin{table*}[h]
\centering
\begin{tabular}{l|cccc}
    \hline
    \textbf{\# Hidden Layer} & 1 & 2 & 3 & 4 \\
    \hline
    \textbf{Prediction Error ($\times 10^{-3}$)} & $4.773\pm 0.564$ & $\boldsymbol{2.717 \pm 0.241}$ & $2.819 \pm 0.619$ & $2.778 \pm 0.523$ \\
    \hline
\end{tabular}
\caption{Risk probability prediction error of PIPE for different numbers of hidden layers.}
\label{tb:prediction layer number}
\end{table*}

\begin{table*}[h]
\centering
\begin{tabular}{l|ccc}
    \hline
    \textbf{\# Neurons} & 16 & 32 & 64 \\
    \hline
    \textbf{Prediction Error ($\times 10^{-3}$)} & $2.743 \pm 0.313$ & $2.931 \pm 0.865$ & $\boldsymbol{2.599 \pm 0.351}$ \\
    \hline
\end{tabular}
\caption{Risk probability prediction error of PIPE for different neuron numbers per layer.}
\label{tb:prediction neuron number}
\end{table*}

\subsection{Efficient estimation of risk probability}
In the efficient estimation task in section~\ref{sec:estimation}, we showed that PIPE will give better sample efficiency in risk probability prediction, in the sense that it achieves the same prediction accuracy with less sample numbers. Here, we visualize the prediction errors of Monte Carlo (MC) and the proposed PIPE framework to better show the results. Fig.~\ref{fig:comparison} shows the prediction error comparison plots for MC and PIPE with different sample numbers $N$. As the sample number increases, the errors for both MC and PIPE decrease because of better resolution of the data. PIPE gives 
more accurate predictions than MC across all sample numbers, since it combines data and physical model of the system together. From Fig.~\ref{fig:comparison} we can see that, PIPE indeed provides smoother and more accurate estimation of the risk probability. The visualization results further validate the efficacy of the proposed PIPE framework.
\begin{figure}
    \centering
    \includegraphics[width=15cm]{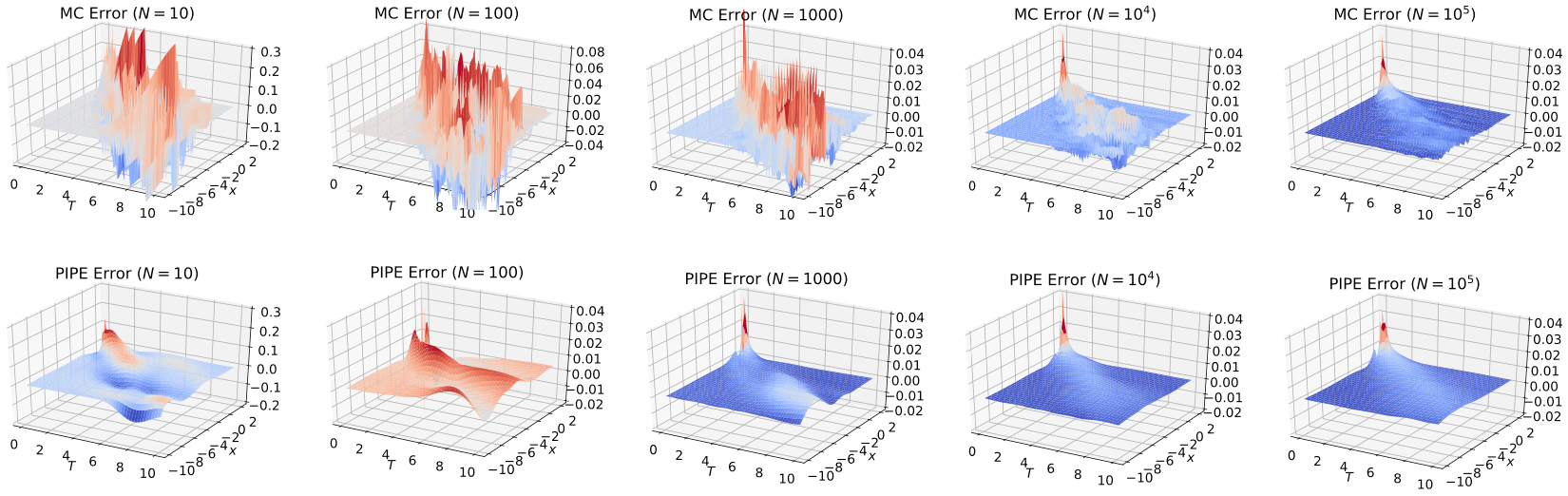}
    \caption{Prediction errors for Monte Carlo (left) and PIPE (right) with different sample numbers.}
    \label{fig:comparison}
\end{figure}

\begin{table*}[h]
\centering
\begin{tabular}{l|cccc}
    \hline
    \textbf{\# Hidden Layer} & 1 & 2 & 3 & 4 \\
    \hline
    \textbf{Prediction Error ($\times 10^{-4}$)} & $14.594 \pm 2.109$ & $7.302 \pm 0.819$ & $6.890 \pm 0.613$ & $\boldsymbol{6.625 \pm 0.574}$ \\
    \hline
\end{tabular}
\caption{Risk probability gradient prediction error of PIPE for different numbers of hidden layers.}
\label{tb:gradient layer number}
\end{table*}

\begin{table*}
\centering
\begin{tabular}{l|ccc}
    \hline
    \textbf{\# Neurons} & 16 & 32 & 64 \\
    \hline
    \textbf{Prediction Error ($\times 10^{-4}$)} & $7.049 \pm 0.767$ & $6.890 \pm 0.613$ & $\boldsymbol{6.458 \pm 0.794}$ \\
    \hline
\end{tabular}
\caption{Risk probability gradient prediction error of PIPE for different neuron numbers per layer.}
\label{tb:gradient neuron number}
\end{table*}

\subsection{Adaptation on changing system parameters}
For the adaptation task described in section~\ref{sec:adaptation}, we trained PIPE with system data of parameters $\lambda_{\text{train}} = [0.1,0.5,0.8,1]$ and tested over a range of unseen parameters over the interval $\lambda=[0,2]$. Here, we show additional results on parameters $\lambda_{\text{test}} = [0.3, 0.7, 1.2, 2]$ to further illustrate the adaptation ability of PIPE. Fig.~\ref{fig:varying-new} shows the results. It can be seen that PIPE is able to predict the risk probability accurately on both system parameters with very low error over the entire state-time space. This result indicates that PIPE has solid adaptation ability on uncertain parameters, and can be used for stochastic safe control with adaptation requirements.

\begin{figure}
    \centering
    \includegraphics[width=12cm]{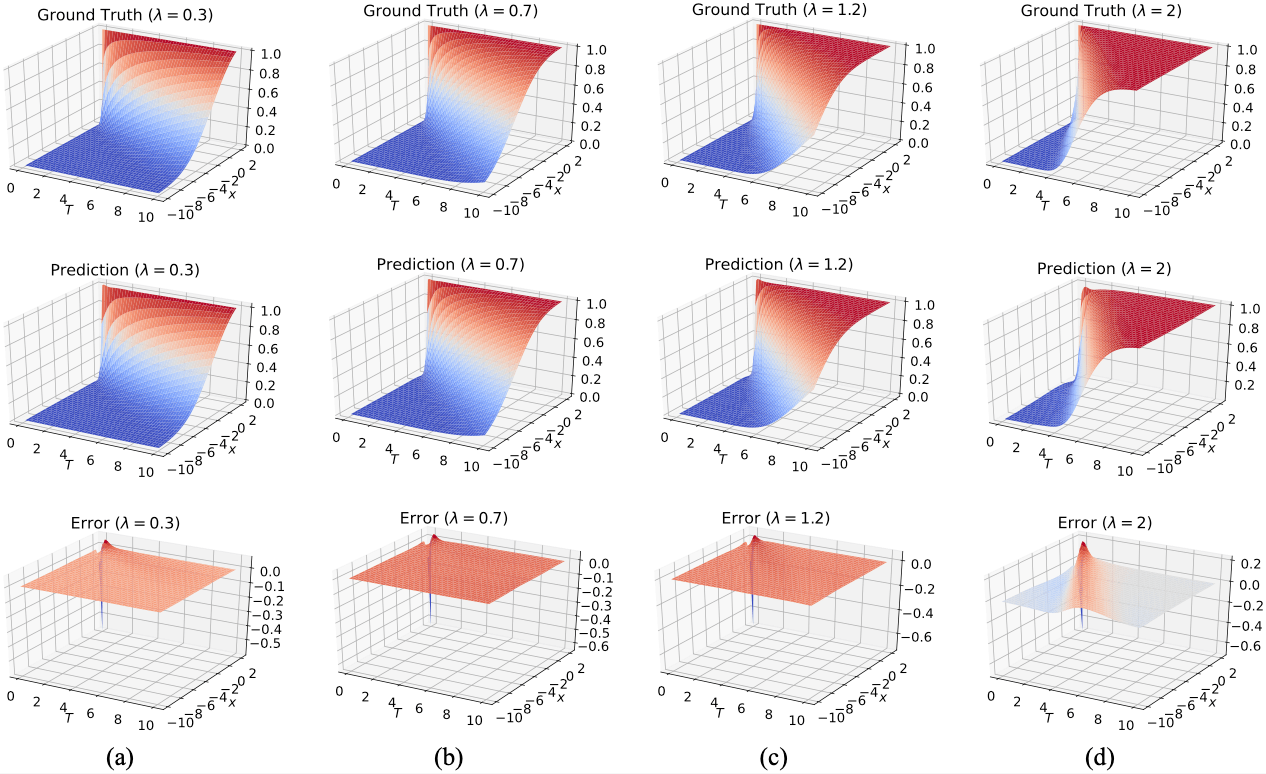}
    \caption{Risk probability prediction results on systems with unseen parameters $\lambda_{\text{test}} = [0.3, 0.7, 1.2, 2]$.}
    \label{fig:varying-new}
\end{figure}

\subsection{Estimating the gradient of risk probability}
For the gradient estimation task in section~\ref{sec:gradient}, we presented that PIPE is able to predict the risk probability gradients accurately by taking finite difference on the predicted risk probabilities. This result shows that PIPE can be used for first- and higher-order methods for safe control, by providing accurate gradient estimations in a real-time fashion. Similar to the generalization task, here we report the gradient prediction errors with different network architectures in PIPE, to examine the effect of network architectures on the gradient estimation performance. Table~\ref{tb:gradient layer number} and Table~\ref{tb:gradient neuron number} show the averaged absolute error of gradient predictions for different layer numbers and neuron numbers per layer. We trained the neural networks for 10 times with random initialization to report the standard deviation. 
We can see that as the number of hidden layer increases, the gradient prediction error keeps dropping, and tends to saturate after 3 layers.  
With the increasing neuron numbers per layer, the gradient prediction error decreases in a graceful manner. Similar to the generalization task, larger networks with more hidden layers and more neurons per layer can give more accurate estimation of the gradient, but the computation time scales poorly compared to the accuracy gain. Based on these results, we suggest to use moderate numbers of layers and neurons per layer to acquire desirable gradient prediction with less computation time.

\section{Additional Experiment Results}
In this section, we provide additional experiment results on risk estimation of the inverted pendulum on a cart system, as well as safe control with risk estimation through the proposed PIPE framework.

\subsection{Risk estimation of inverted pendulum on a cart system}
We consider the inverted pendulum on a cart system, with dynamics given by
\begin{equation}
\label{eq:pendulum_dynamics}
    \frac{d\mathbf{x}}{dt} = f(\mathbf{x}) + g(\mathbf{x}) u + \sigma \ \Tilde{I} \ dW_t,
\end{equation}
where $\mathbf{x} = [x, \dot x, \theta, \dot \theta]^\top$ is the state of the system and $u \in \mathbb{R}$ is the control, with $x$ and $\dot x$ being the position and velocity of the cart, and $\theta$ and $\dot \theta$ being the angle and angular velocity of the pendulum. We use $\mathbf{x}$ to denote the state of the system to distinguish from cart's position $x$. Then, we have
\begin{equation}
f(\mathbf{x}) = \left[\begin{array}{cccc}
1 & 0 & 0 & 0 \\
0 & m+M & 0 & m l \cos \theta \\
0 & 0 & 1 & 0 \\
0 & m l \cos \theta & 0 & m l^2 
\end{array}\right]^{-1} \left[\begin{array}{c}
\dot{x} \\
m l \dot{\theta}^2 \sin \theta-b_x \dot{x} \\
\dot{\theta} \\
m g l \sin \theta-b_\theta l \dot{\theta}
\end{array}\right],
\end{equation}
\begin{equation}
g(\mathbf{x}) = \left[\begin{array}{cccc}
1 & 0 & 0 & 0 \\
0 & m+M & 0 & m l \cos \theta \\
0 & 0 & 1 & 0 \\
0 & m l \cos \theta & 0 & m l^2 
\end{array}\right]^{-1} \left[\begin{array}{l}
0 \\
1 \\
0 \\
0
\end{array}\right],
\end{equation}
where $m$ and $M$ are the mass of the pendulum and the cart, $g$ is the acceleration of gravity, $l$ is the length of the pendulum, and $b_x$ and $b_\theta$ are constants. The last term in~\eqref{eq:pendulum_dynamics} is the additive noise, where $W_t$ is $4$-dimensional Wiener process with $W_0 = \mathbf{0}$, $\sigma$ is the magnitude of the noise, and
\begin{equation}
\Tilde{I} = \left[\begin{array}{cccc}
0 & 0 & 0 & 0 \\
0 & 1 & 0 & 0 \\
0 & 0 & 0 & 0 \\
0 & 0 & 0 & 1
\end{array}\right].
\end{equation}
Fig.~\ref{fig:cartpend} visualizes the system.

The safe set is defined in~\eqref{eq:safe_region} with barrier function $\phi(\mathbf{x}) = 1-(\frac{\mathbf{x}_3}{\pi/3})^2 = 1-(\frac{\theta}{\pi/3})^2$. Essentially the system is safe when the angle of the pendulum is within $[-\pi/3, \pi/3]$.
\begin{figure}
\includegraphics[width=6cm]{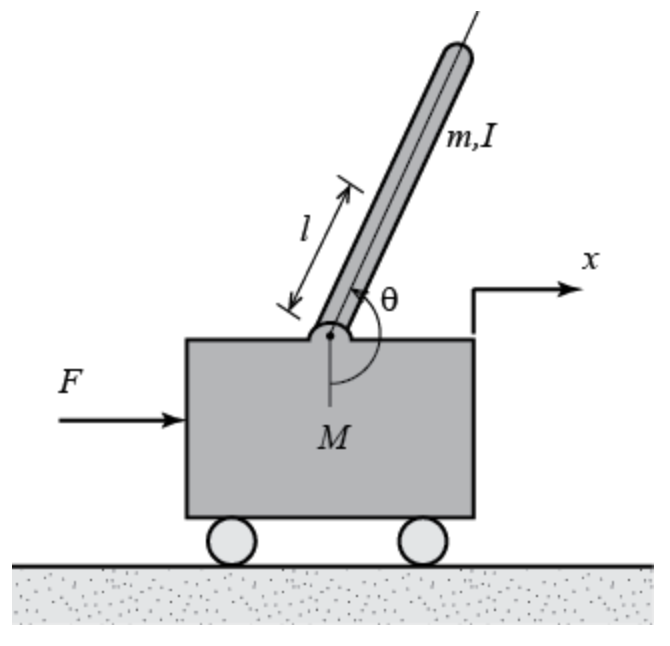}
\centering
\caption{Inverted pendulum on a cart.}
\label{fig:cartpend}
\end{figure}
We consider spatial-temporal space $\Omega \times \tau = \left[[-10,10] \times [-\pi/3, \pi/3] \times [-\pi, \pi] \right] \times [0,1]$. We collect training and testing data on $\Omega \times \tau$ with grid size $N_{\Omega\text{-train}}=13$ and $N_{\tau\text{-train}} = 10$ for training and $N_{\Omega\text{-test}}=25$ and $N_{\tau\text{-test}} = 10$ for testing. The nominal controller we choose is a proportional controller $N(\mathbf{x}) = -K \mathbf{x}$ with $K = [0, -0.9148, -22.1636, -14.3992]^\top$.
The sample number for MC simulation is set to be $N = 1000$ for both training and testing.
Table~\ref{tb:pendulum_parameters} lists the parameters used in the simulation.

We train PIPE with the same configuration listed in section~\ref{sec:experiments}. 
According to Theorem~\ref{thm:InvariantProbability_MainTheorem1}, the PDE that characterizes the safety probability of the pendulum system is
\begin{equation}
\label{eq:pendulum_pde}
\begin{aligned}
    \frac{\partial F}{\partial T}(\mathbf{x}, T) 
    & = \left(f(\mathbf{x}) - g(\mathbf{x}) K \mathbf{x} \right) \frac{\partial F}{\partial \mathbf{x}}(\mathbf{x}, T) + \frac{1}{2} \sigma^2 \Tilde{I} \operatorname{tr}\left( \frac{\partial^{2} F}{\partial \mathbf{x}^{2}}(\mathbf{x}, T) \right),
\end{aligned}
\end{equation}
which is a high dimensional and highly nonlinear PDE that cannot be solved effectively using standard solvers. Here we can see the advantage of combining data with model and using a learning-based framework to estimate the safety probability.
Fig.~\ref{fig:zoomin}, Fig.~\ref{fig:pendulum_t03} and Fig.~\ref{fig:pendulum_t1} show the results of the PIPE predictions. We see that despite the rather high dimension and nonlinear dynamics of the pendulum system, PIPE is able to predict the safety probability of the system with high accuracy. Besides, since PIPE takes the model knowledge into training loss, the resulting safety probability prediction is smoother thus more reliable than pure MC estimations.

\begin{table*}
\centering
\begin{tabular}{|c|c|}
    \hline
    Parameters & Values \\
    \hline
    $M$ & $1$ \\
    $m$ & $0.1$ \\
    $g$ & $9.8$ \\
    $l$ & $0.5$ \\
    $b_x$ & $0.05$ \\
    $b_\theta$ & $0.1$ \\
    \hline
\end{tabular}
\caption{Parameters used in the inverted pendulum simulation.}
\label{tb:pendulum_parameters}
\end{table*}

\begin{figure}
\includegraphics[width=15cm]{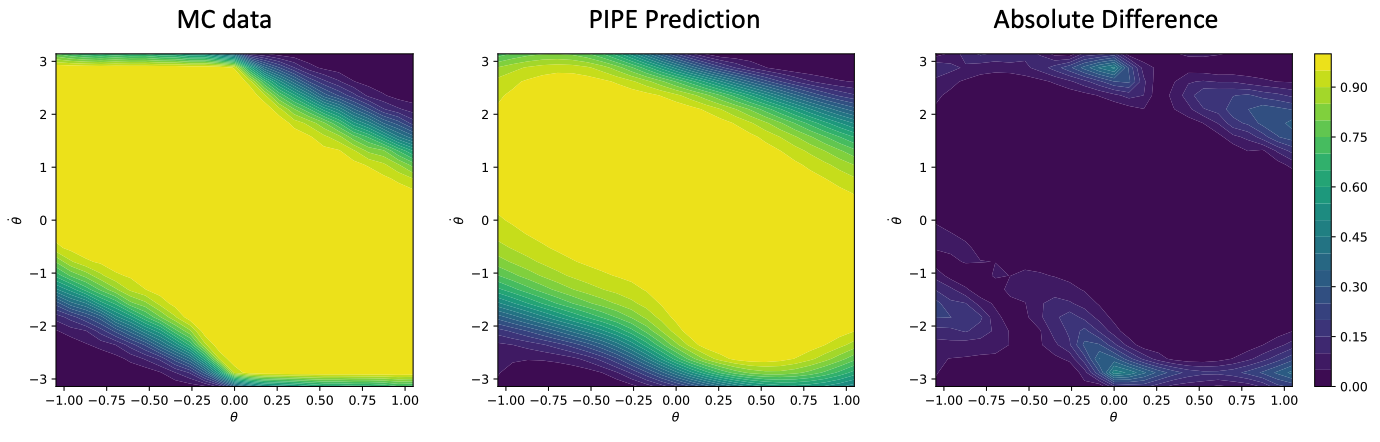}
\centering
\caption{Safety probability from MC simulation and PIPE prediction, and their absolute difference. Results on outlook time horizon $T = 0.6$, initial velocity $v = 0$. The x-axis shows the initial angle, and y-axis shows the initial angular velocity.}
\label{fig:zoomin}
\end{figure}

\begin{figure}
\includegraphics[width=16cm]{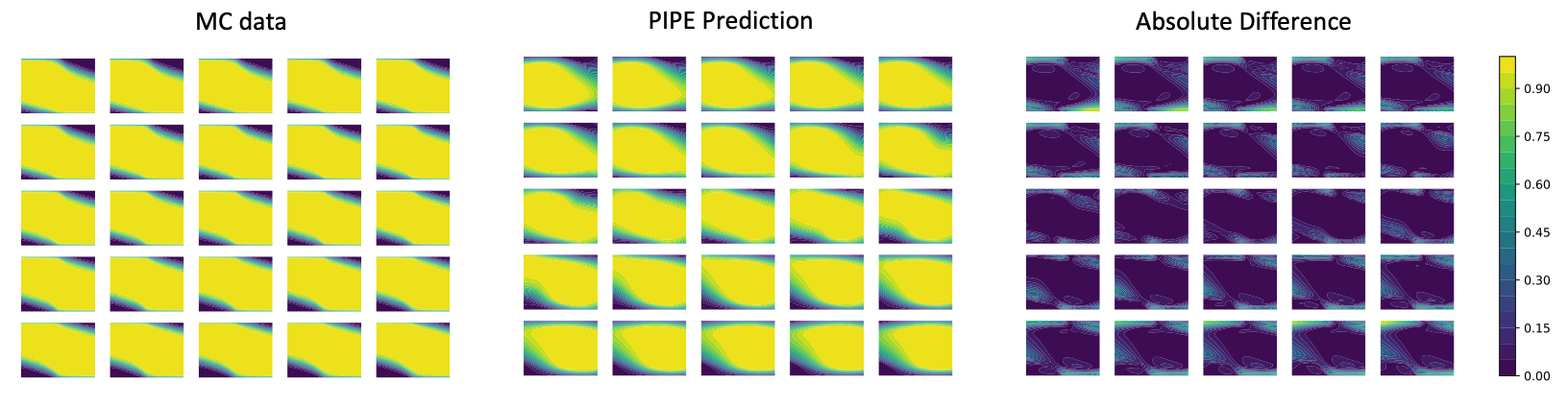}
\centering
\caption{Safety probability from MC simulation and PIPE prediction, and their absolute difference. Results on outlook time horizon $T = 0.3$. The 5x5 plots show results on 25 different initial velocities uniformly sampled in $[-10, 10]$. The x-axis and y-axis (omitted) are the initial angle and the initial angular velocity as in Fig.~\ref{fig:zoomin}. One can see that the safety probability shift as the velocity changes, and the safety probability is symmetric with regard to the origin $v=0$ due to the symmetry of the dynamics.}
\label{fig:pendulum_t03}
\end{figure}

\begin{figure}
\includegraphics[width=16cm]{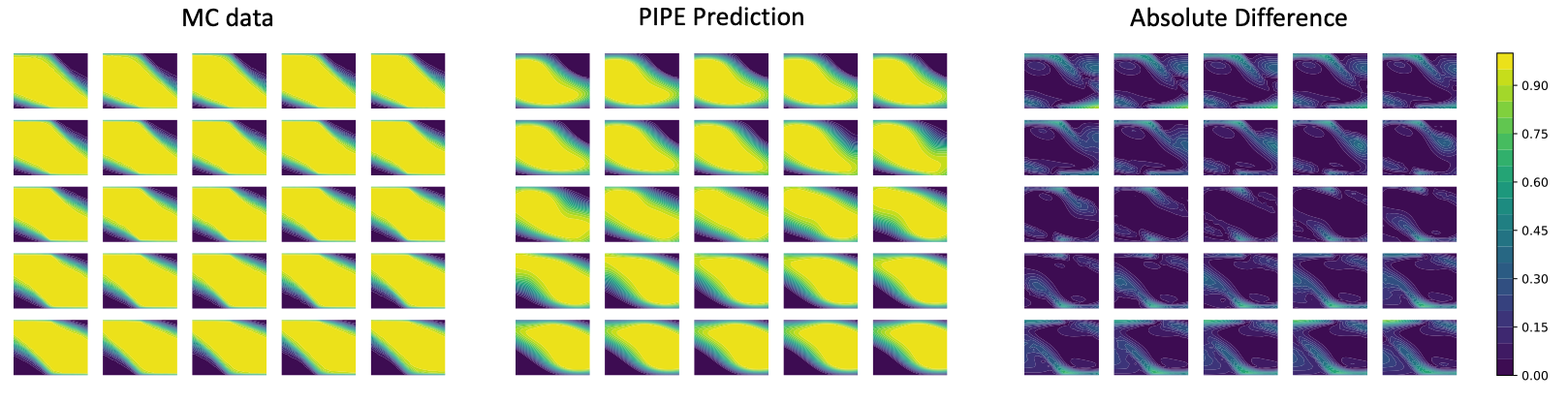}
\centering
\caption{Safety probability from MC simulation and PIPE prediction, and their absolute difference. Results on outlook time horizon $T = 1$. The 5x5 plots show results on 25 different initial velocities uniformly sampled in $[-10, 10]$.}
\label{fig:pendulum_t1}
\end{figure}

\subsection{Safe control with PIPE}
We consider the control affine system~\eqref{eq:x_trajectory} with $f(x) \equiv Ax = 2x$, $g(x) \equiv 1$, $\sigma(x) \equiv 2$. 
The safe set is defined as in~\eqref{eq:safe_region} and the barrier function is chosen to be $\phi(x) := x-1$. The safety specification is given as the forward invariance condition. 
The nominal controller is a proportional controller $N(x) = -K x$ with $K = 2.5$. The closed-loop system with this controller has an equilibrium at $x=0$ and tends to move into the unsafe set in the state space. We run simulations with $d t = 0.1$ for all controllers. The initial state is set to be $x_0 = 3$. 
For this system, the safety probability satisfies the following PDE
\begin{equation}
\label{eq:safety pde}
\begin{aligned}
    \frac{\partial F}{\partial T}(x, T) 
    & = -0.5x \frac{\partial F}{\partial x}(x, T) + 2 \operatorname{tr}\left( \frac{\partial^{2} F}{\partial x^{2}}(x, T) \right),
\end{aligned}
\end{equation}
with initial and boundary conditions
\begin{equation}
\begin{aligned}
\label{eq:icbc}
F(x,t) & = 0, \; x \leq 1, \\
F(x,0) & = \mathbbm{1}(x \geq 1).
\end{aligned}
\end{equation}

We first estimate the safety probability $F(x,T)$ of the system via PIPE. 
The training data $\bar{F}(x,T)$ is acquired by running MC on the system dynamics for given initial state $x_0$ and nominal control $N$. Specifically,
\begin{equation}
    \bar{F}(x,T) = \pr(\forall t \in [0,T], x_t \in \mathcal{C} \mid x_0 = x) = \frac{N_{\text{safe}}}{N},
\end{equation}
with $N=100$ being the number of sample trajectories. The training data is sampled on the state-time region $\Omega \times \tau = [1,10] \times [0,10]$ with grid size $dx = 0.5$ and $dt = 0.5$. 
We train PIPE with the same configuration as listed in section~\ref{sec:experiments}.
We test the estimated safety probability and its gradient on the full state space $\Omega \times \tau$ with $dx = 0.1$ and $dt = 0.1$. Fig.~\ref{fig:PIPE estimation} shows the results. It can be seen that the PIPE estimate is very close to the Monte Carlo samples, which validates the efficacy of the framework. Furthermore, the PIPE estimation has smoother gradients, due to the fact that it leverages model information along with the data.
\begin{figure}
    \centering
    \includegraphics[width=11cm]{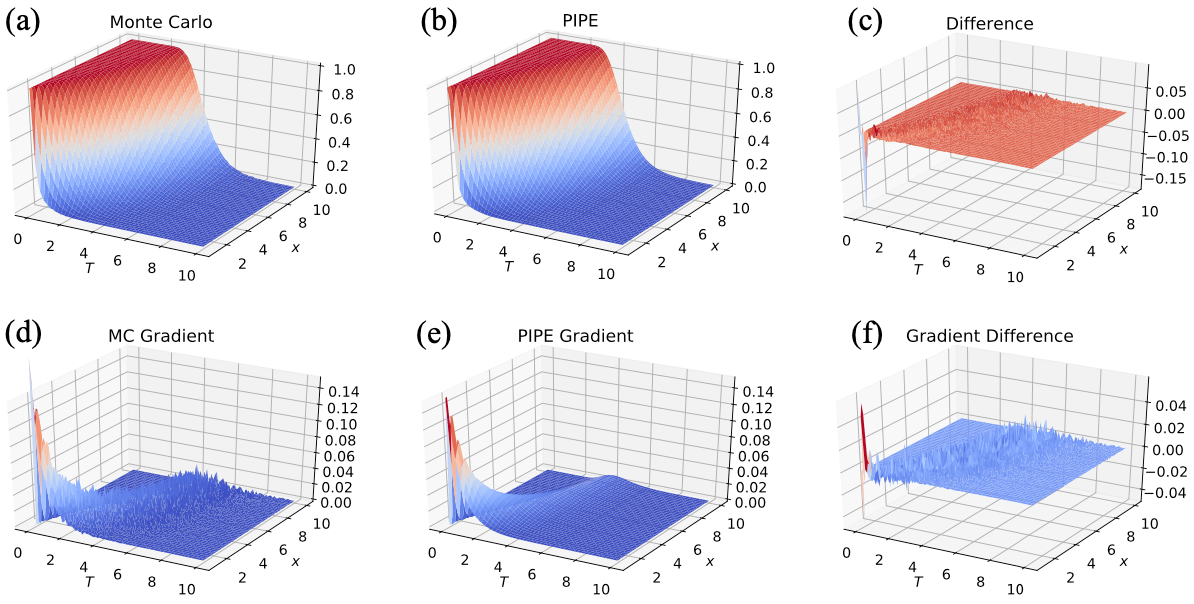}
    \caption{Safety probability and its gradient of Monte Carlo samples and PIPE estimation.}
    \label{fig:PIPE estimation}
\end{figure}

We then show the results of using such estimated safety probability for control. Fig.~\ref{fig:PIPE control} shows the results. For the baseline stochastic safe controllers for comparison, refer to~\cite{wang2022myopically} for details. We see that the PIPE framework can be applied to long-term safe control methods discussed in~\cite{wang2022myopically}. Together with the PIPE estimation, long-term safety can be ensured in stochastic control systems.
\begin{figure}
    \centering
    \includegraphics[width=12cm]{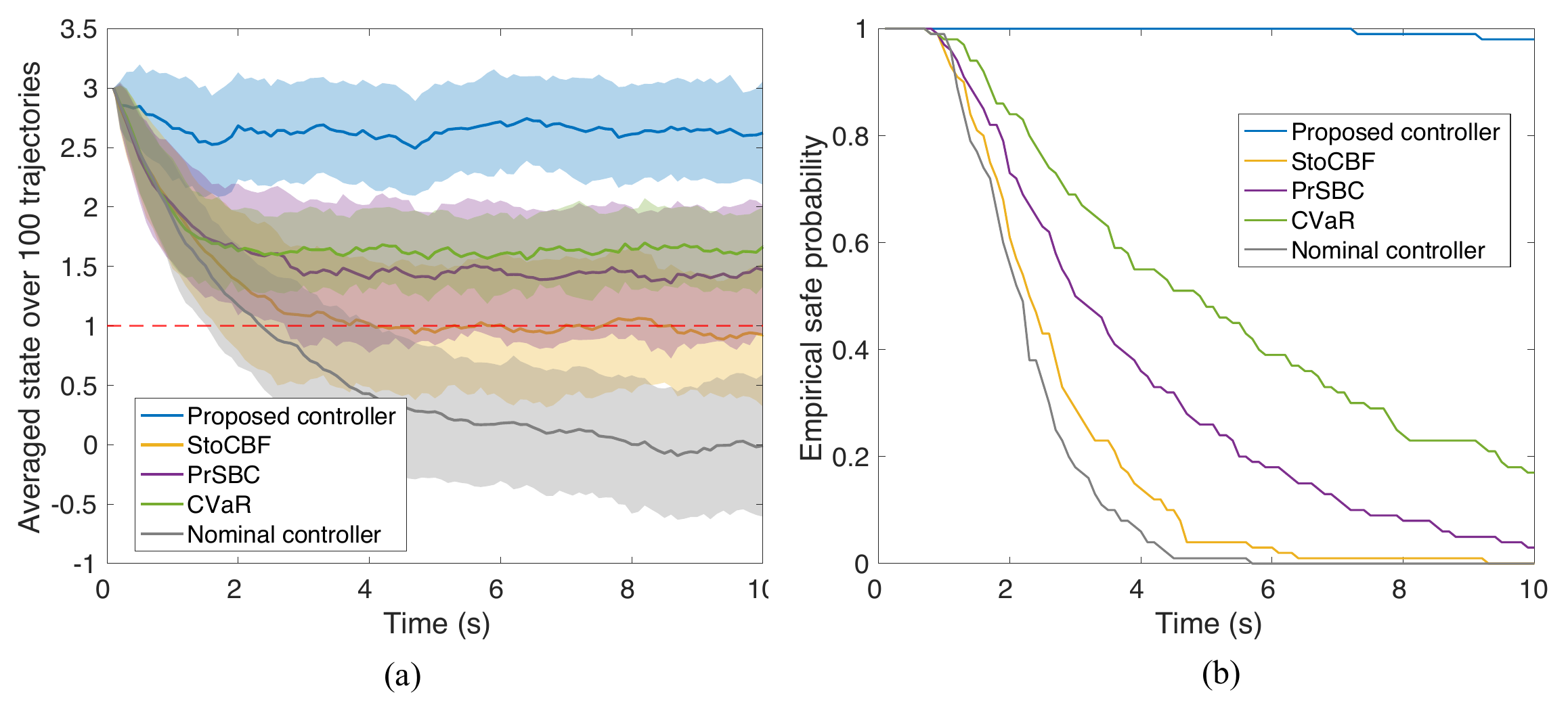}
    \caption{Safe control with probability estimation from PIPE compared with other baselines.}
    \label{fig:PIPE control}
\end{figure}

\end{document}